\newcommand{\E}{\mathrm{E}}
\newtheorem{lemma}{Lemma}
\newtheorem{proposition}{Proposition}
\newtheorem{theorem}{Theorem}
\newtheorem{proof}{Proof}
\title{Multivariate Gaussian Network Structure Learning}
\author{Xingqi Du \\ 
Department of Statistics, North Carolina State University, \\ 5109 SAS Hall, Campus Box 8203, Raleigh, North Carolina 27695, USA \\ {xdu8@ncsu.edu} \\ \\ 
 Subhashis Ghosal \\
Department of Statistics, North Carolina State University, \\ 5109 SAS Hall, Campus Box 8203, Raleigh, North Carolina 27695, USA \\ {sghosal@ncsu.edu} }
\begin{document}

\maketitle

\section{Abstract}
We consider a graphical model where a multivariate normal vector is associated with each node of the underlying graph and estimate the graphical structure. We minimize a loss function obtained by regressing the vector at each node on those at the remaining ones under a group penalty. We show that the proposed estimator can be computed by a fast convex optimization algorithm. We show that as the sample size increases, the estimated regression coefficients and the correct graphical structure are correctly estimated with probability tending to one. By extensive simulations, we show the superiority of the proposed method over comparable procedures. We apply the technique on two real datasets. The first one is to identify gene and protein networks showing up in cancer cell lines, and the second one is to reveal the connections among different industries in the US.

\section{Introduction}
\label{sec:introduction}

Finding structural relations in a network of random variables $(X_i: i\in V)$ is a problem of significant interest in  modern statistics. The intrinsic dependence between variables in a network is appropriately described by a graphical model, where two nodes $i,j\in V$ are connected by an edge if and only if the two corresponding variables $X_i$ and $X_j$ are conditionally dependent given all other variables. If the joint distribution of all variables is multivariate normal with precision matrix $\Omega=(\!(\omega_{ij})\!)$, the conditional independence between the variable located at node $i$ and that located at node $j$ is equivalent of having zero at the $(i,j)$th entry of $\Omega$. In a relatively large network of variables, generally conditional independence is abundant, meaning that in the corresponding graph edges are sparsely present. Thus in a Gaussian graphical model, the structural relation can be learned from a sparse estimate of $\Omega$, which can be naturally obtained by regularization method with a lasso-type penalty. Friedman et al. \cite{glasso} and Banerjee et al.  \cite{Banerjee} proposed the graphical lasso (\texttt{glasso}) estimator by minimizing the sum of the negative log-likelihood and the $\ell_1$-norm of $\Omega$, and its convergence property was studied by Rothman et al. \cite{Rothman}. A closely related method was proposed by Yuan \& Lin \cite{Yuanlin}. An alternative to the graphical lasso is an approach based on regression of each variable on others, since $\omega_{ij}$ is zero if and only if the regression coefficient $\beta_{ij}$ of $X_j$ in regressing $X_i$ on other variables is zero. Equivalently this can be described as using a pseudo-likelihood obtained by multiplying one-dimensional conditional densities of $X_i$ given $(X_j, j\ne i)$ for all $i\in V$ instead of using the actual likelihood obtained from joint normality of $(X_i, i\in V)$. The approach is better scalable with dimension since the optimization problem is split into several optimization problems in lower dimensions. The approach was pioneered by Meinshausen \& B{\"u}hlmann \cite{NSlasso}, who imposed a lasso-type penalty on each regression problem to obtain sparse estimates of the regression coefficients, and showed that the correct edges are selected with probability tending to one. However, a major drawback of their approach is that the estimator of $\beta_{ij}$ and that of $\beta_{ji}$ may not be simultaneously zero (or non-zero), and hence may lead to logical inconsistency while selecting edges based on the estimated values. Peng et al. \cite{space} proposed the Sparse PArtial Correlation Estimation (\texttt{space}) by taking symmetry of the precision matrix into account. The method is shown to lead to convergence and correct edge selection with high probability, but it may be computationally challenging.  A weighted version of \texttt{space} was considered by Khare et al. \cite{concord}, who showed that a specific choice of weights guarantees convergence of the iterative algorithm due to the convexity of the objective funtion in its arguments. Khare et al. \cite{concord} named their estimator the CONvex CORrelation selection methoD (\texttt{concord}), and proved that the estimator inherits the theoretical convergence properties of \texttt{space}. By extensive simulation and numerical illustrations, they showed that \texttt{concord} has good accuracy for reasonable sample sizes and can be computed very efficiently. 

However, in many situations, such as if multiple characteristics are measured, the variables $X_i$ at different nodes $i\in V$ may be multivariate. The methods described above apply only in the context when all variables are univariate. Even if the above methods are applied by treating each component of these variables  as separate one-dimensional variables, ignoring their group structure may be undesirable, since all component variables refer to the same subject. For example, we may be interested in the connections among different industries in the US, and may like to see if the GDP of one industry has some effect on that of other industries. The data is available for 8 regions, and we want to take regions into consideration, since significant difference in relations may exist because of regional characteristics, which are not possible to capture using only national data. It seems that the only paper which addresses multi-dimensional variables in a graphical model context is Kolar et al.  \cite{multi-attribute}, who pursued a likelihood based approach.
In this article, we propose a method based on a pseudo-likelihood obtained from multivariate regression on other variables. We formulate a multivariate analog of \texttt{concord}, to be called \texttt{mconcord}, because of the computational advantages of \texttt{concord} in univariate situations. Our regression based approach appears to be more scalable than the likelihood based approach of Kolar et al.  \cite{multi-attribute}. Moreover, we provide theoretical justification by studying large sample convergence properties of our proposed method, while such properties have not been established for the procedure introduced by Kolar et al. \cite{multi-attribute}.

The paper is organized as follows. Section~\ref{sec:method}  introduces the \texttt{mconcord} method and describes its computational algorithm. Asymptotic properties of \texttt{mconcord} are presented in Section~\ref{sec:asymptotics}. Section~\ref{sec:simulation} illustrates the performance of \texttt{mconcord}, compared with other methods mentioned above. In Section~\ref{sec:application}, the proposed method is applied to two real data sets on gene/protein profiles and GDP respectively. Proofs are presented in Section~\ref{sec:proofs} and in the appendix.

\section{Method description}
\label{sec:method}

\subsection{Model and estimation procedure}

Consider a graph with $p$ nodes, where at the $i$th node there is an associated  $K_i$-dimensional random variable ${Y}_i=(Y_{i1},\dots,Y_{iK_i})^T$, $i=1,\ldots,p$. Let ${Y}=({Y}_1^T,\ldots,{Y}_p^T)^T$. Assume that ${Y}$ has multivariate normal distribution with zero mean and covariance matrix ${\Sigma}=(\!(\sigma_{ijkl})\!)$, where $\sigma_{ijkl}=\mathrm{cov}(Y_{ik},Y_{jl})$, $k=1,\ldots,K_i$, $l=1,\ldots,K_j$, $i,j=1,\ldots,p$. Let the precision matrix ${\Sigma}^{-1}$ be denoted by  ${\Omega}=(\!(\omega_{ijkl})\!)$, which can also be written as a block-matrix $(\!({\Omega}_{ij})\!)$. The primary interest is in the graph which describes the conditional dependence (or independence) between ${Y}_i$ and ${Y}_j$ given the remaining variables. We are typically interested in the situation where $p$ is relatively large and the graph is sparse, that is, most pairs ${Y}_i$ and ${Y}_j$, $i\ne j$, $i,j=1,\ldots,p$, are conditionally independent given all other variables. When $Y_i$ and $Y_j$ are conditionally independent given other variables, there will be no edge connecting $i$ and $j$ in the underlying graph; otherwise there will be an edge. Under the assumed multivariate normality of ${Y}$, it follows that there is an edge between $i$ and $j$ if and only if ${\Omega}_{ij}$ is a non-zero matrix. Therefore the problem of identifying the underlying graphical structure reduces to estimating the matrix ${\Omega}$ under the sparsity constraint that most off-diagonal blocks ${\Omega}_{ij}$ in the grand precision matrix ${\Omega}$ are zero. 

Suppose that we observe $n$ independent and identically distributed (i.i.d.) samples from the graphical model, which are collectively denoted by $\bm{Y}$, while $\bm{Y}_i$ stands for the sample of $n$ many $K_i$-variate observations at node $i$ and $\bm{Y}_{ik}$ stands for the vector of observations of the $k$th component at node $i$, $k=1,\ldots,K_i$, $i=1,\ldots,p$. Following the estimation strategies used in univariate Gaussian graphical models, we may propose a sparse estimator for ${\Omega}$ by minimizing a loss function obtained from the conditional densities of ${Y}_i$ given ${Y}_j$, $j\ne i$, for each $i$ and a penalty term. However, since sparsity refers to off-diagonal blocks rather than individual elements, the lasso-type penalty used in univariate methods like \texttt{space} or \texttt{concord} should be replaced by a group-lasso type penalty, involving the sum of the Frobenius-norms of each off-diagonal block ${\Omega}_{ij}$. A multivariate analog of the loss used in a weighted version of \texttt{space} is given by 
\begin{equation}
L_n(\omega,\sigma,\bm{Y})=\frac{1}{2}\sum_{i=1}^p\sum_{k=1}^{K_i}\Big(-\log\sigma^{ik}+\frac{w_{ik}}{n}\big\lVert\bm{Y}_{ik}+\sum_{j\neq i}\sum_{l=1}^{K_j}\frac{\omega_{ijkl}}{\sigma^{ik}}\bm{Y}_{jl}\big\rVert_2^2\Big),
\end{equation}
where $\sigma^{ik}=\omega_{iikk}$, $\bm{w}=(w_{11},\dots,w_{pK_p})$ are nonnegative weights and $\omega_{ijkl}=\omega_{jilk}$ due to the symmetry of precision matrix. Writing the quadratic term in the above expression as 
$$
w_{ik}\big\lVert\bm{Y}_{ik}+\sum_{j\neq i}\sum_{l=1}^{K_j}\frac{\omega_{ijkl}}{\sigma^{ik}}\bm{Y}_{jl}\big\rVert_2^2  = \frac{w_{ik}}{(\sigma^{ik})^2}\big\lVert\sigma^{ik}\bm{Y}_{ik}+\sum_{j\neq i}\sum_{l=1}^{K_j}{\omega_{ijkl}}\bm{Y}_{jl}\big\rVert_2^2,
$$
and, as in \texttt{concord} choosing $w_{ik}=(\sigma^{ik})^2$ to make the optimization problem convex in the arguments, we can write the quadratic term in the loss function as $\lVert\sigma^{ik}\bm{Y}_{ik}+\sum_{j\neq i}\sum_{l=1}^{K_j}\omega_{ijkl}\bm{Y}_{jl}\rVert_2^2$. 
Applying the group penalty we finally arrive at the objective function 
\begin{equation}
\label{eq:objective function}
 \frac{1}{2}\sum_{i=1}^p\sum_{k=1}^{K_i}\Big(-\log\sigma^{ii}+\frac{1}{n}\big\lVert\sigma^{ik}\bm{Y}_{ik}+\sum_{j\neq i}\sum_{l=1}^{K_j}\omega_{ijkl}\bm{Y}_{jl}\big\rVert_2^2\Big)
 +\lambda\sum_{i<j}\Big(\sum_{k=1}^{K_i}\sum_{l=1}^{K_j}\omega_{ijkl}^2\Big)^{1/2}.
\end{equation}

\subsection{Algorithm}

To obtain a minimizer of \eqref{eq:objective function}, we periodically minimize it with respect to the arguments of ${\Omega}_{ij}$, $i\ne j$, $i,j=1,\ldots,p$. 
For each fixed $(i,j)$, $i\ne j$, suppressing the terms not involving any element of ${\Omega}_{ij}$, we may write the objective function as 
$$
\frac{1}{2n}\Big(\sum_{k=1}^{K_i}\lVert\sigma^{ik}\bm{Y}_{ik}+\sum_{j'\neq i}\sum_{l=1}^{K_{j'}}\omega_{ij'kl}\bm{Y}_{j'l}\rVert_2^2\\
+\sum_{l=1}^{K_j}\lVert\sigma^{jl}\bm{Y}_{jl}+\sum_{i'\neq j}\sum_{k=1}^{K_{i'}}\omega_{i'jlk}\bm{Y}_{ik}\rVert_2^2\Big)+\lambda\lVert\omega_{ij}\rVert_2, 
$$ 
where ${\omega}_{ij}=\mathrm{vec}({\Omega}_{ij})$. 
Without loss of generality, we assume $i<j$ and rewrite the expression as
\begin{eqnarray*}
\lefteqn
 {\frac{1}{2n}\Big(\sum_{k=1}^{K_i}\lVert\sigma^{ik}\bm{Y}_{ik}+\bm{B}_{1jk}\omega_{ij}+\sum_{j'> i,j'\neq j}\bm{B}_{1j'k}\omega_{ij'}+\sum_{j'< i}\bm{B}_{2j'k}\omega_{ij'}\rVert_2^2}\\
&&+\sum_{l=1}^{K_j}\lVert\sigma^{jl}\bm{Y}_{jl}+ \bm{B}_{2il}\omega_{ij}+ \sum_{i'>j}\bm{B}_{1i'l}\omega_{i'j}+\sum_{i'<j,i'\neq i}\bm{B}_{2i'l}\omega_{i'j}\rVert_2^2\Big)
+\lambda\lVert\omega_{ij}\rVert_2,
\end{eqnarray*}
 where $\bm{B}_{1jk}$ and $\bm{B}_{2il}$ are $n\times K_iK_j$ matrices specified as follows: $((k-1)K_j+1,\dots,kK_j)$th columns of $\bm{B}_{1jk}$ are $\bm{Y}_{j}$, the $(l,K_j+l,\dots,(K_i - 1)K_j+l)$th columns of $\bm{B}_{2il}$ are $\bm{Y}_{i}$, and other columns are zero. This leads to the following algorithm.

\textit{Algorithm: }

\textit{Initialization:} For $k=1,\ldots,K_i$, and $i=1,\ldots,p$, set the initial values $\hat{\sigma}^{ik}=1/\widehat{\text{var}}({Y}_{ik})$ and $\hat{\omega}_{ij}={0}$. 

\textit{Iteration:} 
For all $1\leq i\leq p$ and $1\leq k\leq K_i$, repeat the following steps until certain convergence criterion is satisfied: 
\begin{description} 
\item \textit{Step 1: }Calculate the vectors of errors for $\omega_{ij}$:
$$
\begin{aligned}
& \bm{r}_{ijk} = \hat{\sigma}^{ik}\bm{Y}_{ik}+\sum_{j'< i}\bm{B}_{2j'k}\hat{\omega}_{j'i}+\sum_{j'> i,j'\neq j}\bm{B}_{1j'k}\hat{\omega}_{ij'},\\
& \bm{r}_{jil} = \hat{\sigma}^{jl}\bm{Y}_{jl}+\sum_{i'> j}\bm{B}_{1i'l}\hat{\omega}_{ji'}+\sum_{i'< j,i'\neq i}\bm{B}_{2i'l}\hat{\omega}_{i'j}.
\end{aligned}
$$
\item \textit{Step 2: }Regress the errors on the specified variables to obtain 
\begin{eqnarray*}
	\hat{\omega}_{ij} &=& \arg\min\Big[\frac{1}{2n}\Big\{\omega_{ij}^T\big(\sum_{k=1}^{K_i}\bm{B}_{1jk}^T\bm{B}_{1jk}+\sum_{l=1}^{K_j}\bm{B}_{2il}^T\bm{B}_{2il}\big)\omega_{ij}\\ && \quad +2\big(\sum_{k=1}^{K_i}\bm{r}_{ijk}^T\bm{B}_{1jk}+\sum_{l=1}^{K_j}\bm{r}_{jil}^T\bm{B}_{2il}\big)\omega_{ij}\Big\}+\lambda\lVert\omega_{ij}\rVert_2\Big], 
\end{eqnarray*}
by the proximal gradient algorithm described as follows: 

	Given $\omega_{ij}^{(t)}$, $\bm{r}_{ijk}^{(t+1)}$ and $\bm{r}_{jil}^{(t+1)}$, compute 
	\begin{eqnarray*} 
	f(\omega_{ij}^{(t)})&=& \frac{1}{2n}\Big[\omega_{ij}^{(t)T}\big(\sum_{k=1}^{K_i}\bm{B}_{1jk}^T\bm{B}_{1jk}+\sum_{l=1}^{K_j}\bm{B}_{2il}^T\bm{B}_{2il}\big)\omega_{ij}^{(t)}\\
	&&\quad+2\big(\sum_{k=1}^{K_i}\bm r_{ijk}^{(t+1)T}\bm{B}_{1jk}+\sum_{l=1}^{K_j}\bm r_{jil}^{(t+1)T}\bm{B}_{2il}\big)\omega_{ij}^{(t)}\Big]\\
	{g} &=& \frac{1}{n}\bigg(\sum_{k=1}^{K_i}\Big(\bm{B}_{jk}^T\bm{B}_{jk}\omega_{ij}^{(t)}+\bm r_{ijk}^{(t+1)T}\bm{B}_{jk}\Big)+\sum_{l=1}^{K_j}\Big(\bm{B}_{il}^T\bm{B}_{il}\omega_{ij}^{(t)}+\bm r_{jil}^{(t+1)T}\bm{B}_{il}\Big)\bigg)
	\end{eqnarray*}
	Set $s\leftarrow 1$ and repeat 
		\begin{itemize} 
		\item ${z}_{ij} \leftarrow \omega_{ij}^{(t)} -  s{g}$, 
		\item if $\lVert \bm{z}_{ij}\rVert_2\geq\lambda^2 s^2$, set 
		 $ \omega_{ij}^{(t+1)}\leftarrow \Big(1-\frac{\lambda s}{\lVert {z}_{ij}\rVert_2}\Big){z}_{ij}$; else 
		set $\omega_{ij}^{(t+1)}\leftarrow{0}$, 
		\item 
		replace $s$ by $s/2$, 
		\end{itemize}
		until $f(\omega_{ij}^{(t)})\leq f(\omega_{ij}^{(t+1)})+{g}^T(\omega_{ij}^{(t+1)}-\omega_{ij}^{(t)})+\frac{1}{2s}\lVert\omega_{ij}^{(t+1)}-\omega_{ij}^{(t)}\rVert_2^2$.

\item \textit{Step 3: }
For $1\leq i\leq p$ and $1\leq k\leq K_i$, update $\hat{\sigma}^{ik}$ to 
$$ \frac{-\bm{Y}_{ik}^T(\sum\limits_{j< i}\bm{B}_{2jk}{\hat{\omega}}_{ij}+\sum\limits_{j> i}\bm{B}_{1jk}{\hat{\omega}}_{ij})+\sqrt{\Big(\bm{Y}_{ik}^T(\sum\limits_{j< i}\bm{B}_{2jk}{\hat{\omega}}_{ij}+\sum\limits_{j> i}\bm{B}_{1jk}{\hat{\omega}}_{ij})\Big)^2+2n\bm{Y}_{ik}^T\bm{Y}_{ik}}}{2\bm{Y}_{ik}^T\bm{Y}_{ik}}.
$$
\end{description}

If the total number of variables at all nodes $\sum_{i=1}^p K_i$ is less than or equal to the available sample size $n$, then the objective function is strictly convex, there is a unique solution to the minimization problem \eqref{eq:objective function} and the iterative scheme converges to the global minimum (Tseng \cite{Tseng}). However, if $\sum_{i=1}^p K_i>n$, the objective function need not be strictly convex, and hence a unique minimum is not guaranteed. However, as in univariate \texttt{concord}, the algorithm converges to a global minimum. This follows by arguing as in the proof of Theorem~1 of Kolar et al. \cite{concord} after observing that the objective function of \texttt{mconcord} differs from that of \texttt{concord} only in two aspects --- the loss function does not involve off-diagonal entries of diagonal blocks, and the penalty function has grouping, neither of which affect the structure of the \texttt{concord} described by Equation (33) of Kolar et al. \cite{concord}.

\section{Large Sample Properties}
\label{sec:asymptotics}

In this section, we study large sample properties of the proposed \texttt{mconcord} method. As in the univariate \texttt{concord} method, we consider the estimator obtained from the minimization problem 
$$\frac{1}{2} \sum_{i=1}^p\sum_{k=1}^{K_i}\Big(-\log\hat{\sigma}^{ik}+\frac{w_{ik}}{n}\lVert\bm{Y}_{ik}+\lambda\sum_{j\neq i}\sum_{l=1}^{K_j}\frac{\omega_{ijkl}}{\hat{\sigma}^{ik}}\bm{Y}_{jl}\rVert_2^2\Big)+\lambda_n\sum_{i<j}\Big(\sum_{k=1}^{K_i}\sum_{l=1}^{K_j}\omega_{ijkl}^2\Big)^{1/2}$$
with a general weight $w_{ik}$ and a suitably consistent estimator $\hat{\sigma}^{ik}$ of $\sigma^{ik}$ plugged in for all $k=1,\ldots,K_i$, $i=1,\ldots,p$, and for some suitable sequence $\lambda_n$. Existence of such an estimator is also shown. 

Introduce the notation 
\begin{equation} 
\label{asy1}
	L({\omega},{\sigma},{Y})=\frac{1}{2}\sum_{i=1}^p\sum_{k=1}^{K_i}w_{ik}\Big(Y_{ik}+\sum_{j\neq i}\sum_{l=1}^{K_j}\frac{\omega_{ijkl}}{\sigma^{ik}}Y_{jl}\Big)^2,
\end{equation}
where ${\sigma}=(\sigma^{ik}:k=1,\ldots,K_i, i=1,\ldots,p)$ and $\omega=(\omega_{ijkl}:k=1,\ldots,K_i$, $l=1,\ldots,K_j$, $i,j=1,\ldots, p, i\ne j)$. Let $\bar{{\omega}}$ and $\bar{{\sigma}}$ respectively stand for true values of ${\Omega}$ and ${\sigma}$ respectively. All probability and expectation statements made below are understood under the distributions obtained from the true parameter values. Let $\bar{L}_{ijkl}'({\omega},{\sigma},{Y})=\E \Big(\frac{\partial}{\partial \omega_{ijkl}}{L}({\omega},{\sigma},{Y})\big|_{\omega=\bar{\omega},\sigma=\bar{\sigma}}\big)$ and $\bar{L}''_{ijkl,i'j'k'l'}(\bar{\omega},\bar{\sigma})=\text{E}\bigg(\frac{\partial^2}{\partial\omega_{ijkl}\partial\omega_{i'j'k'l'}}L(\omega,\sigma,Y)|_{\omega=\bar{\omega},\sigma=\bar{\sigma}}\bigg)$ be the expected first and second order partial derivatives of ${L}$ at the true parameter respectively. Also let $\bar{L}_{ijkl,S}''$ stand for the row vector $(\bar{L}_{ijkl,i'j'k'l'}'': (i'j'k'l')\in S)$ and $\bar{L}_{S,S}''$ for the matrix $(\!(\bar{L}_{ijkl,i'j'k'l'}'': {ijkl,i'j'k'l'\in S} )\!)$, where $S\subset T :=\{(i,j,k,l):1\leq i\neq j\leq p,1\leq k\leq K_i,1\leq l\leq K_j\}$. 
Note that $\bar{L}''_{ijkl,i'j'k'l'}(\bar{\omega},\bar{\sigma})=\text{E}[Y_{jl}Y_{j'l'}+Y_{ik}Y_{i'lk}]=\sigma_{jl,j'l'}+\sigma_{ik,i'k'}$. 

Let $\mathcal{A}_0=\{(i,j):\exists k\in\{1,\dots,K_i\}, \exists l\in\{1,\dots,K_j\},\bar{\omega}_{ijkl}\neq 0\}$, and $q_n=|\mathcal{A}_0|$. We further define that $\mathcal{A}=\{(i,j,k,l):(i,j)\in\mathcal{A}_0,1\leq k\leq K_i,1\leq l\leq K_j\}$, and thus there are $\sum_{(i,j)\in\mathcal{A}_0}K_iK_j$ elements in $\mathcal{A}$. Let $K_{\max}=\max\{K_i: i=1,\ldots,p\}$. The following assumptions will be made throughout. 

\begin{itemize}
\item [(C0)] The weights satisfy $0<w_0\leq \min(w_{ik})\leq \max(w_{ik})\leq w_{\infty}<\infty$ and $K_{\max}$ and $p$ grow at most like a power of $n$.  

\item [(C1)] There exist constants $0<\Lambda_{\min}\le \Lambda_{\max}$ depending on the true parameter value such that the minimum and maximum eigenvalues of the true covariance $\bar{\bm\Sigma}$ satisfies  $0<\Lambda_{\min}\leq\lambda_{\min}(\bar{\bm\Sigma})\leq\lambda_{\max}(\bar{\bm\Sigma})\leq\Lambda_{\max}<\infty$. 

\item [(C2)] There exists a constant $\delta<1$ such that for all $(i,j,k,l)\not\in \mathcal{A}$, 
$
|\bar{L}''_{ijkl,\mathcal{A}}(\bar{\omega},\bar{{\sigma}})[\bar{L}''_{\mathcal{A},\mathcal{A}}(\bar{\omega},\bar{{\sigma}})]^{-1}{M}|\leq\delta,
$
where ${M}$ is a column-vector with elements ${\bar{\omega}_{ijkl}}/{\sqrt{\sum\limits_{k',l'}\bar{\omega}_{ijk'l'}^2}}$, $(i,j,k,l)\in \mathcal{A}$. 

\item [(C3)] There is an estimator $\hat{\sigma}^{ik}$ of ${\sigma}^{ik}$, $k=1,\ldots,K_i$ satisfying  ${\max}\{\lvert\hat{\sigma}^{ik}-\bar{\sigma}^{ik}\rvert : {1\leq i\leq p,1\leq k\leq K_i}\}\leq C_n\sqrt{({\log n})/{n}}$ for every $C_n\to\infty$ with probability tending to 1.
\end{itemize}

The following result concludes that Condition C3 holds if the total dimension is less than a fraction of the sample size. 

\begin{proposition}
\label{pro:sigma}
Suppose that $\sum_{i=1}^p K_i\le \beta n$ for some $0<\beta<1$. Let $\bm{e}_{ik}$ stand for the vector of regression residuals of $Y_{ik}$ on $\{Y_{il}: l\ne k\}$. Then the estimator   $\hat{\sigma}^{ik}=1/\hat{\sigma}_{ik,-ik}$, where
$\hat{\sigma}_{ik,-ik}=({n-\sum_{j\neq i}K_j})^{-1}\bm{e}_{ik}^T\bm{e}_{ik}$, satisfies Condition C3. 
\end{proposition}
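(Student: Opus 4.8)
The plan is to use the exact finite-sample law of a Gaussian regression residual sum of squares, turn the assertion into a uniform concentration statement for chi-squared variables, and finish with a union bound whose exponent is driven to $-\infty$ by $C_n\to\infty$. Throughout write $\tau_{ik}^2=1/\bar{\sigma}^{ik}$ for the conditional variance that $\hat{\sigma}_{ik,-ik}$ targets. First I would record two reductions. Since any conditional variance of a coordinate of a mean-zero Gaussian vector lies between $\lambda_{\min}(\bar{\bm\Sigma})$ and $\lambda_{\max}(\bar{\bm\Sigma})$, Condition C1 gives $\Lambda_{\min}\le\tau_{ik}^2\le\Lambda_{\max}$ uniformly in $(i,k)$. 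Because $\hat{\sigma}^{ik}=1/\hat{\sigma}_{ik,-ik}$ and $x\mapsto 1/x$ is Lipschitz with constant at most $2/\Lambda_{\min}^2$ on $[\Lambda_{\min}/2,\infty)$, it suffices to control $\max_{i,k}\lvert\hat{\sigma}_{ik,-ik}-\tau_{ik}^2\rvert$ on the event that every $\hat{\sigma}_{ik,-ik}\ge\Lambda_{\min}/2$. I would also assume without loss of generality that $C_n\sqrt{(\log n)/n}\le 1$, replacing $C_n$ by $\min\{C_n,\sqrt{n/\log n}\}$, which still diverges and only shrinks the event whose probability we bound.

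The key distributional fact is next. Conditioning on the matrix of regressors, joint normality gives $\bm{Y}_{ik}=(\text{regressors})\bar{\gamma}_{ik}+\bm{\epsilon}_{ik}$ with $\bm{\epsilon}_{ik}\sim N(0,\tau_{ik}^2 I_n)$, so the residual equals $\bm{e}_{ik}=(I-P_{ik})\bm{\epsilon}_{ik}$, where $P_{ik}$ is the orthogonal projection onto the regressor column space. Since $\sum_{j}K_j\le\beta n<n$ and $\bar{\bm\Sigma}\succ 0$, the regressor matrix has full column rank almost surely, so $I-P_{ik}$ has rank exactly $\nu:=n-\sum_{j\ne i}K_j$ and $\bm{e}_{ik}^T\bm{e}_{ik}/\tau_{ik}^2\sim\chi^2_\nu$ conditionally on the regressors. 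As this conditional law does not depend on the regressors it is also the unconditional law; in particular $\hat{\sigma}_{ik,-ik}=\bm{e}_{ik}^T\bm{e}_{ik}/\nu$ is exactly unbiased for $\tau_{ik}^2$. This exact matching of the denominator to the residual degrees of freedom is what prevents a bias term of order $K_{\max}/n$ from entering, a term that the assumption $\sum_j K_j\le\beta n$ alone would not control.

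For the concentration step, with $W\sim\chi^2_\nu$ the standard tail bound $\Pr(\lvert W/\nu-1\rvert>t)\le 2\exp(-\nu t^2/8)$ for $0<t\le 1$ applies. Taking $t$ of order $C_n\sqrt{(\log n)/n}/\Lambda_{\max}$ (at most $1$ after the reduction above) and using $\nu\ge(1-\beta)n$, each coordinate satisfies $\Pr(\lvert\hat{\sigma}_{ik,-ik}-\tau_{ik}^2\rvert>c\,C_n\sqrt{(\log n)/n})\le 2\exp(-\kappa C_n^2\log n)=2n^{-\kappa C_n^2}$ for positive constants $c,\kappa$ depending only on $\beta,\Lambda_{\min},\Lambda_{\max}$. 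There are $\sum_i K_i\le\beta n\le n$ coordinates, so a union bound gives $\Pr(\max_{i,k}\lvert\hat{\sigma}_{ik,-ik}-\tau_{ik}^2\rvert>c\,C_n\sqrt{(\log n)/n})\le 2n^{1-\kappa C_n^2}$, which tends to $0$ since $C_n\to\infty$ forces the exponent to diverge. On this event every $\hat{\sigma}_{ik,-ik}$ is within $o(1)$ of $\tau_{ik}^2\ge\Lambda_{\min}$, so the good event of the first reduction holds, and propagating the bound through the Lipschitz reciprocal map (absorbing $c$ into the constant) delivers Condition C3.

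The genuinely delicate points are bookkeeping rather than conceptual: establishing the exact $\chi^2_\nu$ law requires the almost-sure full rank of the Gaussian design, which in turn needs $\sum_j K_j<n$ (supplied by $\beta<1$), and the unbiasedness that keeps the error at the stated rate hinges on the denominator equalling the residual degrees of freedom. The concentration-plus-union-bound argument is then routine, the only subtlety being that the conclusion must hold for every, possibly very slowly, diverging $C_n$; this is exactly why the $\sqrt{(\log n)/n}$ scaling is needed, so that the per-coordinate exponent $\kappa C_n^2\log n$ beats the logarithm of the number of coordinates in the union bound.
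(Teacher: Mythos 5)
Your proposal is correct and rests on the same mechanism as the paper's proof: the residual sum of squares is an exact (scaled) chi-squared, it concentrates uniformly at rate $\sqrt{(\log n)/n}$, and two-sided bounds on the true variances (Lemma~\ref{lemma1}(iv) in the paper, your Condition C1 argument) let you pass the bound through the reciprocal map. The difference is in how the concentration step is packaged. The paper's proof is three sentences: it cites Peng et al.\ for unbiasedness and then invokes Lemma~\ref{app1} together with Lemma~\ref{lemma1}(iv), where Lemma~\ref{app1} is a maximal inequality for $\max_i\lvert n^{-1}\sum_j X_{ij}^2-\sigma_i^2\rvert$ over rows of i.i.d.\ Gaussians, proved via Orlicz-norm bounds from van der Vaart and Wellner. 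Since Lemma~\ref{app1} is stated for averages of i.i.d.\ squared Gaussians, applying it to the quadratic form $\bm{e}_{ik}^T\bm{e}_{ik}=\bm{\epsilon}_{ik}^T(I-P_{ik})\bm{\epsilon}_{ik}$ requires precisely the rotation/conditional-$\chi^2_{\nu}$ representation that you spell out; your write-up makes explicit a step the paper leaves implicit, and your elementary $\chi^2$ tail bound plus union bound over the $\sum_i K_i\le\beta n$ coordinates is an equally valid, more self-contained substitute for the maximal-inequality citation. One caveat, inherited from the paper rather than introduced by you: the proposition is internally inconsistent about the regressor set --- the statement regresses on $\{Y_{il}:l\ne k\}$, the paper's proof projects onto all columns except $(i,k)$, and the normalizing constant $n-\sum_{j\ne i}K_j$ matches regressing only on the variables at nodes $j\ne i$. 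Your rank computation (rank of $I-P_{ik}$ equal to $n-\sum_{j\ne i}K_j$) is consistent only with the last reading, while your identification $\tau_{ik}^2=1/\bar{\sigma}^{ik}$ of the residual variance is exact only under the second, for which the residual degrees of freedom are $n-\sum_j K_j+1$. The resulting relative bias is of order $K_{\max}/n$, which is harmless whenever $K_{\max}=o(\sqrt{n\log n})$ (true under the conditions of Theorems~\ref{restricted problem}--\ref{consistency}, where C3 is actually used), but it is not controlled by the hypothesis $\sum_i K_i\le\beta n$ alone. You correctly flag that the denominator must equal the residual degrees of freedom for the stated rate to hold; just be aware that under the paper's own definitions the two requirements you impose cannot hold simultaneously, and this is a defect of the statement, not of your argument.
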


We adapt the approach in Peng et al. \cite{space} to the multivariate Gaussian setting. The approach consists of first showing that if the estimator is restricted to the correct model, then it converges to the true parameter at a certain rate as the sample size increases to infinity. The next step consists of showing that with high probability no edge is falsely selected. These two conclusions combined yield the result. 

\begin{theorem} 
\label{restricted problem}
Let $K_{\max}^2q_n= o(\sqrt{n/\log n})$, $\lambda_n\sqrt{n/\log n}\rightarrow\infty$ and $K_{\max}\sqrt{q_n}\lambda_n=o(1)$ as $n\rightarrow\infty$. Then the following events hold with probability tending to $1$:
\begin{enumerate} 
\item [(i)] there exists a solution $\hat{\omega}_{\mathcal{A}}^{\lambda_n}=\hat{\omega}_{\mathcal{A}}^{\lambda_n}(\hat{{\sigma}})$ of the restricted problem
\begin{equation} 
\label{thm1}
	\arg\min_{\omega:{\omega}_{\mathcal{A}^c}=0}L_n(\omega,\hat{{\sigma}},\bm{Y})+\lambda_n\sum_{i<j}\lVert\omega_{ij}\rVert_2.
\end{equation}
\item [(ii)] (estimation consistency) for any sequence $C_n\rightarrow\infty$, any solution $\hat{\omega}_{\mathcal{A}}^{\lambda_n}$ of the restricted problem \eqref{thm1} satisfies $\lVert\hat{\omega}_{\mathcal{A}}^{\lambda_n}-\bar{\omega}_\mathcal{A}\rVert_2\leq C_nK_{\max}\sqrt{q_n}\lambda_n.$
\end{enumerate}
\end{theorem}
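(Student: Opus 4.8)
My plan is to reduce both parts of the theorem to a single high-probability inequality and then let convexity do the rest. Write the restricted objective as $Q(\omega_{\mathcal A}) = L_n(\omega,\hat{\sigma},\bm{Y})+\lambda_n\sum_{i<j}\lVert\omega_{ij}\rVert_2$, viewed as a function of the free coordinates $\omega_{\mathcal A}$ with $\omega_{\mathcal A^c}$ pinned at $0$. I would show that, with probability tending to one,
\[
\inf_{u:\,u_{\mathcal A^c}=0,\ \lVert u\rVert_2=r_n}\big\{Q(\bar{\omega}_{\mathcal A}+u)-Q(\bar{\omega}_{\mathcal A})\big\}>0,\qquad r_n=C_nK_{\max}\sqrt{q_n}\,\lambda_n .
\]
Since $Q$ is convex (the point of the \texttt{concord} weighting; restricting to $\omega_{\mathcal A^c}=0$ preserves convexity), a strict increase on the whole sphere of radius $r_n$ forces the minimum of $Q$ over the closed ball $\{\lVert u\rVert_2\le r_n\}$ to be attained in the interior, and an interior minimizer of a convex function is a global minimizer. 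This delivers existence of a solution of \eqref{thm1} (claim (i)) and places it within distance $r_n$ of $\bar{\omega}_{\mathcal A}$ (claim (ii)) in one stroke, so everything reduces to the boundary inequality.

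Because $L_n(\cdot,\hat{\sigma},\bm{Y})$ is quadratic in $\omega$ for fixed $\hat{\sigma}$, the expansion is exact:
\[
Q(\bar{\omega}_{\mathcal A}+u)-Q(\bar{\omega}_{\mathcal A})=\langle \nabla_{\mathcal A}L_n(\bar{\omega},\hat{\sigma}),u\rangle+\tfrac12\,u^T H_{\mathcal A\mathcal A}\,u+\lambda_n\sum_{i<j}\big(\lVert\bar{\omega}_{ij}+u_{ij}\rVert_2-\lVert\bar{\omega}_{ij}\rVert_2\big),
\]
with $H_{\mathcal A\mathcal A}$ the $\mathcal A$-block of the Hessian, and I would bound the three pieces separately. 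For the quadratic term, the identity $\bar{L}''_{ijkl,i'j'k'l'}=\sigma_{jl,j'l'}+\sigma_{ik,i'k'}$ together with (C1) gives $\lambda_{\min}(\bar{L}''_{\mathcal A\mathcal A})\ge c>0$; the empirical $H_{\mathcal A\mathcal A}$ is a matrix of sample covariances (with the $\hat{\sigma}$-dependent weights handled through (C3)) whose entries deviate from their means by $O_P(\sqrt{(\log n)/n})$, so by the crude bound $\lVert H_{\mathcal A\mathcal A}-\bar{L}''_{\mathcal A\mathcal A}\rVert_{\mathrm{op}}\le |\mathcal A|\,\max_{a,b}\lvert(H_{\mathcal A\mathcal A}-\bar{L}''_{\mathcal A\mathcal A})_{ab}\rvert=O_P(K_{\max}^2q_n\sqrt{(\log n)/n})=o_P(1)$, where $K_{\max}^2q_n=o(\sqrt{n/\log n})$ is exactly what kills this term; hence $\tfrac12 u^T H_{\mathcal A\mathcal A}u\ge \tfrac{c}{2}r_n^2$. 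For the penalty, the reverse triangle inequality and Cauchy--Schwarz give $\sum_{i<j}(\lVert\bar{\omega}_{ij}+u_{ij}\rVert_2-\lVert\bar{\omega}_{ij}\rVert_2)\ge-\sum_{(i,j)\in\mathcal A_0}\lVert u_{ij}\rVert_2\ge-\sqrt{q_n}\,\lVert u\rVert_2$, so this term is at least $-\lambda_n\sqrt{q_n}\,r_n$. For the linear term I use $\lvert\langle\nabla_{\mathcal A}L_n(\bar{\omega},\hat{\sigma}),u\rangle\rvert\le\lVert\nabla_{\mathcal A}L_n(\bar{\omega},\hat{\sigma})\rVert_2\,r_n$ and establish $\lVert\nabla_{\mathcal A}L_n(\bar{\omega},\hat{\sigma})\rVert_2=O_P(K_{\max}\sqrt{q_n}\sqrt{(\log n)/n})$ by decomposing the score into its expectation $\bar{L}'_{\mathcal A}(\bar{\omega},\bar{\sigma})$ (vanishing at the truth by the normal-equation/conditional-independence property), a centered sample average of products of Gaussians whose every coordinate is $O_P(\sqrt{(\log n)/n})$ by a sub-exponential maximal inequality over the at most $K_{\max}^2p^2$ coordinates (finitely many powers of $n$ by (C0)), and a plug-in part controlled by (C3); then $\lVert\cdot\rVert_2\le\sqrt{\lvert\mathcal A\rvert}\,\lVert\cdot\rVert_\infty$ with $\lvert\mathcal A\rvert\le K_{\max}^2q_n$.

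Collecting the three bounds, on the sphere $\lVert u\rVert_2=r_n$ the difference is at least
\[
r_n\Big(\tfrac{c}{2}r_n-\lambda_n\sqrt{q_n}-O_P\big(K_{\max}\sqrt{q_n}\sqrt{(\log n)/n}\big)\Big).
\]
With $r_n=C_nK_{\max}\sqrt{q_n}\,\lambda_n$ the quadratic part dominates both competitors, since $\tfrac{c}{2}r_n/(\lambda_n\sqrt{q_n})\asymp C_nK_{\max}\to\infty$ and $\tfrac{c}{2}r_n/(K_{\max}\sqrt{q_n}\sqrt{(\log n)/n})\asymp C_n\lambda_n\sqrt{n/\log n}\to\infty$ by $\lambda_n\sqrt{n/\log n}\to\infty$; the bracket is therefore positive with probability tending to one, which proves the boundary inequality and hence both parts, while $K_{\max}\sqrt{q_n}\lambda_n=o(1)$ ensures $r_n\to0$ so the bound is genuinely consistent. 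I expect the main obstacle to be the two high-dimensional concentration steps performed uniformly as $p$, $K_{\max}$, and $q_n$ all grow: controlling $\lVert\nabla_{\mathcal A}L_n(\bar{\omega},\hat{\sigma})\rVert_2$ (including propagating the $\hat{\sigma}$-plug-in error through (C3)) and the operator-norm deviation of $H_{\mathcal A\mathcal A}$. These are precisely where the rate assumptions are consumed, so the delicate work is the Gaussian tail and maximal-inequality bookkeeping rather than the convex-analytic skeleton.
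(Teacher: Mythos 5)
Your proposal is correct, and it reaches the conclusion by a partially different route than the paper. The core of your argument --- the exact quadratic expansion on the sphere $\lVert u\rVert_2=r_n$, with the score controlled by $O_P(K_{\max}\sqrt{q_n\log n/n})$, the empirical Hessian tied to $\bar{L}''_{\mathcal{A},\mathcal{A}}$ in operator norm using $K_{\max}^2q_n=o(\sqrt{n/\log n})$, the penalty handled by the reverse triangle inequality and Cauchy--Schwarz, and the minimum-eigenvalue bound coming from (C1) --- is essentially the paper's Lemma~\ref{lemma6} (which runs the same computation at radius $\bar{C}_1K_{\max}\sqrt{q_n}\lambda_n$ for a fixed constant $\bar C_1$, using Lemma~\ref{lemma5} and Lemma~\ref{lemma1}(v) as inputs). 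Where you genuinely diverge is in part (ii): the paper establishes that \emph{every} solution lies near $\bar\omega$ by invoking the KKT condition (any solution has $\lVert L'_{n,\mathcal{A}}\rVert_\infty\le\lambda_n$, hence $\lVert L'_{n,\mathcal{A}}\rVert_2\le K_{\max}\sqrt{q_n}\lambda_n$) combined with a separate gradient lower bound away from the truth (Lemma~\ref{lemma7}), whereas you get the same conclusion from convexity of the restricted objective alone: a strict increase on the sphere plus convexity rules out any minimizer outside the ball. Your route is more economical --- it dispenses with Lemma~\ref{lemma7} and the KKT characterization entirely, exploiting the very convexity that the \texttt{concord}-type weighting was designed to provide (Lemma~\ref{lemma1}(ii)); the paper's route, at the cost of one more lemma, yields a sharper localization at a fixed-constant radius $\bar{C}_2K_{\max}\sqrt{q_n}\lambda_n$ rather than $C_nK_{\max}\sqrt{q_n}\lambda_n$. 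One small point to tighten: your text literally shows that the interior minimizer exists and is within $r_n$; to cover the ``any solution'' clause of (ii) you should add the one-line observation that the set of minimizers of a convex function is convex (or run the segment-crossing argument), so a minimizer outside the sphere would produce a point on the sphere with objective value at most $Q(\bar{\omega}_{\mathcal{A}})$, contradicting your boundary inequality.
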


\begin{theorem} 
\label{selection}
Suppose that $K_{\max}^2p=O(n^{\kappa})$ for some $\kappa\geq0$, $K_{\max}^2q_n= o(\sqrt{n/\log n})$, $K_{\max}\sqrt{q_n\log n/n} =o(\lambda_n)$, $\lambda_n\sqrt{n/\log n}\rightarrow\infty$ and $K_{\max}\sqrt{q_n}\lambda_n=o(1)$ as $n\rightarrow\infty$. Then with probability tending to $1$, the solution of \eqref{thm1} satisfies
$
\max \{|L'_{n,ijkl}(\hat{{\Omega}}^{\mathcal{A},\lambda_n},\hat{{\sigma}},\bm{Y})|: _{(i,j,k,l)\in \mathcal{A}^c}\}
<\lambda_n,
$
where $L'_{n,ijkl}=\partial L_n/\partial\omega_{ijkl}$. 
\end{theorem}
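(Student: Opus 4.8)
The plan is to verify the subgradient (Karush--Kuhn--Tucker) condition for the inactive coordinates directly, in the spirit of a primal--dual witness argument. By Theorem~\ref{restricted problem} a restricted minimizer $\hat{\omega}_{\mathcal{A}}^{\lambda_n}$ exists with probability tending to one and satisfies the stationarity condition on the active set, $L'_{n,\mathcal{A}}(\hat{\omega}_{\mathcal{A}}^{\lambda_n},\hat{\sigma},\bm{Y})=-\lambda_n z_{\mathcal{A}}$, where $z_{\mathcal{A}}$ collects the group-penalty subgradients on the active blocks, so that $z_{ij}=\hat{\omega}_{ij}/\lVert\hat{\omega}_{ij}\rVert_2$ for $(i,j)\in\mathcal{A}_0$. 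Because $L_n(\cdot,\hat{\sigma},\bm{Y})$ is quadratic in $\omega$ for fixed $\hat{\sigma}$, its gradient is affine, so for every $(i,j,k,l)\in\mathcal{A}^c$ a first-order Taylor expansion about $\bar{\omega}$ (recall $\bar{\omega}_{\mathcal{A}^c}=0$) is exact:
$$
L'_{n,ijkl}(\hat{\omega}_{\mathcal{A}}^{\lambda_n},\hat{\sigma})=L'_{n,ijkl}(\bar{\omega},\hat{\sigma})+\hat{L}''_{n,ijkl,\mathcal{A}}\,(\hat{\omega}_{\mathcal{A}}^{\lambda_n}-\bar{\omega}_{\mathcal{A}}),
$$
where $\hat{L}''_n$ denotes the empirical Hessian block. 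Expanding the active equations in the same way and solving for the increment gives $\hat{\omega}_{\mathcal{A}}^{\lambda_n}-\bar{\omega}_{\mathcal{A}}=-[\hat{L}''_{n,\mathcal{A},\mathcal{A}}]^{-1}(L'_{n,\mathcal{A}}(\bar{\omega},\hat{\sigma})+\lambda_n z_{\mathcal{A}})$, where the inverse exists with high probability by the concentration argument already used in Theorem~\ref{restricted problem}.

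Substituting this back decomposes the inactive gradient into a deterministic ``irrepresentability'' part $-\lambda_n\,\hat{L}''_{n,ijkl,\mathcal{A}}[\hat{L}''_{n,\mathcal{A},\mathcal{A}}]^{-1}z_{\mathcal{A}}$ and a stochastic part $L'_{n,ijkl}(\bar{\omega},\hat{\sigma})-\hat{L}''_{n,ijkl,\mathcal{A}}[\hat{L}''_{n,\mathcal{A},\mathcal{A}}]^{-1}L'_{n,\mathcal{A}}(\bar{\omega},\hat{\sigma})$. For the first part I would replace the empirical Hessian by its population counterpart $\bar{L}''$ and the subgradient $z_{\mathcal{A}}$ by the vector $M$ of Condition~C2; the estimation consistency bound $\lVert\hat{\omega}_{\mathcal{A}}^{\lambda_n}-\bar{\omega}_{\mathcal{A}}\rVert_2\le C_n K_{\max}\sqrt{q_n}\lambda_n=o(1)$ from Theorem~\ref{restricted problem} shows $z_{ij}$ is close to $\bar{\omega}_{ij}/(\sum_{k',l'}\bar{\omega}_{ijk'l'}^2)^{1/2}$, so that up to negligible error the population version is bounded in absolute value by $\delta$ via Condition~C2, contributing at most $(\delta+o(1))\lambda_n$.

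The remaining work is to show the stochastic part is $o_P(\lambda_n)$ uniformly over $\mathcal{A}^c$. The term $L'_{n,ijkl}(\bar{\omega},\bar{\sigma})$ is a centered average of products of the Gaussian errors and covariates, hence sub-exponential, and a Bernstein-type bound together with a union bound over the $O(K_{\max}^2 p)=O(n^\kappa)$ coordinates yields the uniform rate $O_P(\sqrt{(\log n)/n})$; replacing $\bar{\sigma}$ by $\hat{\sigma}$ adds a term of the same order controlled by Condition~C3. Multiplying by the projection coefficient $\hat{L}''_{n,ijkl,\mathcal{A}}[\hat{L}''_{n,\mathcal{A},\mathcal{A}}]^{-1}$, whose relevant norm is bounded because the active Hessian is well conditioned by Condition~C1 and its row structure is controlled as in C2, gives a stochastic contribution of order $K_{\max}\sqrt{q_n}\sqrt{(\log n)/n}$, which is $o(\lambda_n)$ precisely under the hypothesis $K_{\max}\sqrt{q_n\log n/n}=o(\lambda_n)$. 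Combining the two parts, each inactive gradient is bounded by $(\delta+o_P(1))\lambda_n<\lambda_n$ eventually, and the maximum over $\mathcal{A}^c$ inherits the bound through the union bound.

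The main obstacle I anticipate is the uniform control of the empirical Hessian $\hat{L}''_n$, including its dependence on the plugged-in $\hat{\sigma}$, so that $[\hat{L}''_{n,\mathcal{A},\mathcal{A}}]^{-1}$ stays close to $[\bar{L}''_{\mathcal{A},\mathcal{A}}]^{-1}$ in operator norm and the empirical irrepresentability constant does not exceed $\delta$. This requires concentration of a $|\mathcal{A}|\times|\mathcal{A}|$ matrix with $|\mathcal{A}|\le K_{\max}^2 q_n$ entries, and the condition $K_{\max}^2 q_n=o(\sqrt{n/\log n})$ is exactly what makes this matrix-perturbation and inverse-stability argument go through while preserving the $\delta$-gap supplied by Condition~C2.
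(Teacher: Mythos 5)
Your overall strategy is the same primal--dual witness argument the paper uses: exploit that $L_n(\cdot,\hat{\sigma},\bm{Y})$ is quadratic so the Taylor expansion of the gradient around $\bar{\omega}$ is exact, solve the active-set stationarity equations for the increment, substitute into the inactive gradients, bound the irrepresentability term via Condition C2 and the stochastic terms via concentration, and finish with a union bound (minor slip: $\lvert\mathcal{A}^c\rvert\le K_{\max}^2p^2=O(n^{2\kappa})$, not $O(K_{\max}^2p)=O(n^{\kappa})$, but this only changes a logarithm). Your treatment of the stochastic part is sound. The structural difference, and the source of a genuine gap, is that you solve the KKT system with the \emph{empirical} Hessian, writing $\hat{\omega}_{\mathcal{A}}^{\lambda_n}-\bar{\omega}_{\mathcal{A}}=-[L''_{n,\mathcal{A},\mathcal{A}}]^{-1}\bigl(L'_{n,\mathcal{A}}(\bar{\omega},\hat{\sigma},\bm{Y})+\lambda_n z_{\mathcal{A}}\bigr)$, whereas the paper solves it with the population Hessian $[\bar{L}''_{\mathcal{A},\mathcal{A}}(\bar{\omega},\bar{\sigma})]^{-1}$ and carries the deviation $D_{n,\mathcal{A},\mathcal{A}}\nu_n$, $\nu_n=\hat{\omega}_{\mathcal{A}}^{\lambda_n}-\bar{\omega}_{\mathcal{A}}$, as an explicit remainder (display \eqref{nu} in the paper).

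The gap is in transferring your empirical irrepresentability term $-\lambda_n L''_{n,ijkl,\mathcal{A}}[L''_{n,\mathcal{A},\mathcal{A}}]^{-1}z_{\mathcal{A}}$ to the population quantity that C2 controls. The subgradient $z_{\mathcal{A}}$ has $q_n$ blocks of unit Euclidean norm, so $\lVert z_{\mathcal{A}}\rVert_2=\sqrt{q_n}\to\infty$ (and likewise $\lVert M\rVert_2=\sqrt{q_n}$, so decomposing $z_{\mathcal{A}}=M+(z_{\mathcal{A}}-M)$ does not help). The perturbation bounds available from the paper's Lemma~\ref{lemma5}-type (entrywise/Frobenius) concentration give $\lVert L''_{n,ijkl,\mathcal{A}}[L''_{n,\mathcal{A},\mathcal{A}}]^{-1}-\bar{L}''_{ijkl,\mathcal{A}}[\bar{L}''_{\mathcal{A},\mathcal{A}}]^{-1}\rVert_2=O_P\bigl(K_{\max}^2q_n\sqrt{(\log n)/n}\bigr)$, and pairing this row error with $z_{\mathcal{A}}$ costs the extra factor $\sqrt{q_n}$: the transfer error is $O_P\bigl(K_{\max}^2q_n^{3/2}\sqrt{(\log n)/n}\bigr)$, which is \emph{not} $o_P(1)$ under the assumed $K_{\max}^2q_n=o(\sqrt{n/\log n})$ once $q_n\to\infty$. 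So your claim that this condition ``is exactly what makes the matrix-perturbation and inverse-stability argument go through while preserving the $\delta$-gap'' is unsubstantiated: you would need either the strictly stronger scaling $K_{\max}^2q_n^{3/2}=o(\sqrt{n/\log n})$, or sharper operator-norm concentration for the empirical Hessian (a $\sqrt{\lvert\mathcal{A}\rvert/n}$-type random-matrix bound), which is not among the tools the paper develops. The paper's decomposition dodges this entirely: because it inverts $\bar{L}''_{\mathcal{A},\mathcal{A}}$, the deviation $D_{n,\mathcal{A},\mathcal{A}}$ only ever multiplies $\nu_n$, whose norm is $O_P(K_{\max}\sqrt{q_n}\lambda_n)=o_P(1)$ by Theorem~\ref{restricted problem}, while the large-norm vector $\hat{M}^{\mathcal{A}}$ is contracted only against the fixed row $\bar{L}''_{ijkl,\mathcal{A}}[\bar{L}''_{\mathcal{A},\mathcal{A}}]^{-1}$, whose norm is bounded by Lemma~\ref{lemma2}(iv) and whose inner product with $M$ is at most $\delta$ by C2; the remaining cross term $\bar{L}''_{ijkl,\mathcal{A}}[\bar{L}''_{\mathcal{A},\mathcal{A}}]^{-1}D_{n,\mathcal{A},\mathcal{A}}\nu_n$ is then handled by scalar concentration against this deterministic row (Lemma~\ref{lemma8}), not by operator-norm perturbation. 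If you want to salvage your route, you must either strengthen the scaling assumption or import a genuinely sharper matrix concentration inequality; otherwise the argument should be reorganized as in the paper so that no inverse of the empirical Hessian ever meets $z_{\mathcal{A}}$.
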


\begin{theorem} 
\label{consistency} 
Assume that the sequences $K_{\max}, p, q_n$ and $\lambda_n$ satisfy the conditions in Theorem~\ref{selection}. Then with probability tending to $1$, there exists a minimizer $\hat{\omega}^{\lambda_n}$ of $L_n(\omega,\hat{{\sigma}},\bm{Y})+\lambda_n\sum_{i<j}\lVert\omega_{ij}\rVert_2$ which satisfies 
\begin{enumerate} 
\item [(i)] (estimation consistency) for any sequence $C_n\rightarrow\infty$, 
$\lVert\hat{\omega}^{\lambda_n}-\bar{\omega}\rVert_2\leq C_n K_{\max}\sqrt{q_n}\lambda_n$, 
\item [(ii)] (selection consistency) if for some $C_n\rightarrow\infty$, $\lVert\bar{\omega}_{ij}\rVert_2>C_nK_{\max}\sqrt{q_n}\lambda_n$ whenever $\bar{\omega}_{ij}\neq0$, then $\hat{\mathcal{A}}=\mathcal{A}$, where $\hat{\mathcal{A}}=\{(i,j): \hat{{\omega}}_{ij}^{\lambda_n}\ne {0}\}$. 
\end{enumerate}
\end{theorem}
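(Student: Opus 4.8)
The plan is to use a primal--dual witness (KKT) construction that stitches together the two preceding theorems. I would define the candidate $\hat{\omega}^{\lambda_n}$ by taking the restricted minimizer $\hat{\omega}_{\mathcal{A}}^{\lambda_n}$ supplied by Theorem~\ref{restricted problem}(i) on the coordinates in $\mathcal{A}$ and setting $\hat{\omega}^{\lambda_n}_{\mathcal{A}^c}=0$. Since, under the \texttt{concord} weight choice and with the consistent plug-in $\hat{\sigma}$ held fixed, the map $\omega\mapsto Q_n(\omega):=L_n(\omega,\hat{\sigma},\bm{Y})+\lambda_n\sum_{i<j}\lVert\omega_{ij}\rVert_2$ is convex, it suffices to verify $0\in\partial Q_n(\hat{\omega}^{\lambda_n})$, which certifies $\hat{\omega}^{\lambda_n}$ as a global minimizer of the full (unrestricted) problem. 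I only need \emph{existence} of such a minimizer, which is fortunate because the objective need not be strictly convex when $\sum_i K_i>n$.

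The subgradient inclusion splits by block; write $\nabla_{ij}L_n$ for the vector of partial derivatives $L'_{n,ijkl}$ over $1\le k\le K_i,\,1\le l\le K_j$. For an on-support pair $(i,j)\in\mathcal{A}_0$, the first-order optimality of $\hat{\omega}_{\mathcal{A}}^{\lambda_n}$ for the restricted problem already supplies $-\nabla_{ij}L_n(\hat{\omega}^{\lambda_n})\in\lambda_n\,\partial\lVert\omega_{ij}\rVert_2$, because the partial derivatives of $L_n$ with respect to the free ($\mathcal{A}$) coordinates are unaffected by fixing the remaining coordinates to zero, and $\hat{\omega}^{\lambda_n}$ has zeros precisely on $\mathcal{A}^c$, so its full gradient on $\mathcal{A}$ equals the restricted one at $\hat{\omega}_{\mathcal{A}}^{\lambda_n}$. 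For an off-support pair $(i,j)\in\mathcal{A}_0^c$ the block subgradient of the group penalty at $\omega_{ij}=0$ is the Frobenius ball of radius $\lambda_n$, so I must show $\big(\sum_{k,l}L'_{n,ijkl}(\hat{\omega}^{\lambda_n})^2\big)^{1/2}\le\lambda_n$. Theorem~\ref{selection} delivers the strict coordinatewise bound $\max_{(i,j,k,l)\in\mathcal{A}^c}|L'_{n,ijkl}|<\lambda_n$, which, after accounting for the factor $\sqrt{K_iK_j}\le K_{\max}$ separating the coordinatewise and block norms, combines with the rate condition $K_{\max}\sqrt{q_n\log n/n}=o(\lambda_n)$ to give the required block bound with probability tending to one.

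With $\hat{\omega}^{\lambda_n}$ certified as a global minimizer, part (i) is immediate: it coincides with $\hat{\omega}_{\mathcal{A}}^{\lambda_n}$ on $\mathcal{A}$ and vanishes on $\mathcal{A}^c$, exactly where $\bar{\omega}$ vanishes, so $\lVert\hat{\omega}^{\lambda_n}-\bar{\omega}\rVert_2=\lVert\hat{\omega}_{\mathcal{A}}^{\lambda_n}-\bar{\omega}_{\mathcal{A}}\rVert_2\le C_nK_{\max}\sqrt{q_n}\lambda_n$ by Theorem~\ref{restricted problem}(ii). For part (ii), the vanishing of $\hat{\omega}^{\lambda_n}$ off support gives $\hat{\mathcal{A}}\subseteq\mathcal{A}_0$; for the reverse inclusion I would combine the beta-min hypothesis $\lVert\bar{\omega}_{ij}\rVert_2>C_nK_{\max}\sqrt{q_n}\lambda_n$ with estimation consistency, choosing the estimation-consistency sequence (e.g.\ $\sqrt{C_n}$) to be of smaller order than the separation sequence $C_n$, so that $\lVert\hat{\omega}_{ij}^{\lambda_n}-\bar{\omega}_{ij}\rVert_2<\lVert\bar{\omega}_{ij}\rVert_2$ whenever $\bar{\omega}_{ij}\ne0$, forcing $\hat{\omega}_{ij}^{\lambda_n}\ne0$ and hence $\mathcal{A}_0\subseteq\hat{\mathcal{A}}$.

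Granting Theorems~\ref{restricted problem} and~\ref{selection}, the remaining work is largely bookkeeping, so the only genuine obstacle is the off-support step of the KKT check: converting the coordinatewise dual-feasibility bound of Theorem~\ref{selection} into the block-Frobenius dual feasibility demanded by the group penalty, uniformly over the possibly growing number of off-support blocks. The delicate point is to confirm that the dimension inflation $\sqrt{K_iK_j}$ is genuinely absorbed by the margin between $\lambda_n$ and the order $K_{\max}\sqrt{q_n\log n/n}$ of the off-support gradient, and that the strict inequality of Theorem~\ref{selection} survives this union-type accounting; everything else follows from convex-duality sufficiency and the triangle inequality.
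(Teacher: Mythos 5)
Your construction is, in outline, the same primal--dual witness argument the paper uses: extend the restricted minimizer of Theorem~\ref{restricted problem} by zero on $\mathcal{A}^c$, certify it as a global minimizer of the full convex problem via the KKT conditions and Theorem~\ref{selection}, then read off part (i) from Theorem~\ref{restricted problem}(ii) and part (ii) from the beta-min condition; your on-support subgradient step and both deductions (i) and (ii) are fine. The genuine gap is exactly the step you flag as the ``only genuine obstacle'': converting the coordinatewise bound $\max_{(i,j,k,l)\in\mathcal{A}^c}|L'_{n,ijkl}|<\lambda_n$ of Theorem~\ref{selection} into the block dual-feasibility condition $\bigl(\sum_{k,l}L'_{n,ijkl}(\hat{\omega}^{\lambda_n})^2\bigr)^{1/2}\le\lambda_n$ demanded by the subdifferential of $\lVert\cdot\rVert_2$ at a zero block. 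Your absorption argument fails because it misidentifies the size of the off-support gradient. From the proof of Theorem~\ref{selection}, $L'_{n,ijkl}(\hat{\omega}^{\lambda_n},\hat{\sigma},\bm{Y})=-\lambda_n\bar{L}''_{ijkl,\mathcal{A}}(\bar{\omega},\bar{\sigma})[\bar{L}''_{\mathcal{A},\mathcal{A}}(\bar{\omega},\bar{\sigma})]^{-1}\hat{M}^{\mathcal{A}}+o_p(\lambda_n)$: the terms of order $K_{\max}\sqrt{q_n\log n/n}$ are the \emph{remainders}, while the dominant term is controlled only through Condition C2, which bounds it by $\delta\lambda_n$ with $\delta<1$ a fixed constant. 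Passing to the block norm therefore gives $\lVert\nabla_{ij}L_n(\hat{\omega}^{\lambda_n})\rVert_2\le\sqrt{K_iK_j}\,(\delta+o_p(1))\lambda_n$, and the inflation factor $\sqrt{K_iK_j}$ multiplies the $\delta\lambda_n$ term, not the remainder. Since the margin $1-\delta$ is a constant, no rate condition relating $\lambda_n$ to $K_{\max}\sqrt{q_n\log n/n}$ can rescue this: the offending term is itself proportional to $\lambda_n$, and the resulting bound exceeds $\lambda_n$ as soon as $K_iK_j>\delta^{-2}$.

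To make your (correctly formulated) block KKT check go through, one must strengthen Condition C2 to a blockwise irrepresentability condition, $\max_{(i,j)\notin\mathcal{A}_0}\bigl(\sum_{k,l}|\bar{L}''_{ijkl,\mathcal{A}}(\bar{\omega},\bar{\sigma})[\bar{L}''_{\mathcal{A},\mathcal{A}}(\bar{\omega},\bar{\sigma})]^{-1}M|^2\bigr)^{1/2}\le\delta$, and re-run the proof of Theorem~\ref{selection} at the block level, where the remainder terms also acquire the factor $\sqrt{K_iK_j}$ and hence require correspondingly strengthened rate hypotheses (e.g.\ $K_{\max}^2\sqrt{q_n\log n/n}=o(\lambda_n)$). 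It is worth noting that the paper itself does not confront this issue: its stated KKT characterization at a zero block is the coordinatewise condition $|L'_{n,ijkl}|\le\lambda_n$ for all $k,l$, into which Theorem~\ref{selection} plugs directly, but that condition is only necessary, not sufficient, for optimality under the group penalty (the subdifferential of $\lVert\cdot\rVert_2$ at $0$ is the $\ell_2$ ball, not the $\ell_\infty$ ball). So your instinct to demand the Frobenius-ball condition is the right one; the argument breaks precisely at the bridge you build from Theorem~\ref{selection} back to it.
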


\section{Simulation}
\label{sec:simulation}
In this section, two simulation studies are conducted to examine the performance of \texttt{mconcord} and compare with \texttt{space}, \texttt{concord}, \texttt{glasso} and \texttt{multi}, the method of Kolar et al. \cite{multi-attribute} in regards of estimation accuracy and model selection. For \texttt{space}, \texttt{concord} and \texttt{glasso}, all components of each node are treated as separate univariate nodes, and we put an edge between two nodes as long as there is at least one non-zero entry in the corresponding submatrix. 

\subsection{Estimation Accuracy Comparison}
In the first study, we evaluate the performance of each method at a series of different values of the tuning parameter $\lambda$. Four random networks with $p=30$ (44\% density), $p=50$ (21\% density), $p=100$ (6\% density), $p=200$ (2\% density) and $p=350$ (2\% density) nodes are generated, and each node has a $K$-dimensional Gaussian variable associate with it, $K=3, 5, 8$. Based on each network, we construct a $pK\times pK$ precision matrix, with non-zero blocks corresponding to edges in the network. Elements of diagonal blocks are set as random numbers from $[0.5, 1]$. If node $i$ and node $j$ $(i<j)$ are not connected, then the entire $(i,j)$th and $(j,i)$th blocks would take values zero. If node $i$ and node $j$ $(i<j)$ are connected, the $(i,j)$th block would have elements taking values in $(0, 0.05, -0.05, -0.2, 0.2)$ with equal probabilities so that both strong and weak signals are included. The $(j,i)$th block can be obtained by symmetry. Finally, we add $\rho I$ to the precision matrix to make it positive-definite, where $\rho$ is the absolute value of the smallest eigenvalue plus 0.5 and $I$ is the identity matrix. Using each precision matrix, we generate 50 independent datasets consisting of $n=50$ (for the $p=30$ and $p=50$ networks) and $n=100$ (for the $p=100$, $p=200$ and $p=350$ networks) i.i.d. samples. Results are given in Figure~1 to Figure~5. All figures show the number of correctly detected edges ($N_c$) versus the number of total detected edges ($N_t$), averaged across the 30 independent datasets. 

\begin{figure}[H]
	\label{fig1}
	\centering
	\begin{subfigure}[t]{0.3\textwidth}
		\includegraphics[width=\textwidth]{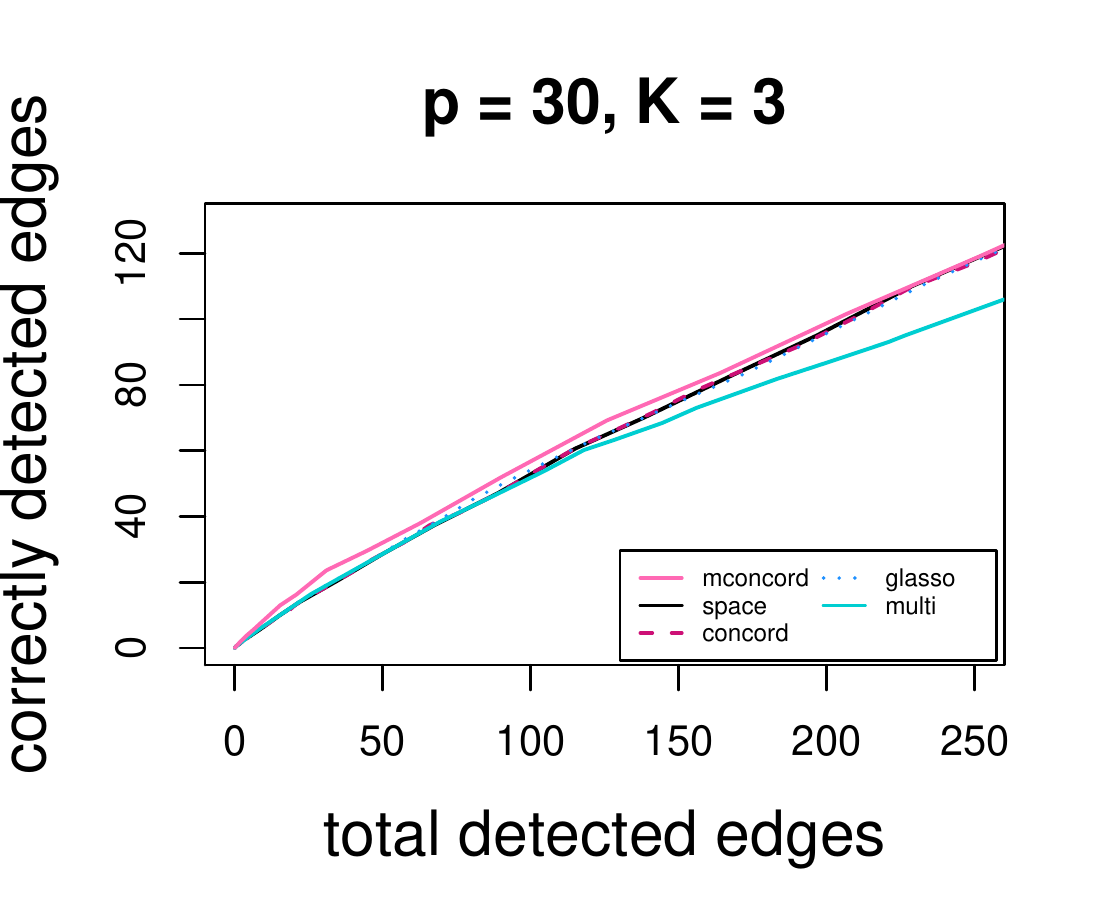}
		\caption{}
	\end{subfigure}
	\begin{subfigure}[t]{0.3\textwidth}
		\includegraphics[width=\textwidth]{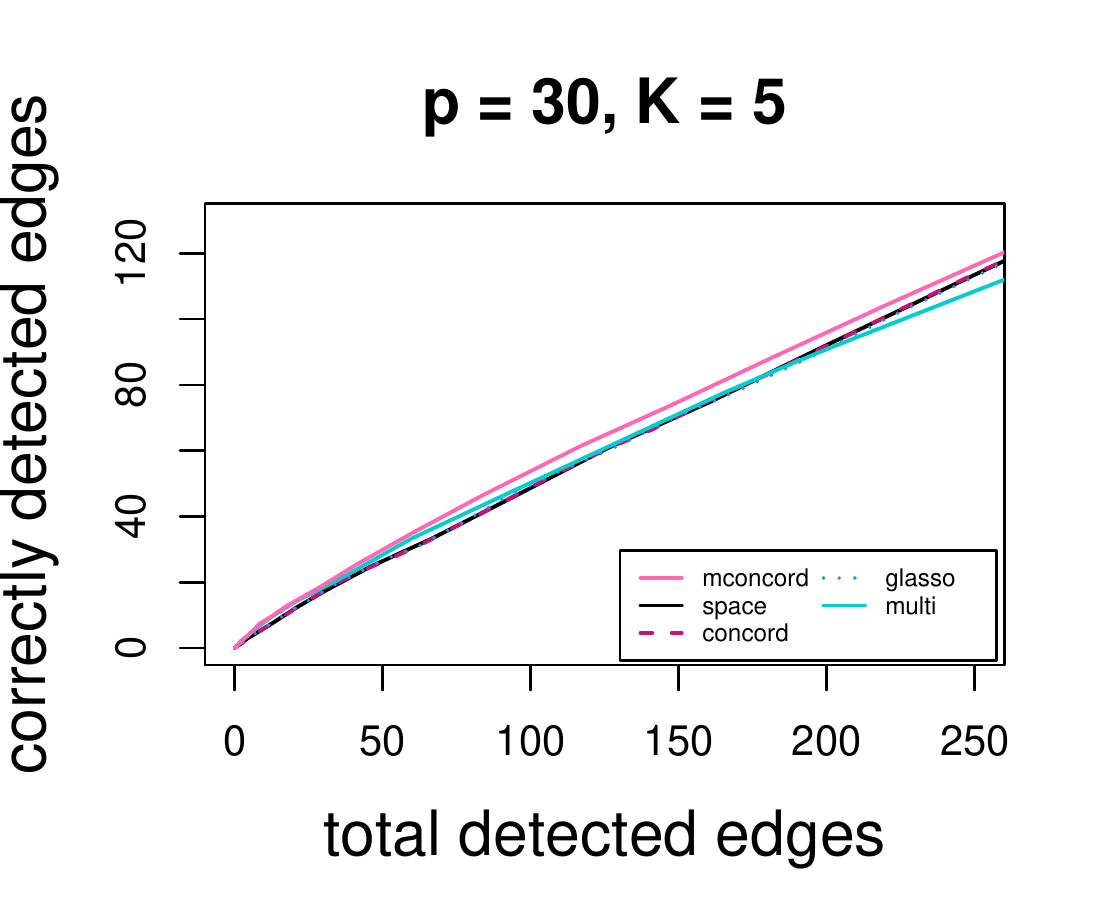}
		\caption{}
	\end{subfigure}
	\begin{subfigure}[t]{0.3\textwidth}
		\includegraphics[width=\textwidth]{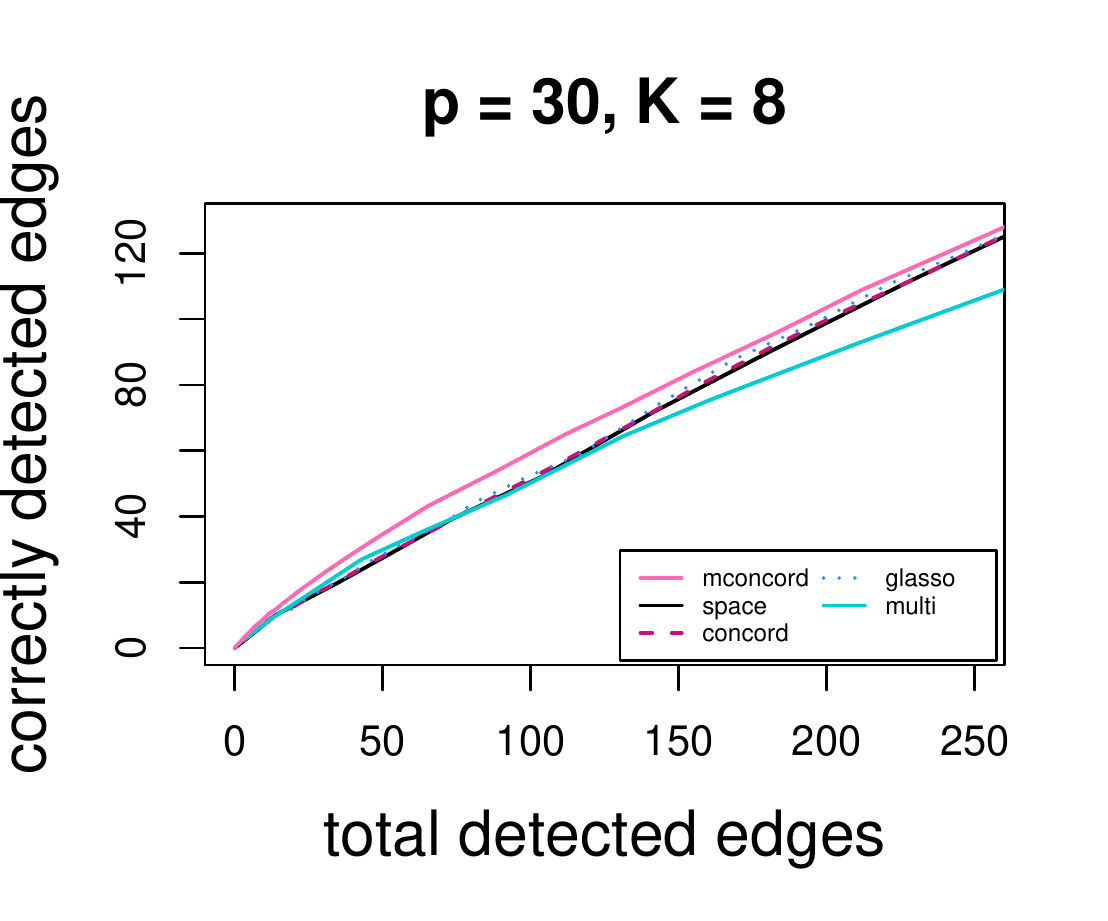}
		\caption{}
	\end{subfigure}
	\caption{Estimation accuracy comparison: total detected edges vs. correctly detected edges with 190 true edges (44\%): (a) $K=3$; (b) $K=5$; (c) $K=8$}
\end{figure}

\begin{figure}[H]
	\label{fig2}
	\centering
	\begin{subfigure}[t]{0.3\textwidth}
		\includegraphics[width=\textwidth]{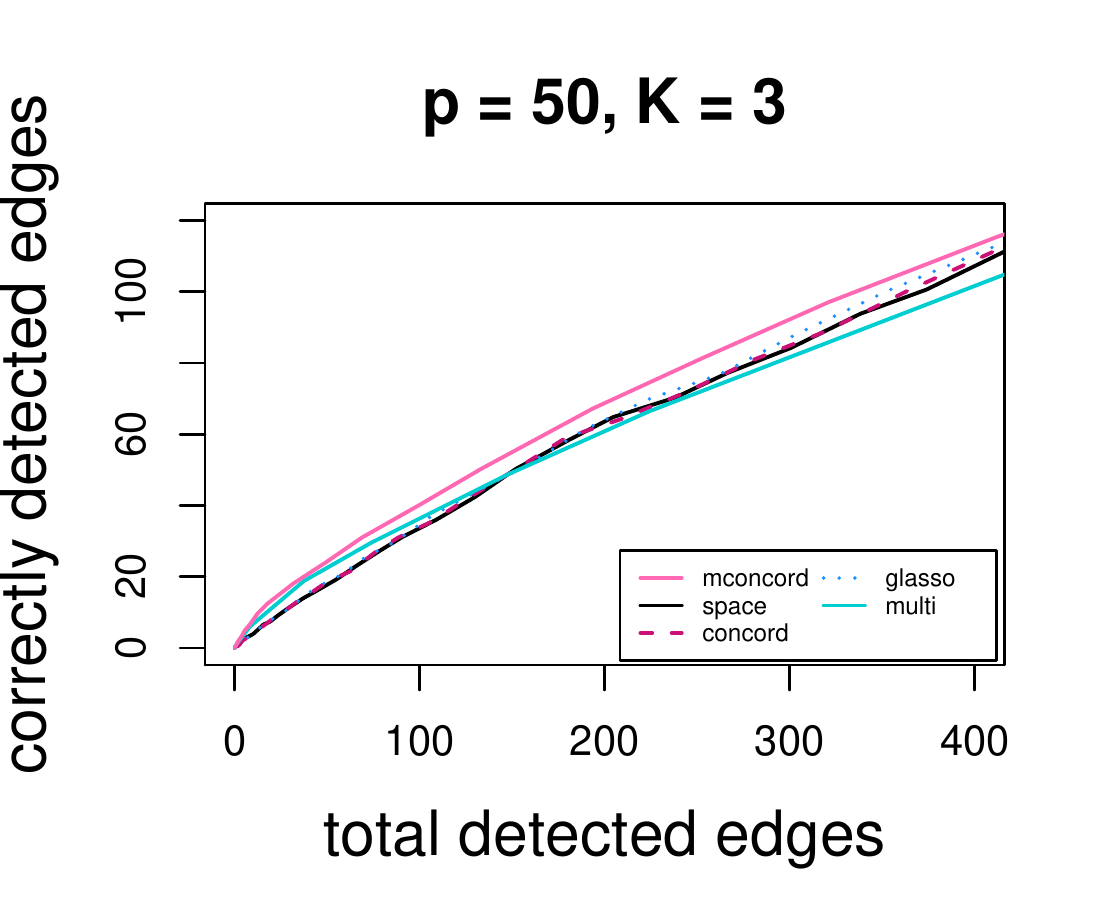}
		\caption{}
	\end{subfigure}
	\begin{subfigure}[t]{0.3\textwidth}
		\includegraphics[width=\textwidth]{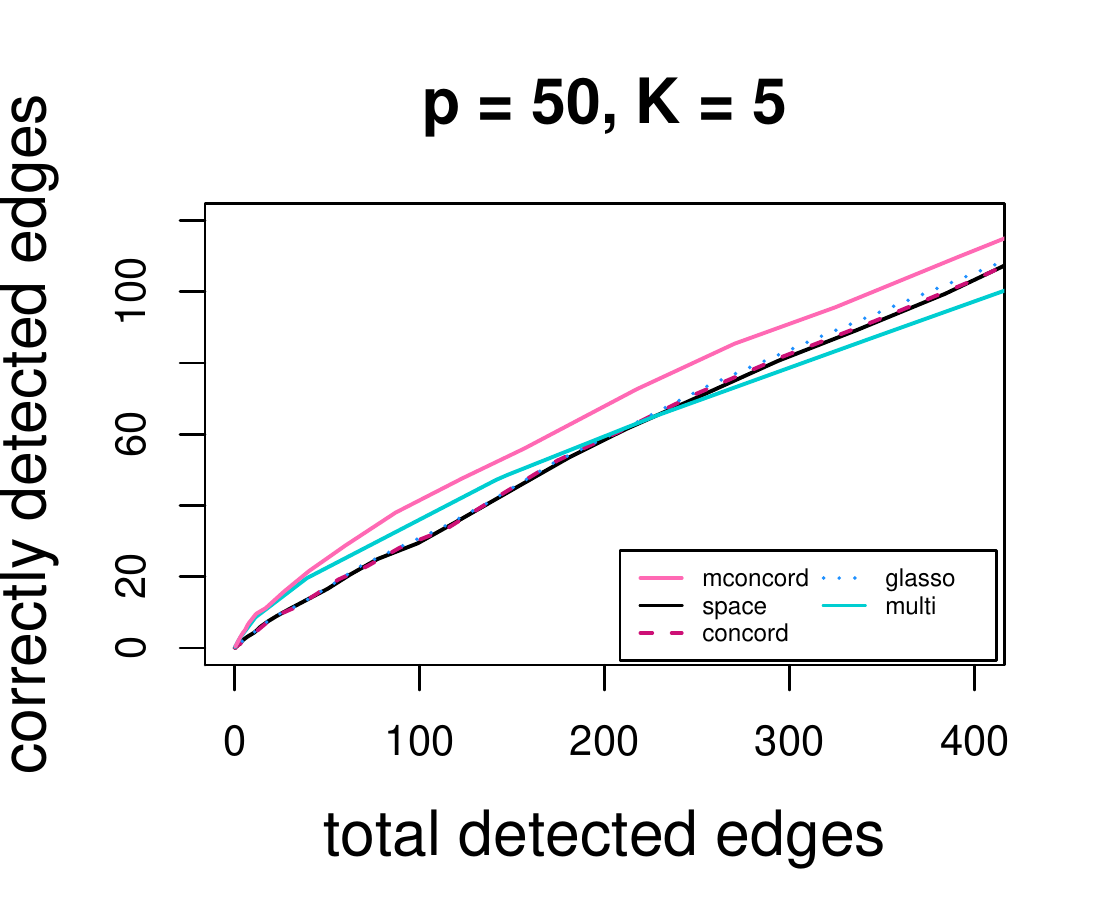}
		\caption{}
	\end{subfigure}
	\begin{subfigure}[t]{0.3\textwidth}
		\includegraphics[width=\textwidth]{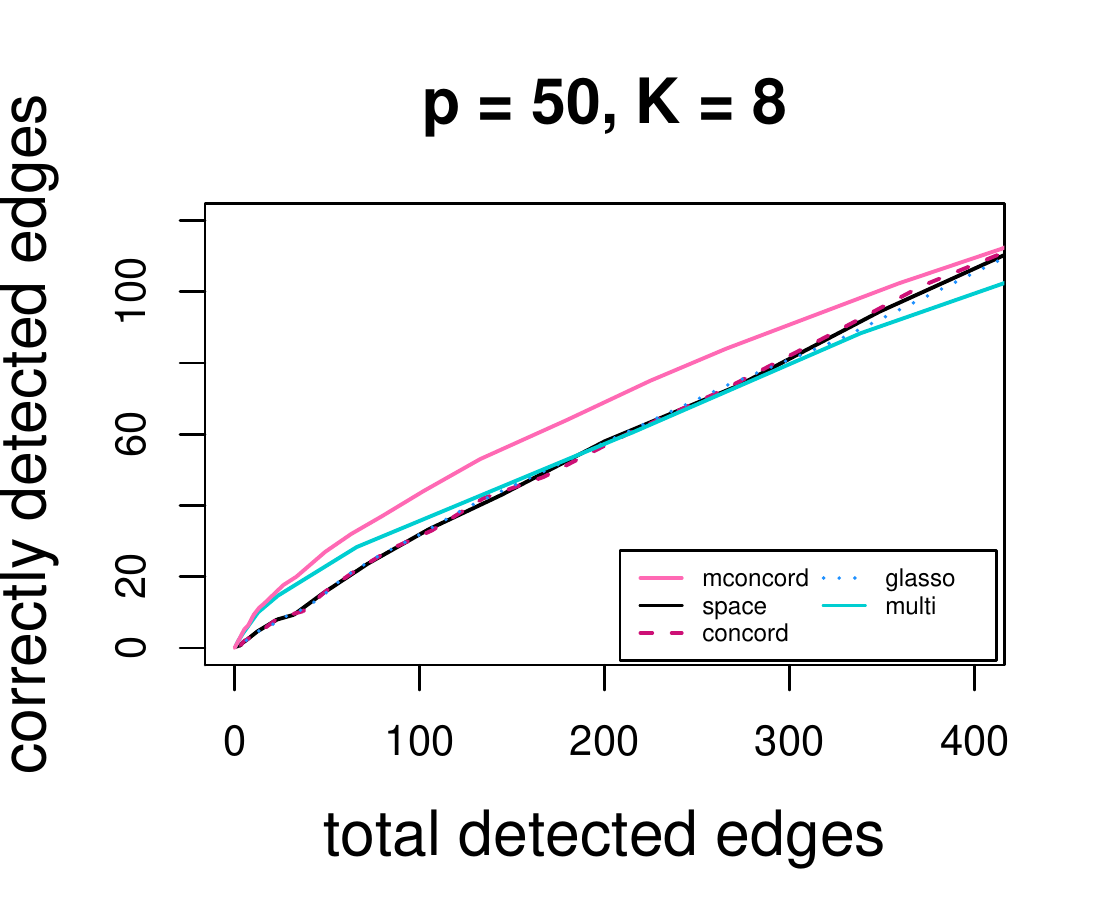}
		\caption{}
	\end{subfigure}
	\caption{Estimation accuracy comparison: total detected edges vs. correctly detected edges with 262 true edges (21\%): (a) $K=3$; (b) $K=5$; (c) $K=8$}
\end{figure}

\begin{figure}[H]
\label{fig3}
	\centering
	\begin{subfigure}[t]{0.3\textwidth}
		\includegraphics[width=\textwidth]{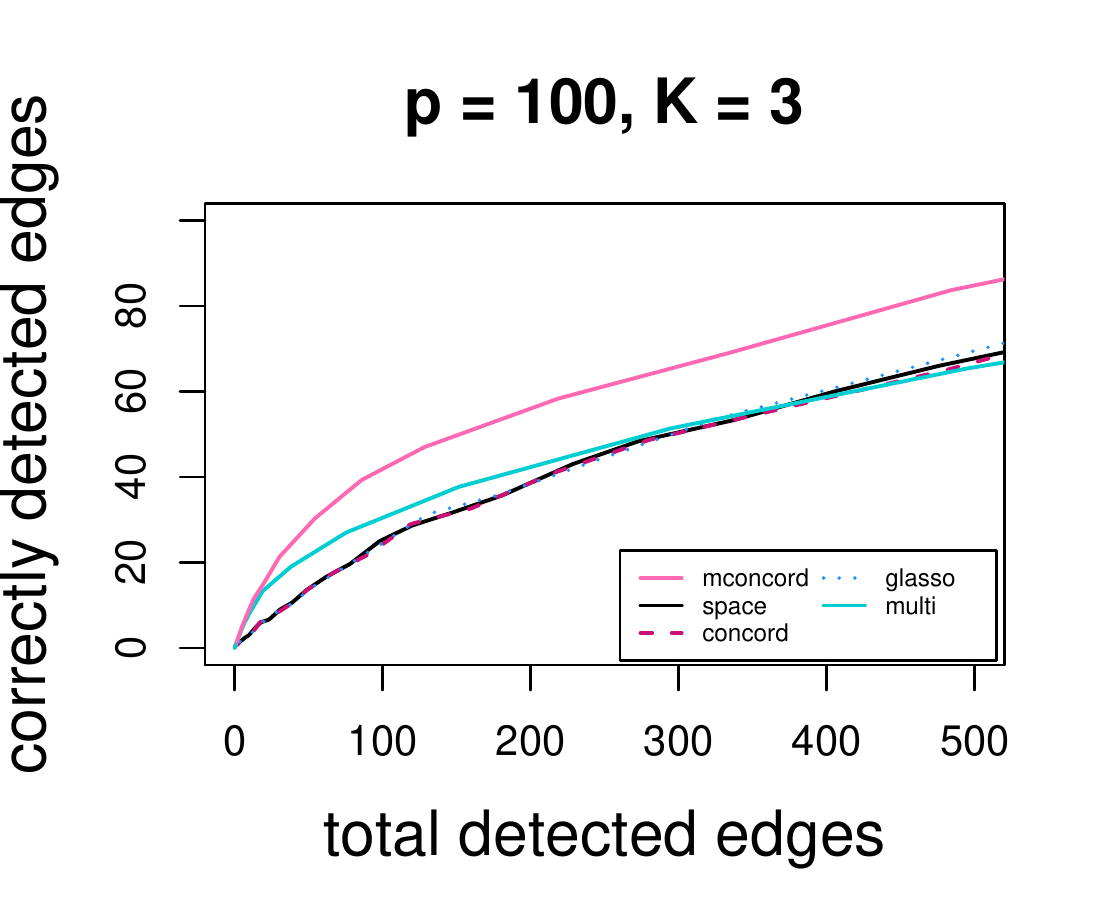}
		\caption{}
	\end{subfigure}
	\begin{subfigure}[t]{0.3\textwidth}
		\includegraphics[width=\textwidth]{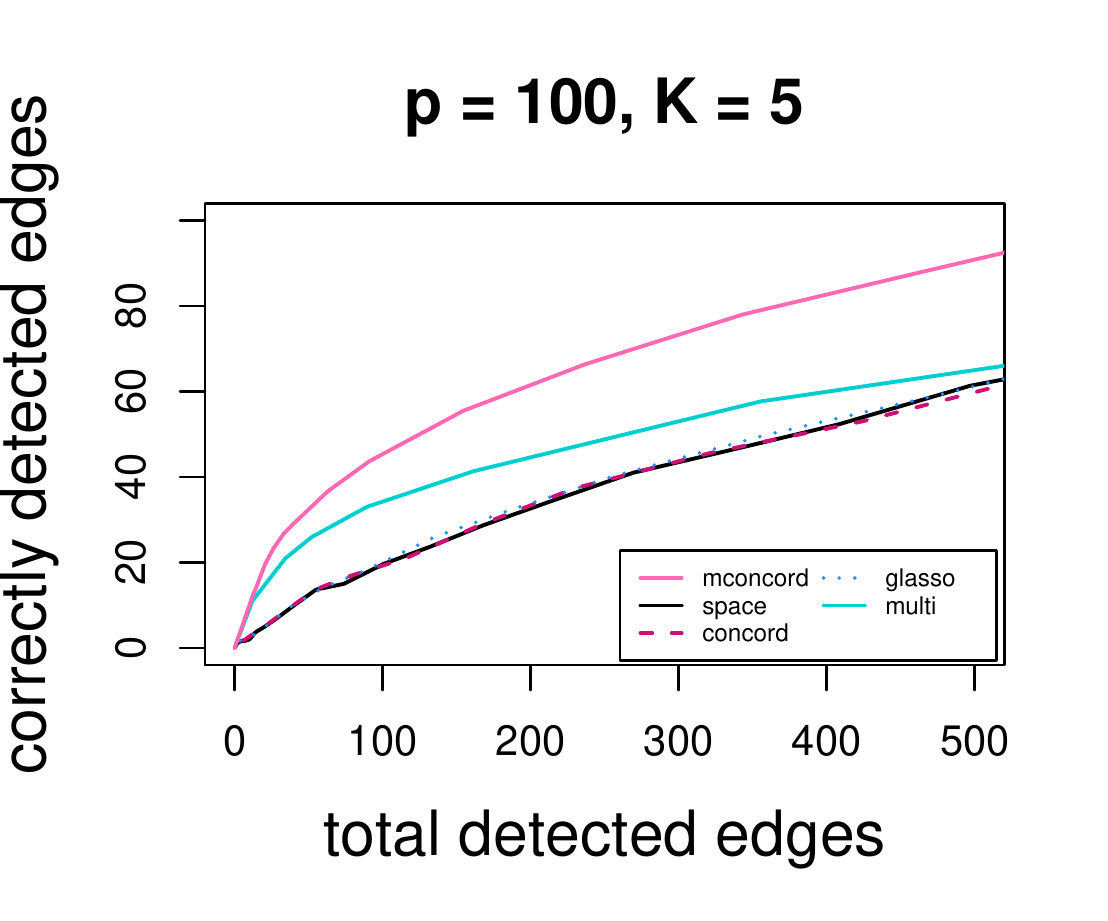}
		\caption{}
	\end{subfigure}
	\begin{subfigure}[t]{0.3\textwidth}
		\includegraphics[width=\textwidth]{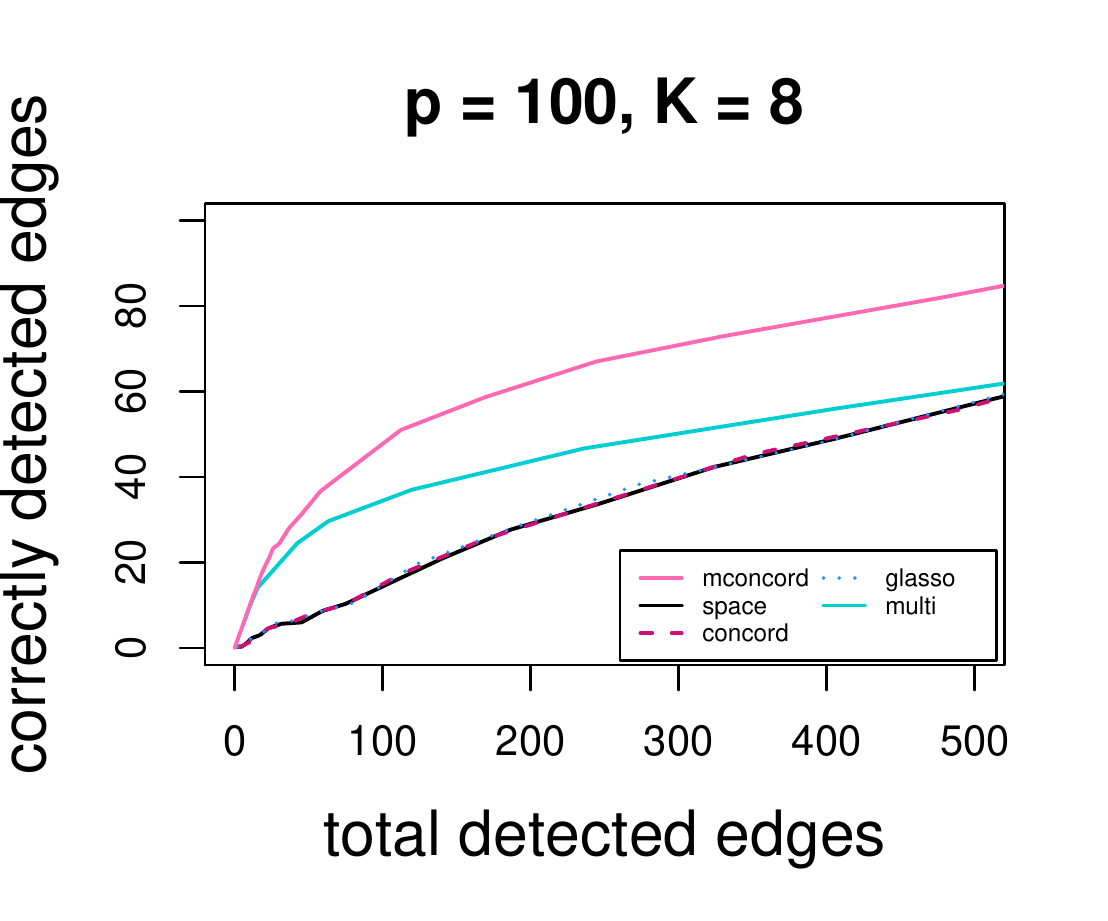}
		\caption{}
	\end{subfigure}
	\caption{Estimation accuracy comparison: total detected edges vs. correctly detected edges with 279 true edges (6\%): (a) $K=3$; (b) $K=5$; (c) $K=8$}
\end{figure}

\begin{figure}[H]
\label{fig4}
	\centering
	\begin{subfigure}[t]{0.3\textwidth}
		\includegraphics[width=\textwidth]{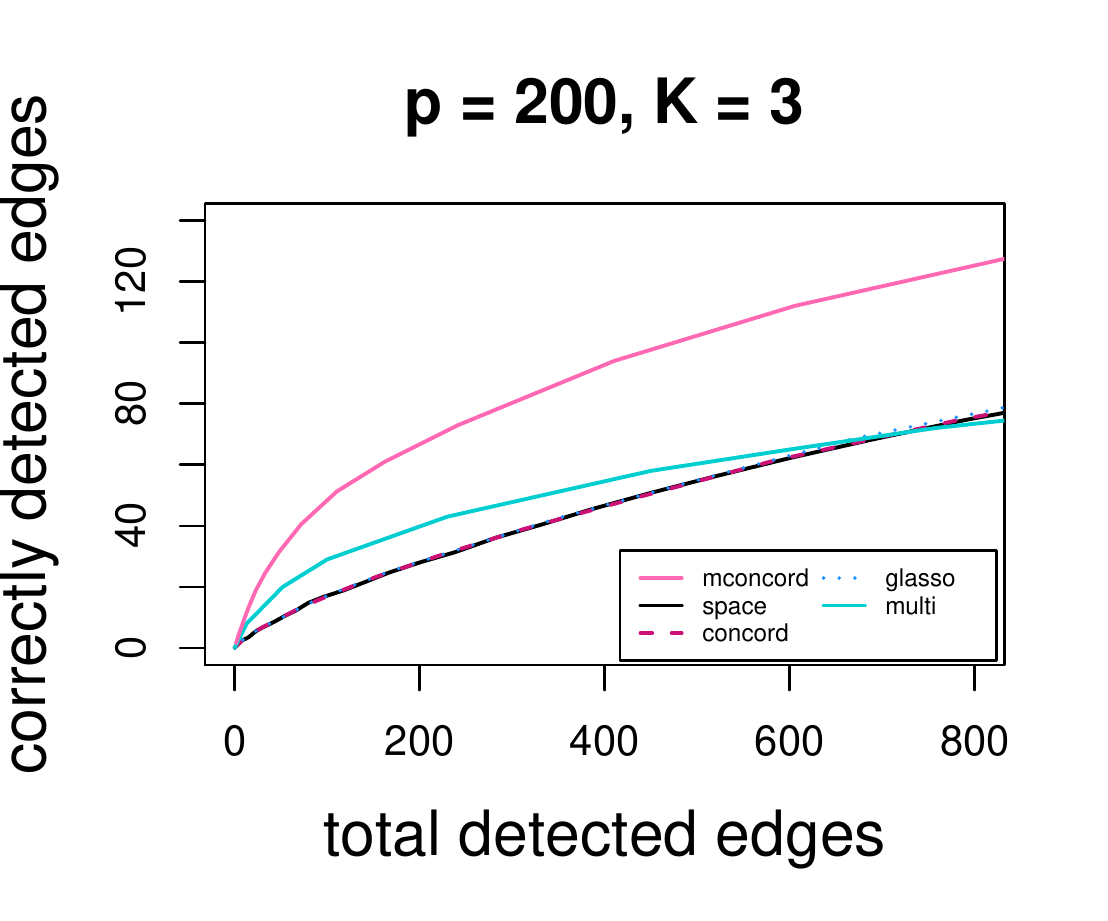}
		\caption{}
	\end{subfigure}
	\begin{subfigure}[t]{0.3\textwidth}
		\includegraphics[width=\textwidth]{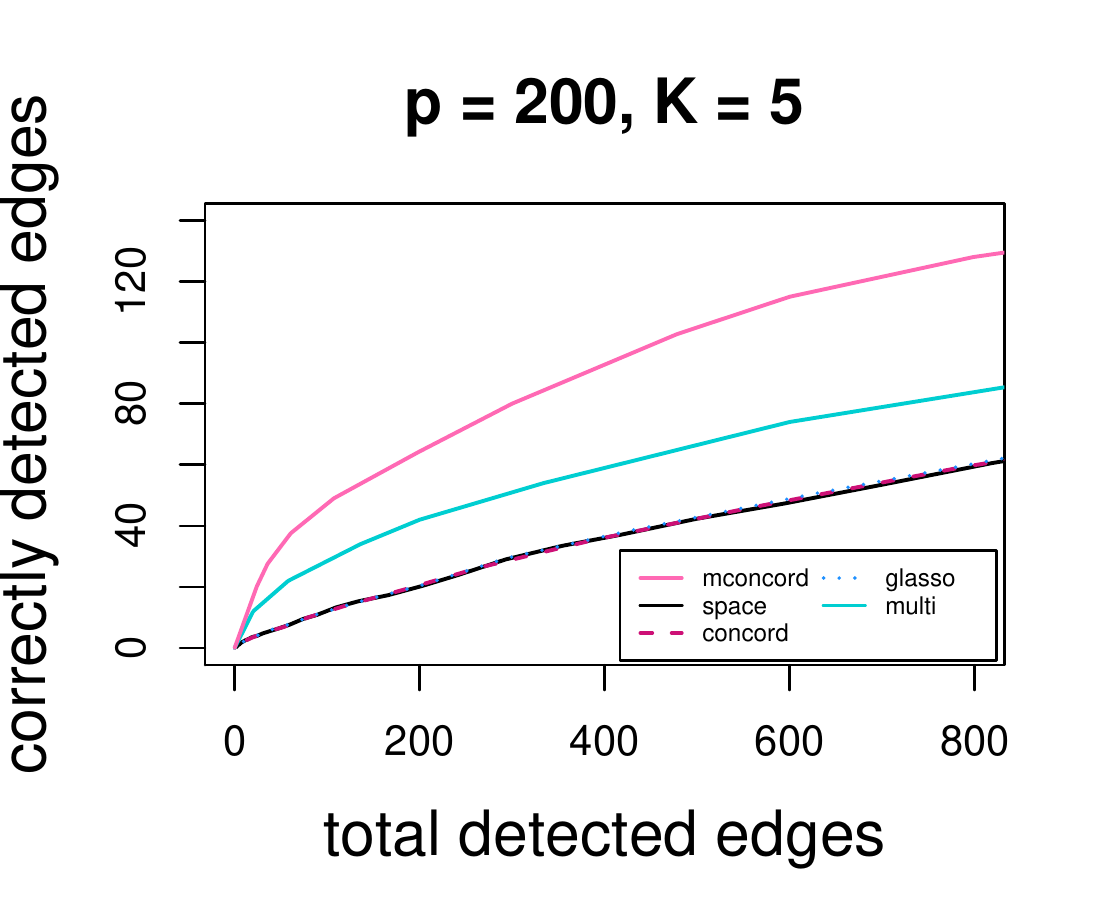}
		\caption{}
	\end{subfigure}
	\begin{subfigure}[t]{0.3\textwidth}
		\includegraphics[width=\textwidth]{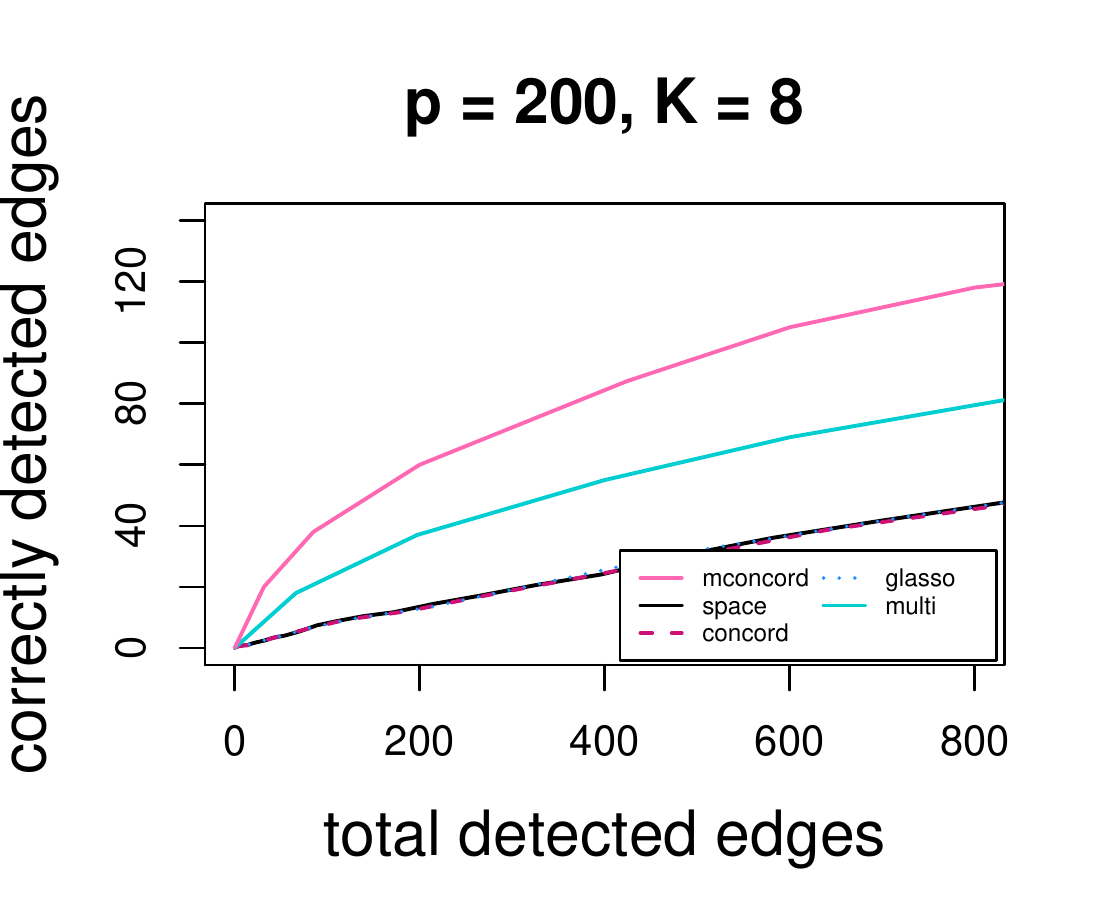}
		\caption{}
	\end{subfigure}
	\caption{Estimation accuracy comparison: total detected edges vs. correctly detected edges with 412 true edges (2\%): (a) $K=3$; (b) $K=5$; (c) $K=8$}
\end{figure}

\begin{figure}[H]
	\label{fig5}
	\centering
	\begin{subfigure}[t]{0.3\textwidth}
		\includegraphics[width=\textwidth]{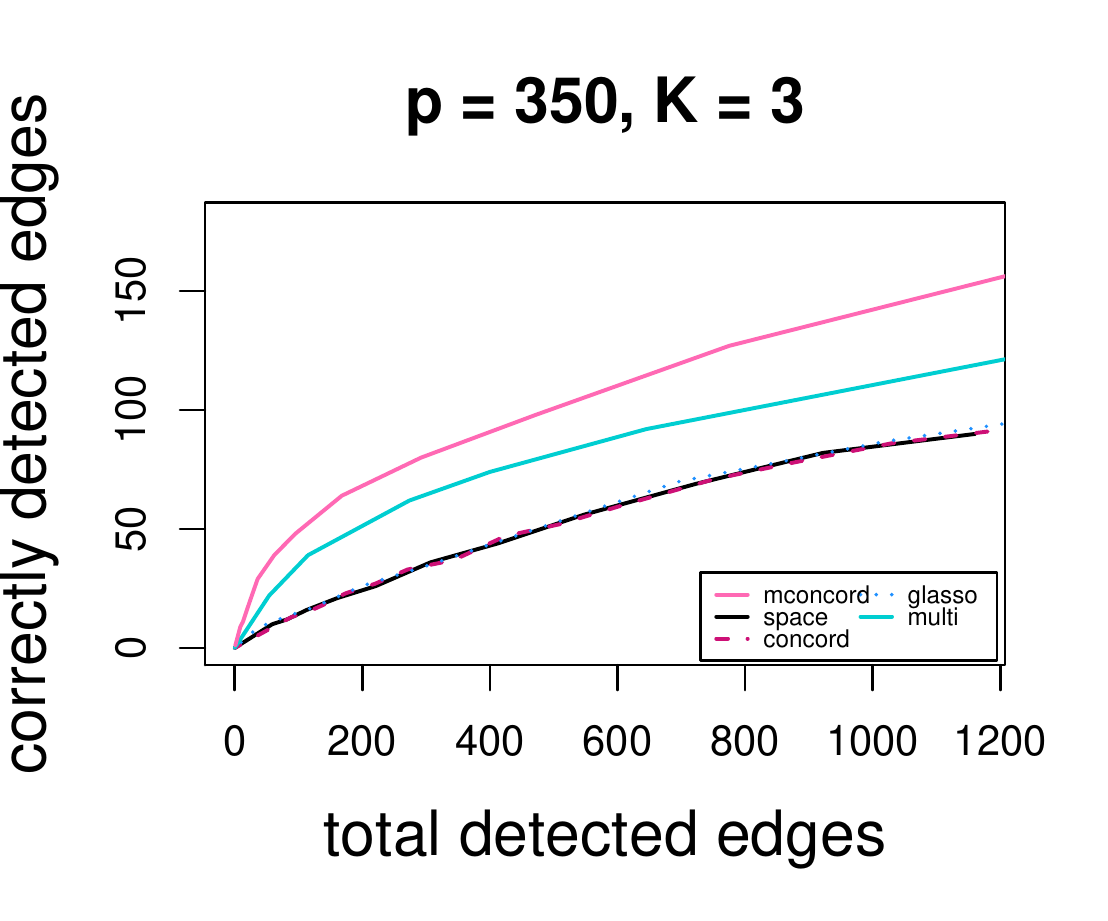}
		\caption{}
	\end{subfigure}
	\begin{subfigure}[t]{0.3\textwidth}
		\includegraphics[width=\textwidth]{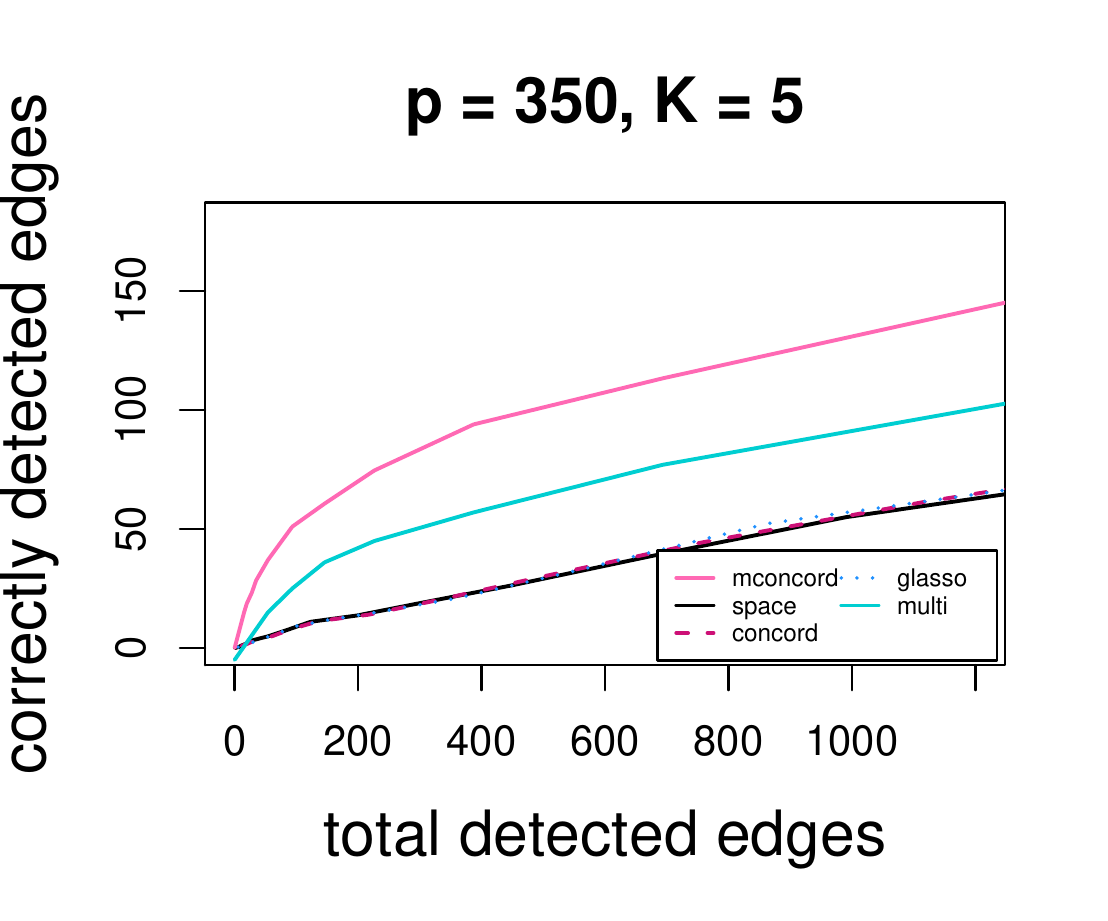}
		\caption{}
	\end{subfigure}
	\begin{subfigure}[t]{0.3\textwidth}
		\includegraphics[width=\textwidth]{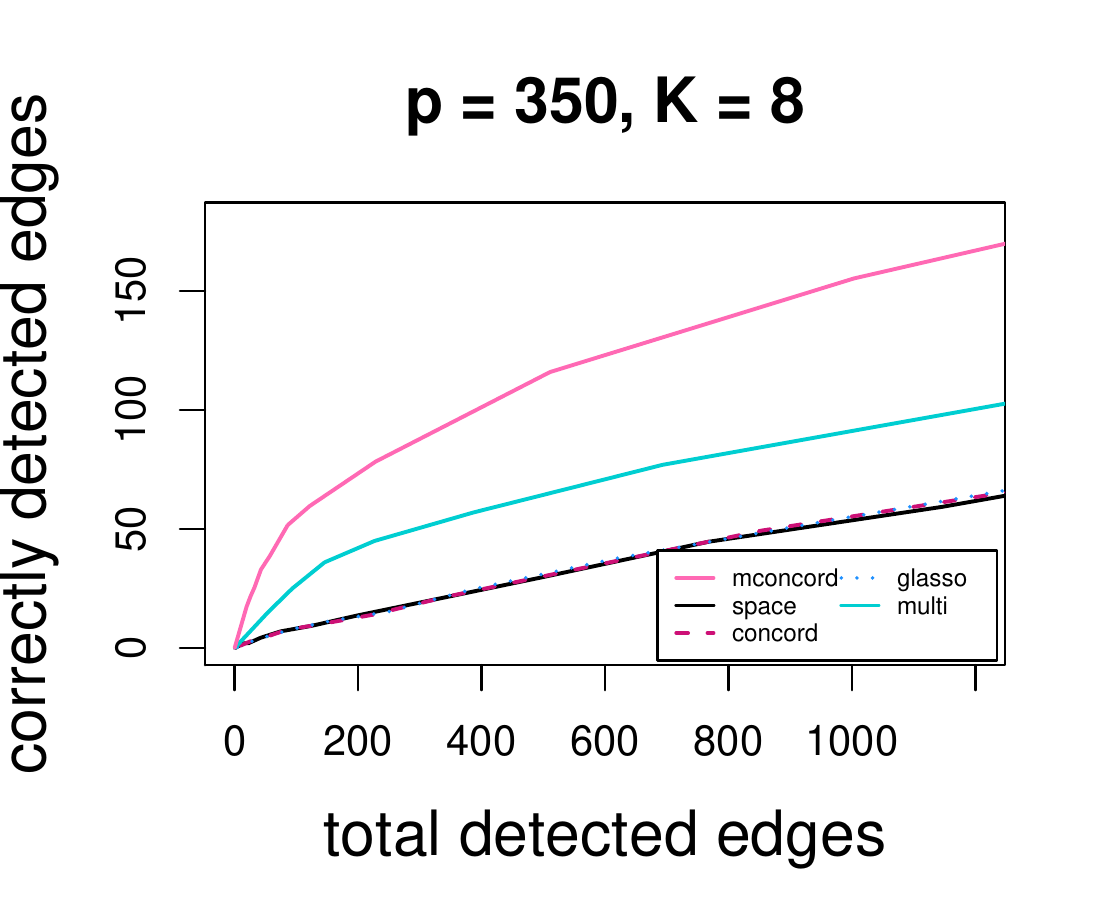}
		\caption{}
	\end{subfigure}
	\caption{Estimation accuracy comparison: total detected edges vs. correctly detected edges with 1250 true edges (2\%): (a) $K=3$; (b) $K=5$; (c) $K=8$}
\end{figure}

We can observe that for all methods, $N_t$ decreases when we increase $\lambda$. It can be seen that \texttt{mconcord} consistently outperforms its counterparts, as it detects more correct edges than the other methods for the same number of total edges detected, especially when we have large $K$ or large $p$. In all scenarios, \texttt{space}, \texttt{concord} and \texttt{glasso} give very similar results. With large $K$ and $p$, \texttt{multi} performs better than univariate methods. 

The better performance of \texttt{moncord} over \texttt{space}, \texttt{concord} and \texttt{glasso} is largely due to the fact that \texttt{mconcord} is designed for multivariate network, and treating the precision matrix by different blocks is more likely to catch an edge even when the signal is comparably weak. On the contrary, the univariate approaches tend to select more unwanted edges since there is high probability that there is at least on non-zero element in the block due to randomness.

In high dimensional settings, regression based methods have simpler quadratic loss function and are computationally faster and more efficient than that of penalized likelihood methods, which optimize with respect to the entire precision matrix at once. The running time for \texttt{mconcord} is about one-third of that for \texttt{multi}. The higher numerical accuracy of regression based methods over penalized likelihood methods were often observed in the univariate setting, and hence is expected to continue in the multivariate setting as well.

\subsection{Model Selection Comparison}
Next in the second study, we compare the model selection performance of the above approaches. We fix $K=4$, and conduct simulation studies for several combinations of $n$ and $p$ with different densities which vary from 41\% to 1\%. The precision matrices are generated using the same technique as in the first study. The tuning parameter $\lambda$ is selected using a 5-fold cross-validation for all methods. We also studied the performance of the Bayesian Information Criterion (BIC) for model selection, but it seems that BIC does not work in the multi dimensional settings. In fact, BIC in most cases tends to choose the smallest model where no edge can be detected.  Here we compare sensitivity (TPR), precision (PPV) and Matthew's Correlation Coefficient (MCC) defined by 
$$
\text{TPR} = \frac{\text{TP}}{\text{TP} + \text{FN}}, 
\text{PPV} = \frac{\text{TP}}{\text{TP} + \text{FP}}, 
\text{MCC} = \frac{\text{TP}\times\text{TN}-\text{FP}\times\text{FN}}{\sqrt{(\text{TP}+\text{FP})(\text{TP}+\text{FN})(\text{TN}+\text{FP})(\text{TN}+\text{FN})}},
$$
where TP, TN, FP and FN denote true positives (number of edges correctly detected), true negatives (number of edges correctly excluded), false positives (number of edges detected but absent in the true model) and false negatives (number of edges falsely excluded). For each network, all final numbers are averaged across 30 independent datasets.

\begin{table}[H]
\label{table1}
	\centering
	\caption{Model selection comparison with $p$ the number of nodes, $q$ the number of true edges and $n$ the sample size with the tuning parameter $\lambda$ optimized by cross-validation. Cases considered below are (i) $p=30$, $q=177$ (41\% density) (ii) $p=50$, $q=137$ (11\% density), (iii) $p=100$, $q=419$ (8\% density), (iv) $p=200$, $q=617$ (3\% density), (v) $p=400$, $q=782$ (1\% density) where the density is $100q/\binom{p}{2}$ in percentage.}
	\begin{tabular}{cccccccc}\hline\hline
		  & $n$ & & \texttt{mconcord} & \texttt{space} & \texttt{concord} & \texttt{glasso} & \texttt{multi}\\\hline
		  \multirow{3}{*}{(i)} & \multirow{3}{*}{50} & $N_t$($N_c$) & 58(34) & 70(35)& 85(42) & 378(157) & 217(89)\\
		  & & TPR(PPV) & 0.19(0.57) & 0.20(0.50) & 0.24(0.49) & 0.89(0.42) & 0.50(0.41)\\
		  & & MCC & 0.14 & 0.08 & 0.09 & 0.04 & 0.01\\\hline
		\multirow{6}{*}{(ii)} & \multirow{3}{*}{50} & $N_t$($N_c$) & 105(57) & 47(10)& 46(9) & 805(105) & 612(69)\\
		& & TPR(PPV) & 0.42(0.54) & 0.07(0.21) & 0.07(0.20) & 0.77(0.13) & 0.50(0.11)\\
		& & MCC & 0.42 & 0.06 & 0.05 & 0.08 & 0.01\\
		& \multirow{3}{*}{100} & $N_t$($N_c$) & 191(64) & 286(58)& 280(59) & 923(122) & 525(69)\\
		 & & TPR(PPV) & 0.47(0.34) & 0.42(0.20) & 0.43(0.21) & 0.89(0.13) & 0.50(0.13)\\
		 & & MCC & 0.30 & 0.16 & 0.17 & 0.11 & 0.05\\\hline
		\multirow{6}{*}{(iii)} & \multirow{3}{*}{100} & $N_t$($N_c$) & 248(87) & 202(40)& 267(51)& 2389(274)& 2501(211)\\
		 & & TPR(PPV) & 0.21(0.35) & 0.10(0.20) & 0.12(0.19) & 0.65(0.11) & 0.50(0.08)\\
		 & & MCC & 0.22 & 0.08 & 0.09 & 0.10 & 0.00\\ 
		 & \multirow{3}{*}{200} & $N_t$($N_c$) & 613(200)& 814(170)& 1005(196)& 1066(204)& 2380(201)\\
		 & & TPR(PPV) & 0.48(0.33) & 0.41(0.21) & 0.47(0.20) & 0.49(0.19) & 0.48(0.08)\\
		 & & MCC & 0.33 & 0.20 & 0.20 & 0.20 & 0.00\\\hline
		\multirow{6}{*}{(iv)} & \multirow{3}{*}{100} & $N_t$($N_c$) & 481(112)& 84(12)& 133(18) & 5657(306)& 4797(240)\\
		 & & TPR(PPV) & 0.18(0.23) & 0.02(0.14) & 0.03(0.14) & 0.50(0.05) & 0.39(0.05)\\
		 & & MCC & 0.18 & 0.04 & 0.05 & 0.08 & 0.06\\
		 & \multirow{3}{*}{200} & $N_t$($N_c$) & 1250(300)& 892(143)& 976(151)& 6357(426)& 4392(226)\\
		 & & TPR(PPV) & 0.49(0.24) & 0.23(0.16) & 0.24(0.15) & 0.69(0.07) & 0.37(0.05) \\
		 & & MCC & 0.31 & 0.16 & 0.16 & 0.14 & 0.06\\\hline
		\multirow{6}{*}{(v)} & \multirow{3}{*}{100} & $N_t$($N_c$) & 764(129)& 31(3)& 54(6)& 14283(326)& 10229(259)\\
		 & & TPR(PPV) & 0.16(0.17) & 0.00(0.10) & 0.00(0.11) & 0.42(0.02) & 0.33(0.03)\\
		 & & MCC & 0.16 & 0.02 & 0.03 & 0.06 & 0.06\\
		 & \multirow{3}{*}{200} & $N_t$($N_c$) & 2063(378)& 396(62)& 404(53)& 16092(480)& 9648(240)\\
		 & & TPR(PPV) & 0.48(0.18) & 0.08(0.16) & 0.07(0.13) & 0.61(0.03) & 0.31(0.02)\\
		 & & MCC & 0.29 & 0.11 & 0.09 & 0.10 & 0.06\\\hline
	\end{tabular}
\end{table}

Table~1 shows that substantial gain is achieved by considering the multivariate aspect in \texttt{mconcord} compared with the univariate methods \texttt{space} and \texttt{concord} in regards of both sensitivity and precision, except for the case $p = 30$ and $n=50$ where these two methods score slightly better TPR due to more selection of edges. Both \texttt{glasso} and \texttt{multi} select very dense models in nearly all cases, and as a consequence their TPR are higher. However, in terms of MCC which accounts for both correct and incorrect selections, \texttt{mconcord} performs consistently better than all the other methods.

\section{Application}
\label{sec:application}
\subsection{Gene/Protein Network Analysis}
According to the NCI website https://dtp.cancer.gov/discovery\_development/nci-60, ``the US National Cancer Institute (NCI) 60 human tumor cell lines screening has greatly served the global cancer research community for more than 20 years. The screening method was developed in the late 1980s as an in vitro drug-discovery tool intended to supplant the use of transplantable animal tumors in anticancer drug screening. It utilizes 60 different human tumor cell lines to identify and characterize novel compounds with growth inhibition or killing of tumor cell lines, representing leukemia, melanoma and cancers of the lung, colon, brain, ovary, breast, prostate, and kidney cancers". 

We apply our method to a dataset from the well-known NCI-60 database, which consists of protein profiles (normalized reverse-phase lysate arrays for 94 antibodies) and gene profiles (normalized RNA microarray intensities from Human Genome U95 Affymetrix chip-set for more than 17000 genes). Our analysis will be restricted to a subset of 94 genes/proteins for which both types of profiles are available. These profiles are available across the same set of 60 cancer cell lines. Each gene-protein combination is represented by its Entrez ID, which is a unique identifier common for a protein and a corresponding gene that encodes this protein. 

Three networks are studied: a network based on protein measurements alone, a network based on gene measurements alone, and a gene-protein multivariate network. For protein alone and gene alone networks, we use \texttt{concord}, and for gene-protein network, we use \texttt{mconcord}. The tuning parameter $\lambda$ is selected using 5-fold cross-validation for all three networks. 

From the gene-protein network 531 edges are selected. For the protein network, 798 edges are selected and for the gene network, 784 edges are selected. Protein and gene-protein networks share 313 edges, while gene and gene-protein networks share 287 edges. However, protein and gene networks only share 167 edges. Table 2 provides summary statistics for these networks.

\begin{table}[H]
	\centering
	\caption{Summary statistics for protein, gene and gene-protein networks}
	\begin{tabular}{lrrr}\hline\hline
		& Protein network & Gene network & Gene-protein network\\\hline
		Number of edges & 798 & 784 & 531\\
		Density (\%) & 18 & 18 & 12\\
		Maximum degree & 24 & 24 & 20\\
		Average node degree & 16.98 & 16.68 & 11.30\\\hline
	\end{tabular}
\end{table}

In Table~3, we also list the top 20 most connected components for all three networks. Among them, the gene-protein network and the protein network share 11, the gene-protein network and the gene network share 10, while the protein network and the gene network share only 6. 

\begin{table}[H]
	\centering
	\caption{Top 20 most connected nodes for three networks (sorted by decreasing degrees)}
	\begin{tabular}{cl|cl|cl}\hline\hline
		\multicolumn{2}{c|}{Gene-protein network} & \multicolumn{2}{|c|}{Protein network} & \multicolumn{2}{|c}{Gene network} \\
		Entrez ID & Gene name & Entrez ID & Gene name & Entrez ID & Gene name\\\hline
		302 & ANXA2 & 4179 & CD46 & 2064 & ERBB2\\
		7280 & TUBB2A & 983 & CDK1 & 5605 & MAP2K2\\
		1398 & CRK & 3265 & HRAS & 307 & ANXA4\\
		4255 & MGMK & 3716 & JAK1 & 5578 & PRKCA\\
		5578 & PRKCA & 10270 & AKAP8 & 1173 & AP2M1\\
		5925 & RB1 & 354 & KLK3 & 1828 & DSG1\\
		9564 & BCAR1 & 1019 & CDK4 & 4179 & CD46\\
		307 & ANXA4 & 6776 & STAT5A & 9961 & MVP\\
		354 & KLK3 & 9564 & BCAR1 & 1000 & CDH2\\
		2064 & ERBB2 & 1398 & CRK & 2932 & GSK3B\\
		4163 & MCC & 3667 & IRS1 & 4176 & MCM7\\
		6778 & STAT6 & 4830 & NME1 & 4436 & MSH2\\
		7299 & TYR & 307 & ANXA4 & 5970 & RELA\\
		1173 & AP2M1 & 1173 & AP2M1 & 999 & CDH1\\
		983 & CDK1 & 2017 & CTTN & 1001 & CDH3\\
		1001 & CDH3 & 4255 & MGMT & 1398 & CRK\\
		1499 & CTNNB1 & 1001 & CDH3 & 2335 & FN1\\
		3716 & JAK1 & 1020 & CDK5 & 5925 & RB1\\
		4179 & CD46 & 3308 & HSPA4 & 7280 & TUBB2A\\
		4830 & NME1 & 4176 & MCM7 & 7299 & TYR\\\hline
	\end{tabular}
\end{table}

\subsection{GDP Network Analysis}
In this analysis, we apply our method to the regional GDP data obtained from U.S. Department of Commerce website https://www.bea.gov/index.html, which contains GDP data including the following 20 different industries with labels: \begin{enumerate*}[(1)] \item utilities (uti), \item construction (cons) , \item Manufacturing (manu), \item Durable goods manufacturing (durable), \item nondurable goods manufacturing (nondu), \item wholesale trade (wholesale), \item retail trade (retail), \item transportation and warehousing (trans), \item information (info), \item finance and insurance (finance), \item real estate and rental and leasing (real), \item professional, scientific and technical services (prof), \item management of companies and enterprises (manage), \item administrative and waste management services (admin), \item educational services (edu), \item health care and social assistance (health), \item arts, entertainment and recreation (arts), \item accommodation and food services (food), \item other services except government (other) and \item government (gov). \end{enumerate*}

The data is available from the first quarter of 2005 to the second quarter of 2016. Data from the third quarter of 2008 to the forth quarter of 2009 is eliminated to reduce the impact of the financial crisis of that period. The data is in 8 regions in the US, including New England (Connecticut, Maine, Massachusetts, New Hampshire, Rhode Island and Vermont), Mideast (Delaware, D.C., Maryland, New Jersey, New York and Pennsylvania), Great Lakes (Illinois, Indiana, Michigan, Ohio and Wisconsin), Plains (Iowa, Kansas, Minnesota, Missouri, Nebraska, North Dakota and South Dakota), Souteast (Alabama, Arkansas, Florida, Georgia, Kentucky, Louisiana, Mississippi, North Carolina, South Carolina, Tennessee, Virginia and West Virginia), Southwest (Arizona, New Mexico, Oklahoma and Texas), Rocky Mountain (Colorado, Idaho, Montana, Utah and Wyoming) and Far West (Alaska, California, Hawaii, Nevada, Oregon and Washington). 

We reduce correlation in the time series data by taking differences of the consecutive observations. A multivariate network consisting of 20 nodes and 8 attributes for each node is studied. After using 5-fold cross-validation to select the tuning parameter $\lambda$, 47 edges are detected, with density of 24.7\% and average node degree of 4.7. The 5 most connected industries are retail trade, transportation, wholesale trade, accommodation and food services, and professional and technical services. The network is shown in Figure~4(a). It is obvious to see hubs comprising of wholesale trade and retail trade. This is very natural for the consumer-driven economy of the US. Both of these two nodes are connected to transportation, as both of these industries heavily rely on transporting goods. Another noticeable fact is that education is connected with government. As part of the services provided by government, it is natural that the quality as well as GDP of educational services can both be influenced by government.

The univariate network using the nationwide GDP data only is also studied for comparison using \texttt{concord}. For the tuning parameter $\lambda$, 5-fold cross-validation is applied, and 95 networks are selected, with density of 50\% and average node degree of 9.5. The 5 most connected industries are administrative and waste management services, accommodation and food services, wholesale trade, professional and technical services and health care and social assistance. The network is shown in Figure~4(b). The more modest degree of connections in the multivariate network seems to be more interpretable.

\begin{figure}[H]
	\label{fig5}
	\centering
	\begin{subfigure}[t]{0.45\textwidth}
		\includegraphics[width=\textwidth]{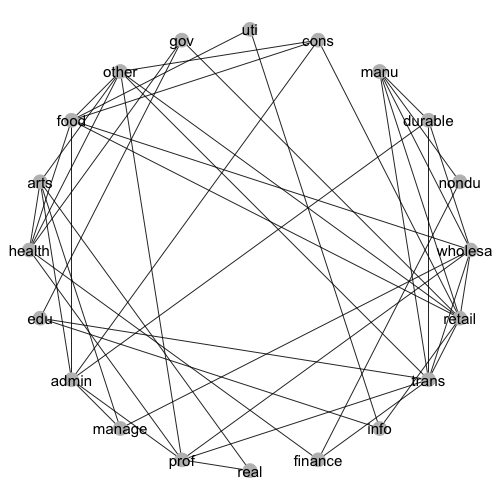}
		\caption{}
	\end{subfigure}
	\begin{subfigure}[t]{0.45\textwidth}
		\includegraphics[width=\textwidth]{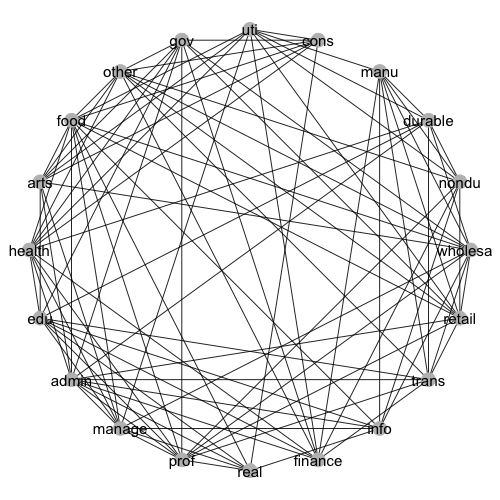}
		\caption{}
	\end{subfigure}
	\caption{Comparison of multivariate and univariate GDP networks}
\end{figure}

\section{Proof of the theorems}
\label{sec:proofs}

We rewrite \eqref{asy1} as $L(\omega,{\sigma},\bm Y)=\frac12\sum_{i=1}^p\sum_{k=1}^{K_i}w_{ik}\Big(Y_{ik}+\sum_{j\neq i}\sum_{l=1}^{K_j}\omega_{ijkl}\tilde{Y}_{jl}\Big)^2$, where $\tilde{Y}_{ik}=Y_{ik}/\sigma^{ik}$. 

For any subset $S\subset T$, the Karush-Kuhn-Tucker (KKT) condition characterizes a solution of the optimization problem
$$
\arg\min\limits_{\omega:\omega_{S^c}=0} \left\{L_n(\omega,\hat{\sigma},Y)+\lambda_n\sum\limits_{1\leq i<j\leq p}\sqrt{\sum\limits_{k=1}^{K_i}\sum\limits_{l=1}^{K_j}\omega_{ijkl}^2}\right\}.
$$
A vector $\hat{\omega}$ is a solution if and only if for any $(i,j,k,l)\in S$
\begin{align*}
&
L'_{n,ijkl}(\hat{\omega},\hat{\sigma},Y) = -\lambda_n\frac{\hat{\omega}_{ijkl}}{\sqrt{\sum\limits_{k',l'}\hat{\omega}_{ijk'l'}^2}},
& \text{if }\exists1\leq k\leq K_i, 1\leq l\leq K_j, \hat{\omega}_{ijkl}\neq0\\
&
\lvert L'_{n,ijkl}(\hat{\omega},\hat{\sigma},Y)\rvert\leq\lambda_n, 
& \text{if }\hat{\omega}_{ijkl}=0, k = 1,\dots,K_i,l=1,\dots,K_j. 
\end{align*}

The following lemmas will be needed in the proof of Theorems~\ref{restricted problem}--\ref{consistency}. Their proofs are deferred to the Appendix.
 
\begin{lemma} 
\label{lemma1} 
The following properties hold.
\begin{enumerate} 
\item [(i)] For all ${\omega}$ and ${\sigma}$, $L(\omega, \sigma,\bm Y)\geq0$. 
\item [(ii)] If $\sigma^{ik}>0$ for all $1\le k\le K_i$ and $i=1,\ldots,p$, then  $L(\cdot,\sigma,Y)$ is convex in $\omega$ and is strictly convex with probability one. 
\item [(iii)] For every index $(i,j,k,l)$ with $i\ne j$,  
$\bar{L}'_{ijkl}(\bar{\omega},\bar{\sigma})=0$.
\item [(iv)] All entries of $\bar{\Sigma}$ are bounded and bounded below. Also, there exist constants $0<\bar{\sigma}_0\leq\bar{\sigma}_{\infty}<\infty$ such that $$\bar{\sigma}_0\leq\min\{\bar{\sigma}^{ik}:1\leq i\leq p,1\leq k\leq K_i\}\leq\max\{\bar{\sigma}^{ik}:1\leq i\leq p,1\leq k\leq K_i\}\leq\bar{\sigma}_{\infty}.$$
\item [(v)] There exists constants  $0<\Lambda^L_{\min}(\bar{\omega},\bar{\sigma})\leq\Lambda^L_{\max}(\bar{\omega},\bar{\sigma})<\infty$, such that
$$
0<\Lambda^L_{\min}(\bar{\omega},\bar{\sigma})\leq\lambda_{\min}(\bar{L}''(\bar{\omega},\bar{\sigma}))\leq\lambda_{\max}(\bar{L}''(\bar{\omega},\bar{\sigma}))\leq\Lambda^L_{\max}(\bar{\omega},\bar{\sigma})<\infty.
$$
\end{enumerate}
\end{lemma}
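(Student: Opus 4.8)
The plan is to dispatch the five assertions in order of increasing analytic content, since (i) and (iv) are structural while (iii) and (v) carry the real work. For (i), note that $L$ is a nonnegative weighted (recall $w_{ik}\ge w_0>0$) sum of squares, so $L\ge 0$ pointwise and no probabilistic input is needed. For (ii), fix $\sigma$ with all $\sigma^{ik}>0$: each inner expression $Y_{ik}+\sum_{j\ne i}\sum_l(\omega_{ijkl}/\sigma^{ik})Y_{jl}$ is affine in $\omega$, its square is convex, and a nonnegative combination of convex functions is convex. For strict convexity I would observe that $L$ is quadratic in $\omega$ with a constant Hessian equal to a Gram matrix assembled from the observation vectors $\{\bm{Y}_{jl}\}$ according to the regression structure; this Gram matrix is positive definite precisely when those vectors do not lie in a common proper subspace. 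Since $\bm{Y}$ has an absolutely continuous (nondegenerate, by (C1)) law, the probability of any fixed linear degeneracy is zero, so once the sample size exceeds the parameter dimension the Hessian is positive definite with probability one.

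For (iii) I would differentiate $L$ with respect to $\omega_{ijkl}$. Because of the symmetry $\omega_{ijkl}=\omega_{jilk}$, the parameter enters both the $(i,k)$ and the $(j,l)$ regression, so $\partial L/\partial\omega_{ijkl}$ is a sum of two residual-times-predictor products, $(w_{ik}/\sigma^{ik})R_{ik}Y_{jl}$ and $(w_{jl}/\sigma^{jl})R_{jl}Y_{ik}$, where $R_{ik}=Y_{ik}+\sum_{j'\ne i}\sum_{l'}(\omega_{ij'kl'}/\sigma^{ik})Y_{j'l'}$. Evaluating at $(\bar\omega,\bar\sigma)$ and taking expectations, each product vanishes by the normal-equation (orthogonality) property of the population regression that defines $\bar\omega$; concretely this is the Gaussian identity $\bar\Omega\bar\Sigma=I$, which makes the true residual uncorrelated with its predictors. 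Hence $\bar{L}'_{ijkl}(\bar\omega,\bar\sigma)=0$.

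For (iv), everything follows from (C1) by Rayleigh-quotient bounds. Testing $\bar\Sigma$ against coordinate vectors shows every diagonal entry lies in $[\Lambda_{\min},\Lambda_{\max}]$, and Cauchy--Schwarz then bounds each off-diagonal entry by $\Lambda_{\max}$ in absolute value, giving boundedness together with the lower bound on the variances. Since $\bar\Omega=\bar\Sigma^{-1}$ has eigenvalues in $[\Lambda_{\max}^{-1},\Lambda_{\min}^{-1}]$, the same Rayleigh argument applied to $\bar\Omega$ places its diagonal entries $\bar\sigma^{ik}=\bar\omega_{iikk}$ in that interval, so one may take $\bar\sigma_0=\Lambda_{\max}^{-1}$ and $\bar\sigma_\infty=\Lambda_{\min}^{-1}$.

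The main obstacle is (v), the two-sided spectral bound on the expected Hessian $\bar{L}''(\bar\omega,\bar\sigma)$. Using the entrywise formula $\bar{L}''_{ijkl,i'j'k'l'}=\sigma_{jl,j'l'}+\sigma_{ik,i'k'}$ (up to the bounded weight and $\sigma^{ik}$ factors), I would write the quadratic form $u^T\bar{L}''u$ as a weighted sum over regressions $(i,k)$ of terms $\mathrm{Var}(\sum_{j\ne i,l}u_{ijkl}Y_{jl})$, each a Rayleigh quotient of the principal submatrix $\bar\Sigma_{-i,-i}$ of $\bar\Sigma$. By Cauchy interlacing these submatrices inherit the eigenvalue bounds $[\Lambda_{\min},\Lambda_{\max}]$, and combining with the weight bounds in (C0) and the bounds on $\sigma^{ik}$ from part (iv) yields explicit constants $\Lambda^L_{\min},\Lambda^L_{\max}$. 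The delicate point, where I expect to spend the most effort, is the lower bound: because the symmetry constraint ties each $\omega_{ijkl}$ to two regressions and because the loss omits within-node predictors, $\bar{L}''$ is not block diagonal, so I must reorganize $u^T\bar{L}''u$ (for instance by splitting it into the two symmetric half-sums) so that every coordinate of $u$ is captured with a coefficient bounded below by a positive multiple of $\Lambda_{\min}$, thereby certifying positive definiteness uniformly in $n$.
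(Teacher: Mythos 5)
Your proposal matches the paper's own proof in all essentials: (i)--(ii) as immediate consequences of the sum-of-squares form, (iii) via writing the score as residual-times-predictor terms whose expectations vanish at $(\bar\omega,\bar\sigma)$ by orthogonality of the true residuals, (iv) via Rayleigh-quotient bounds under (C1) (you additionally spell out the precision-matrix half giving $\bar\sigma_0=\Lambda_{\max}^{-1}$, $\bar\sigma_\infty=\Lambda_{\min}^{-1}$, which the paper leaves implicit), and (v) via decomposing $u^T\bar{L}''(\bar\omega,\bar\sigma)u$ into per-regression variance terms and using that each coordinate of $u$ enters exactly two regressions, so that the quadratic form is bounded between positive multiples of $\lambda_{\min}(\bar\Sigma)$ and $\lambda_{\max}(\bar\Sigma)$ --- exactly the paper's $2w_0\lambda_{\min}(\tilde\Sigma)$ and $2w_\infty\lambda_{\max}(\tilde\Sigma)$ bounds, with your Cauchy-interlacing phrasing equivalent to the paper's zero-padded vectors $a_{ik}$. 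No substantive differences or gaps relative to the paper's argument.
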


 \begin{lemma} 
 \label{lemma2}
 \begin{enumerate} 
 	
\item [(i)] There exists a constant $N<\infty$, such that for all $1\leq i\neq j\leq p$ and $1\leq k\leq K_i$, $1\leq l\leq K_j$,  $\bar{L}''_{ijkl,ijkl}(\bar{\omega},\bar{\sigma})\leq N$. 

\item [(ii)] There exists constants $M_1,M_2<\infty$, such that for any $1\leq i< j\leq p$,
$$
\mathrm{Var}(L'_{ijkl}(\bar{\omega},\bar{\sigma},Y))\leq M_1,\quad \mathrm{Var}(L''_{ijkl,ijkl}(\bar{\omega},\bar{\sigma},Y))\leq M_2.
$$
\item [(iii)] There exists a positive constant $g$, such that for all $(i,j,k,l)\in \mathcal{A}$,
$$
L''_{ijkl,ijkl}(\bar{\omega},\bar{\sigma})-L''_{ijkl,\mathcal{A}_{-ijkl}}(\bar{\omega},\bar{\sigma})\Big[L''_{\mathcal{A}_{-ijkl},\mathcal{A}_{-ijkl}}(\bar{\omega},\bar{\sigma})\Big]^{-1}L''_{\mathcal{A}_{ijkl},ijkl}(\bar{\omega},\bar{\sigma})\geq g,
$$
where $\mathcal{A}_{-ijkl}=\mathcal{A}\setminus\{(i,j,k,l)\}$. 
\item [(iv)] For any $(i,j,k,l)\in \mathcal{A}^c$,
$
\lVert\bar{L}''_{ijkl,\mathcal{A}}(\bar{\omega},\bar{\sigma})[\bar{L}''_{\mathcal{A},\mathcal{A}}(\bar{\omega},\bar{\sigma})]^{-1}\rVert_2\leq M_3.
$ for some constant $M_3$. 
\end{enumerate}
\end{lemma}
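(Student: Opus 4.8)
The plan is to base everything on the explicit form of the derivatives of the quadratic loss \eqref{asy1}. Writing $R_{ac}=Y_{ac}+\sum_{b\ne a}\sum_d\omega_{abcd}\tilde Y_{bd}$ for the residual attached to node $a$, component $c$, we have $L=\frac12\sum_{a,c}w_{ac}R_{ac}^2$, and since each $R_{ac}$ is \emph{linear} in $\omega$, a short computation gives $L'_{ijkl}=w_{ik}R_{ik}\tilde Y_{jl}+w_{jl}R_{jl}\tilde Y_{ik}$ and, crucially, a Hessian that does not depend on $\omega$, namely $L''=\sum_{a,c}w_{ac}(\nabla_\omega R_{ac})(\nabla_\omega R_{ac})^T$. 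In particular $\bar L''(\bar\omega,\bar\sigma)=\E[\sum_{a,c}w_{ac}(\nabla_\omega R_{ac})(\nabla_\omega R_{ac})^T]$ is the Gram matrix of the coordinate functions $\Phi_\alpha:=(\partial R_{ac}/\partial\omega_\alpha)_{a,c}$ under the inner product $\langle\Phi_\alpha,\Phi_\beta\rangle=\E[\sum_{a,c}w_{ac}\Phi_{\alpha,ac}\Phi_{\beta,ac}]$, so that $\bar L''_{\alpha\beta}=\langle\Phi_\alpha,\Phi_\beta\rangle$. This Gram structure, together with the moment and eigenvalue bounds already recorded in Lemma~\ref{lemma1}, is what I would exploit throughout.

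For part (i) I would simply read off $\bar L''_{ijkl,ijkl}=w_{ik}\E[\tilde Y_{jl}^2]+w_{jl}\E[\tilde Y_{ik}^2]$. Each second moment equals a diagonal entry of $\bar{\bm\Sigma}$ divided by $(\bar\sigma^{\cdot\cdot})^2$; by Lemma~\ref{lemma1}(iv) these entries are bounded and $\bar\sigma^{ik}\ge\bar\sigma_0>0$, while the weights obey $w_{ik}\le w_\infty$ by (C0), giving a uniform bound $N$. For part (ii), note that both $L'_{ijkl}(\bar\omega,\bar\sigma,Y)$ and $L''_{ijkl,ijkl}(\bar\omega,\bar\sigma,Y)$ are quadratic forms in the mean-zero Gaussian vector $Y$, so by Isserlis' theorem their variances are finite sums of products of covariances of the constituent linear forms $R_{ik},\tilde Y_{jl}$, etc. It therefore suffices to bound the variances of these linear forms uniformly: $\mathrm{Var}(\tilde Y_{jl})$ is bounded as in part (i), and $\mathrm{Var}(R_{ik})=c^T\bar{\bm\Sigma}\,c$ where the coefficient vector $c$ has entries $1$ and $\bar\omega_{ij'kl'}/\bar\sigma^{j'l'}$; since the $(ik)$-row of $\bar{\bm\Omega}$ has $\ell_2$-norm at most $\|\bar{\bm\Omega}\|=1/\lambda_{\min}(\bar{\bm\Sigma})\le 1/\Lambda_{\min}$ and $\bar\sigma^{j'l'}\ge\bar\sigma_0$, we get $\|c\|_2^2\le 1+\Lambda_{\min}^{-2}\bar\sigma_0^{-2}$ and hence $\mathrm{Var}(R_{ik})\le\Lambda_{\max}\|c\|_2^2$. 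Combining with (C0) yields the constants $M_1,M_2$.

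Part (iii) is a purely linear-algebraic consequence of Lemma~\ref{lemma1}(v). The displayed quantity is the Schur complement of the principal block $\bar L''_{\mathcal A_{-ijkl},\mathcal A_{-ijkl}}$ inside $\bar L''_{\mathcal A,\mathcal A}$, which by the standard identity equals $1/\big[(\bar L''_{\mathcal A,\mathcal A})^{-1}\big]_{ijkl,ijkl}$. Since $\big[(\bar L''_{\mathcal A,\mathcal A})^{-1}\big]_{ijkl,ijkl}\le\lambda_{\max}\big((\bar L''_{\mathcal A,\mathcal A})^{-1}\big)=1/\lambda_{\min}(\bar L''_{\mathcal A,\mathcal A})$ and $\bar L''_{\mathcal A,\mathcal A}$ is a principal submatrix of $\bar L''$, the Rayleigh-quotient/interlacing bound gives $\lambda_{\min}(\bar L''_{\mathcal A,\mathcal A})\ge\lambda_{\min}(\bar L'')\ge\Lambda^L_{\min}$, so the Schur complement is at least $g:=\Lambda^L_{\min}>0$.

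Part (iv) is the one I expect to be the real obstacle, because the naive estimate $\|\bar L''_{ijkl,\mathcal A}(\bar L''_{\mathcal A,\mathcal A})^{-1}\|_2\le\|\bar L''_{ijkl,\mathcal A}\|_2/\lambda_{\min}(\bar L''_{\mathcal A,\mathcal A})$ is useless: the row $\bar L''_{ijkl,\mathcal A}$ has $|\mathcal A|$ entries and its norm may grow with the dimension. The way out is to use the Gram-matrix structure from the first paragraph. Writing $v=\bar L''_{ijkl,\mathcal A}$, $A=\bar L''_{\mathcal A,\mathcal A}$ and $w=A^{-1}v^T$, the vector $w$ is exactly the coefficient vector of the orthogonal projection of $\Phi_{ijkl}$ onto $\mathrm{span}\{\Phi_\beta:\beta\in\mathcal A\}$, whence $w^T A w=vA^{-1}v^T=\|\text{projection}\|^2\le\|\Phi_{ijkl}\|^2=\bar L''_{ijkl,ijkl}\le N$ by part (i). Finally $\|\bar L''_{ijkl,\mathcal A}(\bar L''_{\mathcal A,\mathcal A})^{-1}\|_2^2=\|w\|_2^2\le w^TAw/\lambda_{\min}(A)\le N/\Lambda^L_{\min}$, so $M_3=\sqrt{N/\Lambda^L_{\min}}$ works. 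The only subtlety to check carefully is that $\bar L''$ is genuinely the Gram matrix of the $\Phi_\alpha$ (so that ``projection shrinks norm'' applies), which is where the linearity of $R_{ac}$ in $\omega$ is used.
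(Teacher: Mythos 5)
Your proof is correct, and for parts (i) and (iii) it is essentially the paper's argument: the explicit entrywise form of $\bar L''$ plus the boundedness of the entries of $\bar{\Sigma}$ from Lemma~\ref{lemma1}(iv), and the identity that the Schur complement equals the reciprocal of the corresponding diagonal entry of $[\bar L''_{\mathcal A,\mathcal A}]^{-1}$, bounded via Lemma~\ref{lemma1}(v). The differences are in (ii) and (iv). For (ii), the paper writes $\mathrm{Var}(L'_{ijkl})$ as $\mathrm{Var}(w_{ik}e_{ik}Y_{jl})+\mathrm{Var}(w_{jl}e_{jl}Y_{ik})$ and evaluates the fourth moments using the claim that the residual $e_{ik}(\bar\omega,\bar\sigma)$ is independent of $Y_{jl}$; your Isserlis/Wick argument needs only uniform variance bounds on the constituent linear forms $R_{ik}$, $\tilde Y_{jl}$, and therefore sidesteps that independence claim entirely --- a genuine advantage, since in the multivariate-node setting the loss omits the within-node cross terms, so $e_{ik}(\bar\omega,\bar\sigma)$ is not the full regression residual and its exact independence from the regressors is delicate; your bound $\mathrm{Var}(R_{ik})\le\Lambda_{\max}\bigl(1+\Lambda_{\min}^{-2}\bar\sigma_0^{-2}\bigr)$ via the operator-norm bound on the rows of $\bar{\Omega}$ is sound. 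For (iv), both proofs hinge on the same key inequality $vA^{-1}v^T\le\bar L''_{ijkl,ijkl}\le N$, where $v=\bar L''_{ijkl,\mathcal A}$ and $A=\bar L''_{\mathcal A,\mathcal A}$: the paper gets it by observing that the Schur complement of $A$ in the augmented matrix $\bar L''_{\mathcal A^+,\mathcal A^+}$ is positive (indeed bounded away from zero, reusing the identity from (iii)), then bounds $\|v\|_2^2\le\Lambda^L_{\max}\,vA^{-1}v^T$ and finishes with $\|vA^{-1}\|_2^2\le\|v\|_2^2\,\lambda_{\max}(A^{-2})$; you get the same inequality as the projection (Bessel) inequality in the Gram-matrix geometry --- a structure the paper itself sets up in proving Lemma~\ref{lemma1}(v) --- and then conclude in a single step via $\|A^{-1}v^T\|_2^2\le(vA^{-1}v^T)/\lambda_{\min}(A)$. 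The two mechanisms are equivalent (positive semidefiniteness of the augmented Gram matrix is exactly the projection inequality), but your route is shorter and never requires bounding $\|v\|_2$ itself, which is precisely the quantity you correctly identified as the obstacle to the naive estimate.
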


\begin{lemma}
\label{lemma3}
There exists a constant $M_4<\infty$, such that for any $1\leq i\leq j\leq p$ and $1\leq k\leq K_i$, $1\leq l\leq K_j$, $\lVert\mathrm{E}[Y_{ik}Y_{jl}\tilde{Y}\tilde{Y}^T]\rVert\leq M_4$.
\end{lemma}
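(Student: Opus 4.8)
The plan is to read $\lVert\cdot\rVert$ as the spectral (operator) norm—consistent with its use in Lemma~\ref{lemma2}(iv) and with the eigenvalue bounds appearing elsewhere—and to exploit that $Y$ is a centered Gaussian vector, so that its fourth moment factors through second moments. First I would write $\tilde{Y}=DY$, where $D=\mathrm{diag}(1/\sigma^{ik})$ is diagonal, and apply Isserlis' (Wick's) theorem entrywise to $\mathrm{E}[Y_{ik}Y_{jl}\tilde{Y}\tilde{Y}^T]$. Writing a generic coordinate pair as $a,b$ with $\tilde{Y}_a=Y_a/\sigma_a$, this gives
\[
\mathrm{E}[Y_{ik}Y_{jl}\tilde{Y}_a\tilde{Y}_b]
=\mathrm{E}[Y_{ik}Y_{jl}]\,\mathrm{E}[\tilde{Y}_a\tilde{Y}_b]
+\mathrm{E}[Y_{ik}\tilde{Y}_a]\,\mathrm{E}[Y_{jl}\tilde{Y}_b]
+\mathrm{E}[Y_{ik}\tilde{Y}_b]\,\mathrm{E}[Y_{jl}\tilde{Y}_a].
\]
In matrix form this is the clean decomposition
\[
\mathrm{E}[Y_{ik}Y_{jl}\tilde{Y}\tilde{Y}^T]=\sigma_{ik,jl}\,\tilde{\Sigma}+uv^T+vu^T,
\]
where $\tilde{\Sigma}=\mathrm{E}[\tilde{Y}\tilde{Y}^T]=D\bar{\Sigma}D$, $u=\mathrm{E}[Y_{ik}\tilde{Y}]=D\bar{\Sigma}e_{ik}$, and $v=\mathrm{E}[Y_{jl}\tilde{Y}]=D\bar{\Sigma}e_{jl}$, with $e_{ik}$ the standard basis vector at coordinate $(i,k)$.

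Next I would bound each of the three symmetric pieces in operator norm. By Condition~(C1), $\lVert\bar{\Sigma}\rVert=\lambda_{\max}(\bar{\Sigma})\leq\Lambda_{\max}$, and by Lemma~\ref{lemma1}(iv) every scale satisfies $\sigma^{ik}\geq\bar{\sigma}_0>0$, so $\lVert D\rVert\leq 1/\bar{\sigma}_0$. Consequently $\lVert\tilde{\Sigma}\rVert\leq\lVert D\rVert^2\lVert\bar{\Sigma}\rVert\leq\Lambda_{\max}/\bar{\sigma}_0^2$, while $|\sigma_{ik,jl}|\leq\lVert\bar{\Sigma}\rVert\leq\Lambda_{\max}$ since any entry is dominated by the spectral norm; hence $\lVert\sigma_{ik,jl}\tilde{\Sigma}\rVert\leq\Lambda_{\max}^2/\bar{\sigma}_0^2$. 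For the rank-one pieces, $\lVert u\rVert\leq\lVert D\rVert\,\lVert\bar{\Sigma}\rVert\,\lVert e_{ik}\rVert\leq\Lambda_{\max}/\bar{\sigma}_0$ and likewise $\lVert v\rVert\leq\Lambda_{\max}/\bar{\sigma}_0$, so $\lVert uv^T+vu^T\rVert\leq 2\lVert u\rVert\,\lVert v\rVert\leq 2\Lambda_{\max}^2/\bar{\sigma}_0^2$. Adding the three bounds by the triangle inequality then yields $\lVert\mathrm{E}[Y_{ik}Y_{jl}\tilde{Y}\tilde{Y}^T]\rVert\leq 3\Lambda_{\max}^2/\bar{\sigma}_0^2=:M_4$, a constant free of $(i,j,k,l)$ and of $n$ and $p$.

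The step that requires care—and the reason the lemma holds at all—is that uniformity over the growing ambient dimension is obtained entirely through operator-norm (equivalently eigenvalue) control, never through entrywise summation. A naive Frobenius-norm estimate of $\tilde{\Sigma}$ or of the columns $\bar{\Sigma}e_{ik}$ would grow with $\sum_i K_i$ and give no uniform constant; the decisive facts are that $\lVert\bar{\Sigma}e_{ik}\rVert\leq\lVert\bar{\Sigma}\rVert$ and $\lVert D\bar{\Sigma}D\rVert\leq\lVert D\rVert^2\lVert\bar{\Sigma}\rVert$ are both governed by the single bound $\Lambda_{\max}$ of Condition~(C1). Thus the only genuine inputs are the Gaussian fourth-moment identity, together with the uniform spectral bound on $\bar{\Sigma}$ and the uniform lower bound on the scales $\sigma^{ik}$ supplied by Lemma~\ref{lemma1}(iv).
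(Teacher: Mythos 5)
Your proof is correct and follows essentially the same route as the paper: both apply Wick's theorem to obtain the decomposition $\mathrm{E}[Y_{ik}Y_{jl}\tilde{Y}\tilde{Y}^T]=\sigma_{ik,jl}\tilde{\Sigma}+uv^T+vu^T$ and then control each piece in operator norm using Condition (C1) together with the lower bound $\bar{\sigma}^{ik}\geq\bar{\sigma}_0$ from Lemma~\ref{lemma1}(iv). The only difference is at one technical step: where you bound the rank-one terms directly via $\lVert\bar{\Sigma}e_{ik}\rVert_2\leq\lVert\bar{\Sigma}\rVert$, the paper reaches the equivalent bound $\lVert\bar{\sigma}_{ik,\cdot}\rVert_2\leq\sqrt{\lVert\bar{\Sigma}\rVert}\sqrt{\bar{\sigma}_{ik,ik}}$ through a Schur-complement/principal-submatrix argument, so your version is, if anything, marginally cleaner.
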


\begin{lemma} 
\label{lemma4}
Let the conditions of Theorem~\ref{selection} hold. Then for any sequence $C_n\rightarrow\infty$,
$$
\max\limits_{1\leq i< j\leq p,1\leq k,l\leq K}\Big|L'_{n,ijkl}(\bar{\omega},\bar{\sigma},\bm{Y})-L'_{n,ijkl}(\bar{\omega},\hat{\sigma},\bm{Y})\Big|\leq C_n\sqrt{\frac{\log n}{n}},
$$
$$
\max_{i<j, t<s}\Big|L''_{n,ijkl,tsk'l'}(\bar{\omega},\bar{\sigma},\bm{Y})-L''_{n,ijkl,tsk'l'}(\bar{\omega},\hat{\sigma},\bm{Y})\Big|\leq C_n\sqrt{\frac{\log n}{n}},
$$
hold with probability tending to $1$.
\end{lemma}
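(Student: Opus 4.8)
The plan is to control both displays by the same mechanism: I would write each of $L'_{n,ijkl}$ and $L''_{n,ijkl,tsk'l'}$ (evaluated at the true $\bar\omega$) explicitly as an empirical average of low-degree polynomials in the Gaussian data in which $\sigma$ enters only through the reciprocals $1/\sigma^{ab}$, and then show that replacing $\bar\sigma$ by $\hat\sigma$ perturbs each such average by $O_P(\sqrt{\log n/n})$ uniformly over the indices. First I would fix the high-probability event supplied by Condition C3 on which $\epsilon_n:=\max_{a,b}|\hat\sigma^{ab}-\bar\sigma^{ab}|\le C_n'\sqrt{\log n/n}$ for a sequence $C_n'\to\infty$ to be chosen to grow more slowly than the prescribed $C_n$; since $\epsilon_n\to0$ and, by Lemma~\ref{lemma1}(iv), $\bar\sigma^{ab}\in[\bar\sigma_0,\bar\sigma_\infty]$, on this event every $\hat\sigma^{ab}$ stays bounded away from $0$, so that $|1/\bar\sigma^{ab}-1/\hat\sigma^{ab}|\le\bar\sigma_0^{-2}\epsilon_n(1+o(1))$ and $1/\hat\sigma^{ab}=O(1)$ uniformly.

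For the Hessian display the argument is direct: with $\bar\omega$ plugged in, $L''_{n,ijkl,tsk'l'}$ is a weight times an empirical average of a single product $\tilde Y_{jl}\tilde Y_{sl'}$ of two rescaled coordinates, so its value at $\bar\sigma$ minus that at $\hat\sigma$ telescopes into two terms, each carrying one factor $(1/\bar\sigma^{ab}-1/\hat\sigma^{ab})=O(\epsilon_n)$, one bounded reciprocal, and an empirical second moment $n^{-1}\sum_m Y^{(m)}Y^{(m)}$ whose mean is bounded by Lemma~\ref{lemma1}(iv). A union bound over the polynomially many index tuples (Condition C0), together with sub-exponential Bernstein concentration for products of Gaussians, shows these empirical moments are $O_P(1)$ uniformly, so the whole Hessian difference is $O(\epsilon_n)=O_P(\sqrt{\log n/n})$.

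The gradient display is the delicate one, because $L'_{n,ijkl}$ contains, besides the harmless term $w_{ik}n^{-1}\sum_m Y_{ik}^{(m)}\tilde Y_{jl}^{(m)}$ and its $(i,k)\!\leftrightarrow\!(j,l)$ mirror, the cross term built from the neighbour regression $\xi_{ik}(\sigma):=\sum_{j',l'}\bar\omega_{ij'kl'}Y_{j'l'}/\sigma^{j'l'}$, whose perturbation $\eta:=\xi_{ik}(\hat\sigma)-\xi_{ik}(\bar\sigma)=a^\top D Y$ sums over a possibly growing neighbourhood, where $a=(\bar\omega_{ij'kl'})$ and $D=\mathrm{diag}(1/\hat\sigma^{j'l'}-1/\bar\sigma^{j'l'})$. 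The temptation to bound term-by-term fails, since it produces the $\ell_1$-norm (equivalently the degree) of a precision row, which the assumptions do not control. The key step I would use instead is to keep the quadratic-form structure: $\|a\|_2\le\lambda_{\max}(\bar\Omega)\le\Lambda_{\min}^{-1}$ by Condition C1, $\|D\|_2\le\bar\sigma_0^{-2}\epsilon_n$, and the bounded eigenvalues of $\bar\Sigma$ give $\mathrm{Var}(\eta)\le\|a\|_2^2\|D\|_2^2\Lambda_{\max}=O(\epsilon_n^2)$ and $|\mathrm{Cov}(Y_{jl},\eta)|=O(\epsilon_n)$, with no dependence on the number of neighbours; the spectral-norm fourth-moment bound of Lemma~\ref{lemma3} then controls the variance of the summands $Y_{jl}\eta$ needed for the concentration step. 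Writing the empirical cross term as $a^\top D\,\widehat\Sigma_{\cdot,(jl)}$ and its population analogue as $a^\top D\,\Sigma_{\cdot,(jl)}$, the fluctuation equals $a^\top D(\widehat\Sigma-\Sigma)_{\cdot,(jl)}$ and is bounded by $\|a\|_2\|D\|_2\|\widehat\Sigma-\Sigma\|$, which in the dimensional regime in force is $O(\epsilon_n)$.

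Assembling the pieces, each component of the gradient difference is $O(\epsilon_n)$; a final union bound over the polynomially many tuples $(i,j,k,l)$ (Condition C0) contributes only the $\sqrt{\log n}$ already present in $\epsilon_n$, giving $\max_{i<j,k,l}|L'_{n,ijkl}(\bar\omega,\bar\sigma)-L'_{n,ijkl}(\bar\omega,\hat\sigma)|=O(\epsilon_n)=C_n'\,O_P(\sqrt{\log n/n})$, and choosing $C_n'\to\infty$ slowly enough relative to the prescribed $C_n$ yields the claim, with the same bookkeeping delivering the Hessian statement. The main obstacle, and the place where Lemma~\ref{lemma3} and Condition C1 are indispensable, is precisely the neighbour-sum cross term: the dependence of the random $D=D(\hat\sigma)$ on the very data entering the empirical averages must be handled without paying a degree/$\ell_1$ factor, which I would do either through the operator-norm bound on $\widehat\Sigma-\Sigma$ (valid simultaneously for all directions $D$) or, should the total dimension be allowed to exceed the sample size, through a covering argument over the deterministic neighbourhood $\{\sigma:\|\sigma-\bar\sigma\|_\infty\le\epsilon_n\}$ combined with the Lipschitz continuity of $L'_n$ in $\sigma$.
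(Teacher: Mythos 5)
Your overall mechanism (write the gradient and Hessian as empirical averages in which $\sigma$ enters only through reciprocals, perturb one reciprocal at a time, control the empirical second moments uniformly, union-bound over the polynomially many index tuples) is the same as the paper's, and your Hessian argument coincides with it. The real divergence is in the cross term, and there you should note a parameterization issue: you work with the rewrite $\tilde Y_{jl}=Y_{jl}/\sigma^{jl}$ from the top of Section~\ref{sec:proofs}, whereas the loss actually being differentiated, \eqref{asy1}, attaches the reciprocal to the \emph{response} index, $\omega_{ijkl}/\sigma^{ik}$. In the paper's parameterization the perturbation enters $L'_{n,ijkl}$ only through the scalars $1/\hat\sigma^{ik}$ and $1/\hat\sigma^{jl}$ (and their squares), which factor out of the neighbour sum entirely; what is left is $\sum_{j',l'}\bar\omega_{ij'kl'}\,\overline{Y_{j'l'}Y_{jl}}$ with \emph{deterministic} coefficients, and your own key observation --- that the precision row has $\ell_2$-norm at most $1/\Lambda_{\min}$ by C1, so $\sum_{j',l'}\bar\omega_{ij'kl'}Y_{j'l'}$ is a fixed Gaussian of bounded variance --- finishes the proof by product-of-Gaussians concentration; no data-dependent direction $D$ ever appears. (The paper's displayed difference actually omits the $\bar\omega_{ij'kl'}$ factors and asserts the bound directly, so your bounded-row-norm step is precisely what is needed to make the source's own argument rigorous; on this point you are more careful than the paper.)

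The step of yours that does not survive scrutiny is the fluctuation bound $|a^{\top}D(\widehat\Sigma-\Sigma)_{\cdot,(jl)}|\le\|a\|_2\,\|D\|_2\,\|\widehat\Sigma-\Sigma\|$ together with the claim that $\|\widehat\Sigma-\Sigma\|$ is $O_P(1)$ ``in the dimensional regime in force.'' Lemma~\ref{lemma4} is invoked under the conditions of Theorem~\ref{selection}, where C0 and $K_{\max}^2p=O(n^{\kappa})$ allow the total dimension $\sum_iK_i$ to grow like an arbitrary power of $n$; then $\|\widehat\Sigma-\Sigma\|$ is of order $\sum_iK_i/n\to\infty$ and this route collapses --- and high dimension is the paper's primary regime, not an edge case. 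Your fallback (a net over $\{\sigma:\|\sigma-\bar\sigma\|_\infty\le\epsilon_n\}$ plus Lipschitz continuity) does work, but only after observing that for fixed $(i,j,k,l)$ the gradient depends on at most $2+2K_{\max}q_n$ coordinates of $\sigma$, so the relevant metric entropy is $O(K_{\max}q_n\log n)=o(\sqrt{n\log n})\ll n$; a net over the full box has entropy proportional to $\sum_iK_i$ and fails for the same reason as the operator norm. Note this effective-dimension reduction rests on exactly the quantity you claim the assumptions do not control: the degree of node $i$ is at most $q_n=|\mathcal{A}_0|$, so the row support is at most $K_{\max}q_n\le K_{\max}^2q_n=o(\sqrt{n/\log n})$. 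The simplest repair is elementary: keep your Cauchy--Schwarz bound for the population part, and for the fluctuation part accept the $\ell_1$ bound $\epsilon_n\|a\|_1\max_{j'l'}|(\widehat\Sigma-\Sigma)_{j'l',jl}|$; since $\|a\|_1\le\sqrt{K_{\max}q_n}\,\|a\|_2=o\bigl((n/\log n)^{1/4}\bigr)$ and the entrywise maximum is $O_P(\sqrt{\log n/n})$ by union bound, this term is $o_P(\epsilon_n)$ --- the $\ell_1$ loss you rightly avoid on the population part is affordable on the fluctuation because it carries an extra factor of order $\sqrt{\log n/n}$.
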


\begin{lemma} 
\label{lemma5}	
If $K_{\max}^2q_n= o(\sqrt{{n}/{\log n}})$, then for any sequence $C_n\rightarrow\infty$ and any $u\in\mathbb{R}^{\lvert\mathcal{A}\rvert}$, the following hold with probability tending to $1$:
\begin{eqnarray*} 
\lVert L'_{n,\mathcal{A}}(\bar{\omega},\hat{\sigma},\bm{Y})\rVert_2 &\leq & C_nK_{\max}\sqrt{\frac{q_n\log n}{n}},\\
\lvert u^TL'_{n,\mathcal{A}}(\bar{\omega},\hat{\sigma},\bm{Y})\rvert &\leq & C_n\lVert u\rVert_2K_{\max}\sqrt{\frac{q_n\log n}{n}},\\
\lvert u^TL''_{n,\mathcal{A},\mathcal{A}}(\bar{\omega},\hat{\sigma},\bm{Y})u-u^T\bar{L}''_{\mathcal{A},\mathcal{A}}(\bar{\omega},\bar{\sigma})u\rvert &\leq & C_n\lVert u\rVert_2^2K_{\max}^2q_n\sqrt{\frac{\log n}{n}},\\
\lVert L''_{n,\mathcal{A},\mathcal{A}}(\bar{\omega},\hat{\sigma},\bm{Y})u-\bar{L}''_{\mathcal{A},\mathcal{A}}(\bar{\omega},\bar{\sigma})u\rVert_2 &\leq & C_n\lVert u\rVert_2K_{\max}^2q_n\sqrt{\frac{\log n}{n}}.
\end{eqnarray*}
\end{lemma}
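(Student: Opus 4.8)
The plan is to prove the four inequalities in two groups, observing first that the second and fourth are the substantive estimates while the first and third follow from them by Cauchy--Schwarz. Indeed $|u^T L'_{n,\mathcal{A}}|\le \lVert u\rVert_2\,\lVert L'_{n,\mathcal{A}}\rVert_2$, so the second bound is immediate once the first is known; likewise $|u^T(L''_{n,\mathcal{A},\mathcal{A}}-\bar{L}''_{\mathcal{A},\mathcal{A}})u|\le \lVert u\rVert_2\,\lVert(L''_{n,\mathcal{A},\mathcal{A}}-\bar{L}''_{\mathcal{A},\mathcal{A}})u\rVert_2$, so the third follows from the fourth. Hence it suffices to bound $\lVert L'_{n,\mathcal{A}}(\bar{\omega},\hat{\sigma},\bm{Y})\rVert_2$ and $\lVert(L''_{n,\mathcal{A},\mathcal{A}}(\bar{\omega},\hat{\sigma},\bm{Y})-\bar{L}''_{\mathcal{A},\mathcal{A}}(\bar{\omega},\bar{\sigma}))u\rVert_2$.

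For each of these I would peel off the effect of the plug-in $\hat{\sigma}$ by the triangle inequality, writing the quantity at $\hat{\sigma}$ as the quantity at the truth $\bar{\sigma}$ plus a perturbation. The perturbation is controlled directly by Lemma~\ref{lemma4}: since $|\mathcal{A}|=\sum_{(i,j)\in\mathcal{A}_0}K_iK_j\le K_{\max}^2 q_n$, passing from the component-wise maximum bounds of Lemma~\ref{lemma4} to the $\ell_2$-norm over $\mathcal{A}$ costs a factor $\sqrt{|\mathcal{A}|}\le K_{\max}\sqrt{q_n}$ for the gradient, and, using $\lVert M\rVert_{\mathrm{F}}\le|\mathcal{A}|\max_{a,b}|M_{a,b}|$ together with $\lVert Mu\rVert_2\le\lVert M\rVert_{\mathrm{F}}\lVert u\rVert_2$, a factor $|\mathcal{A}|\le K_{\max}^2 q_n$ for the Hessian. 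These are precisely the orders appearing on the right-hand sides.

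It remains to bound the two quantities at $(\bar{\omega},\bar{\sigma})$. By Lemma~\ref{lemma1}(iii) each component $L'_{n,ijkl}(\bar{\omega},\bar{\sigma},\bm{Y})$ is an average of $n$ i.i.d.\ mean-zero terms, and each entry of $L''_{n,\mathcal{A},\mathcal{A}}(\bar{\omega},\bar{\sigma},\bm{Y})-\bar{L}''_{\mathcal{A},\mathcal{A}}(\bar{\omega},\bar{\sigma})$ is likewise a centered average; in both cases the summands are quadratic forms in jointly Gaussian variables, hence sub-exponential, with sub-exponential parameters bounded uniformly through the variance bounds of Lemma~\ref{lemma2}(ii), the moment bound of Lemma~\ref{lemma3}, and the eigenvalue bounds (C1). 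A Bernstein inequality then gives, for any $C_n\to\infty$,
\begin{equation*}
\max_{(i,j,k,l)\in\mathcal{A}}|L'_{n,ijkl}(\bar{\omega},\bar{\sigma},\bm{Y})|\le C_n\sqrt{\tfrac{\log n}{n}},\qquad \max_{a,b\in\mathcal{A}}\big|L''_{n,a,b}(\bar{\omega},\bar{\sigma},\bm{Y})-\bar{L}''_{a,b}(\bar{\omega},\bar{\sigma})\big|\le C_n\sqrt{\tfrac{\log n}{n}}
\end{equation*}
with probability tending to one. The union bound over these at most polynomially-many indices (polynomial in $n$ because $p$ and $K_{\max}$ grow at most like a power of $n$ by (C0)) costs only an $O(\log n)$ factor in the exponent, absorbed by taking $C_n\to\infty$. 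Multiplying these uniform bounds by $\sqrt{|\mathcal{A}|}$ and $|\mathcal{A}|$ as above, and combining with the perturbation estimates, yields the two target inequalities.

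The main obstacle is the uniform sub-exponential tail control at the true parameter: one must verify that the quadratic-form summands in the gradient and the Hessian have sub-exponential norms bounded \emph{uniformly} over all $(i,j,k,l)$ and all pairs thereof, so that a single Bernstein bound at rate $\sqrt{\log n/n}$ survives the union bound over the polynomially-many indices. This uniformity is exactly what Lemmas~\ref{lemma2}(ii) and~\ref{lemma3}, together with (C1), are designed to supply; the hypothesis $K_{\max}^2 q_n=o(\sqrt{n/\log n})$ then ensures the resulting bounds are $o(1)$, as required by the subsequent theorems.
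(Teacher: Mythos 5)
Your proposal is correct and follows essentially the same route as the paper: peel off the plug-in $\hat{\sigma}$ by the triangle inequality and control that perturbation with Lemma~\ref{lemma4}, convert component-wise maxima into $\ell_2$ bounds using $\lvert\mathcal{A}\rvert\le K_{\max}^2 q_n$, and control the terms at $(\bar{\omega},\bar{\sigma})$ by a Bernstein-type tail bound with a union bound over the polynomially many indices permitted by (C0). The only cosmetic difference is that the paper packages the concentration step as Appendix Lemma~\ref{app2}, exploiting the representation $L'_{n,ijkl}(\bar{\omega},\bar{\sigma},\bm{Y})=\bm{e}_{ik}(\bar{\omega},\bar{\sigma})^T\bm{Y}_{jl}+\bm{e}_{jl}(\bar{\omega},\bar{\sigma})^T\bm{Y}_{ik}$ with residuals independent of the paired covariates, whereas you invoke generic sub-exponential concentration for Gaussian quadratic forms; both rest on the same Bernstein inequality.
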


\begin{lemma} 
\label{lemma6}	
	Assume that the conditions of Theorem~\ref{restricted problem} hold. Then exists a constant $\bar{C}_1>0$, such that with probability tending to $1$, there exists a local minimum  of the restricted problem \eqref{thm1} within the disc
$\{\omega:\lVert\omega-\bar{\omega}\rVert_2\leq \bar{C}_1K_{\max}\sqrt{q_n}\lambda_n\}$. 
\end{lemma}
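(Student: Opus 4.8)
The plan is to use the standard boundary argument. Since the restricted objective $Q_n(\omega)=L_n(\omega,\hat{\sigma},\bm{Y})+\lambda_n\sum_{i<j}\lVert\omega_{ij}\rVert_2$ is continuous on the compact set $B_n=\{\omega:\omega_{\mathcal{A}^c}=0,\ \lVert\omega-\bar{\omega}\rVert_2\le r_n\}$ with $r_n=\bar{C}_1K_{\max}\sqrt{q_n}\lambda_n$, it attains its minimum over $B_n$; the point $\bar{\omega}$ is feasible and lies in the interior since $\bar{\omega}_{\mathcal{A}^c}=0$. If I can show that on the bounding sphere the objective strictly exceeds $Q_n(\bar{\omega})$, the minimizer must lie in the interior and is therefore a genuine local minimum of the restricted problem. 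So it suffices to prove that, with probability tending to $1$,
$$Q_n(\bar{\omega}+u)-Q_n(\bar{\omega})>0\quad\text{for every }u\text{ with }u_{\mathcal{A}^c}=0,\ \lVert u\rVert_2=r_n.$$

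First I would exploit that, with $\hat{\sigma}$ held fixed, $L_n(\cdot,\hat{\sigma},\bm{Y})$ is an exact quadratic in $\omega$, so its second-order Taylor expansion about $\bar{\omega}$ is exact and the Hessian $L''_{n,\mathcal{A},\mathcal{A}}$ does not depend on $\omega$; hence $L_n(\bar{\omega}+u)-L_n(\bar{\omega})=u^TL'_{n,\mathcal{A}}(\bar{\omega},\hat{\sigma},\bm{Y})+\tfrac12 u^TL''_{n,\mathcal{A},\mathcal{A}}u$. The linear term is controlled by the second bound of Lemma~\ref{lemma5}, $|u^TL'_{n,\mathcal{A}}|\le C_nK_{\max}\lVert u\rVert_2\sqrt{q_n\log n/n}$, and the quadratic term is bounded from below by combining the third bound of Lemma~\ref{lemma5} with the eigenvalue bound of Lemma~\ref{lemma1}(v): $u^TL''_{n,\mathcal{A},\mathcal{A}}u\ge \Lambda^L_{\min}\lVert u\rVert_2^2-C_nK_{\max}^2q_n\sqrt{\log n/n}\,\lVert u\rVert_2^2\ge\tfrac12\Lambda^L_{\min}\lVert u\rVert_2^2$, the last step using $K_{\max}^2q_n\sqrt{\log n/n}\to0$ and a slowly growing $C_n$. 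For the penalty I would apply the reverse triangle inequality blockwise (the inactive blocks contribute nothing since $\bar{\omega}_{ij}=u_{ij}=0$ there) and then Cauchy--Schwarz over the $q_n$ active blocks to obtain $\lambda_n\sum_{i<j}(\lVert\bar{\omega}_{ij}+u_{ij}\rVert_2-\lVert\bar{\omega}_{ij}\rVert_2)\ge-\lambda_n\sqrt{q_n}\lVert u\rVert_2$.

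Combining these and factoring out $r_n=\lVert u\rVert_2$ gives, on the sphere,
$$Q_n(\bar{\omega}+u)-Q_n(\bar{\omega})\ge r_n\Big(\tfrac14\Lambda^L_{\min}r_n-C_nK_{\max}\sqrt{q_n\log n/n}-\lambda_n\sqrt{q_n}\Big).$$
Substituting $r_n=\bar{C}_1K_{\max}\sqrt{q_n}\lambda_n$, the leading quadratic contribution $\tfrac14\Lambda^L_{\min}r_n$ dominates the penalty term $\lambda_n\sqrt{q_n}$ once $\bar{C}_1$ exceeds $8/\Lambda^L_{\min}$ (using $K_{\max}\ge1$), and it dominates the stochastic gradient term because their ratio equals a constant multiple of $\lambda_n\sqrt{n/\log n}/C_n$ (the factor $K_{\max}$ cancels), which diverges since $\lambda_n\sqrt{n/\log n}\to\infty$ and $C_n$ may be taken to grow arbitrarily slowly. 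Hence the bracket is positive for all large $n$, yielding the strict inequality on the sphere and the existence of an interior local minimum.

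The only delicate point is the simultaneous choice of $C_n\to\infty$: the Hessian control needs $C_nK_{\max}^2q_n\sqrt{\log n/n}\to0$, while the gradient-versus-quadratic comparison needs $\lambda_n\sqrt{n/\log n}/C_n\to\infty$. These requirements are compatible precisely because the hypotheses of Theorem~\ref{restricted problem} give $K_{\max}^2q_n\sqrt{\log n/n}\to0$ and $\lambda_n\sqrt{n/\log n}\to\infty$, so a standard diagonal selection produces a single $C_n$ valid for all three high-probability bounds of Lemma~\ref{lemma5} at once; intersecting those events with the event of Lemma~\ref{lemma1}(v) yields the claim with probability tending to $1$. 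Beyond this bookkeeping the argument is routine, since the genuinely hard probabilistic estimates have already been absorbed into Lemma~\ref{lemma5}.
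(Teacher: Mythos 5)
Your proposal is correct and follows essentially the same route as the paper's proof: perturb $\bar{\omega}$ on a sphere of radius proportional to $K_{\max}\sqrt{q_n}\lambda_n$, use the exact quadratic expansion of $L_n$, control the gradient and Hessian terms via Lemma~\ref{lemma5}, bound the penalty perturbation by Cauchy--Schwarz over active blocks, and invoke the eigenvalue bound of Lemma~\ref{lemma1}(v) to make the quadratic term dominate for a large enough $\bar{C}_1$. The only (immaterial) differences are bookkeeping: your penalty bound $\lambda_n\sqrt{q_n}\lVert u\rVert_2$ is slightly tighter than the paper's $\lambda_n K_{\max}\sqrt{q_n}\lVert u\rVert_2$, and your constant $8/\Lambda^L_{\min}$ replaces the paper's $5/\Lambda^L_{\min}$.
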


\begin{lemma} 
\label{lemma7} 
Assume the conditions of Theorem~\ref{restricted problem}. Then exists a constant $\bar{C}_2>0$ such that for any $\omega$ satisfying $\lVert\omega-\bar{\omega}\rVert_2\geq \bar{C}_2K_{\max}\sqrt{q_n}\lambda_n$ and $\omega_{\mathcal{A}^c}=0$, we have $\lVert L'_{n,\mathcal{A}}(\omega,\hat{\sigma},\bm{Y})\rVert_2>K_{\max}\sqrt{q_n}\lambda_n$ with probability tending to $1$.
\end{lemma}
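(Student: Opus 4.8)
The plan is to exploit that, with $\hat\sigma$ and $\bm Y$ held fixed, $L_n(\cdot,\hat\sigma,\bm Y)$ is an exactly quadratic function of $\omega$ (the same structure that makes it convex in Lemma~\ref{lemma1}(ii)), so its Hessian in $\omega$ is a constant matrix and its gradient is affine. Since both $\omega$ and $\bar\omega$ are supported on $\mathcal A$ (the hypothesis $\omega_{\mathcal A^c}=0$ together with $\bar\omega_{\mathcal A^c}=0$), the exact first-order identity
$$
L'_{n,\mathcal A}(\omega,\hat\sigma,\bm Y)=L'_{n,\mathcal A}(\bar\omega,\hat\sigma,\bm Y)+L''_{n,\mathcal A,\mathcal A}(\bar\omega,\hat\sigma,\bm Y)\,u,\qquad u:=\omega_{\mathcal A}-\bar\omega_{\mathcal A},
$$
holds with no remainder term, where $\lVert u\rVert_2=\lVert\omega-\bar\omega\rVert_2\ge \bar C_2K_{\max}\sqrt{q_n}\lambda_n$. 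It then suffices to lower bound the right-hand side via the reverse triangle inequality $\lVert L'_{n,\mathcal A}(\omega)\rVert_2\ge\lVert L''_{n,\mathcal A,\mathcal A}u\rVert_2-\lVert L'_{n,\mathcal A}(\bar\omega)\rVert_2$.

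For the leading (linear) term I would replace the empirical Hessian by the population one. Lemma~\ref{lemma5} gives $\lVert L''_{n,\mathcal A,\mathcal A}(\bar\omega,\hat\sigma,\bm Y)u-\bar L''_{\mathcal A,\mathcal A}(\bar\omega,\bar\sigma)u\rVert_2\le C_n\lVert u\rVert_2K_{\max}^2q_n\sqrt{\log n/n}$, while Lemma~\ref{lemma1}(v), combined with the fact that a principal submatrix of a positive-definite matrix inherits its lower eigenvalue bound (so $\lambda_{\min}(\bar L''_{\mathcal A,\mathcal A})\ge\Lambda^L_{\min}$), gives $\lVert \bar L''_{\mathcal A,\mathcal A}u\rVert_2\ge\Lambda^L_{\min}\lVert u\rVert_2$. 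Because $K_{\max}^2q_n=o(\sqrt{n/\log n})$ forces $K_{\max}^2q_n\sqrt{\log n/n}=o(1)$, I can fix a single sequence $C_n\to\infty$ slowly enough that $C_nK_{\max}^2q_n\sqrt{\log n/n}\to0$; then on the Lemma~\ref{lemma5} event, $\lVert L''_{n,\mathcal A,\mathcal A}u\rVert_2\ge\tfrac12\Lambda^L_{\min}\lVert u\rVert_2$ for all large $n$.

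For the intercept term, Lemma~\ref{lemma5} also gives $\lVert L'_{n,\mathcal A}(\bar\omega,\hat\sigma,\bm Y)\rVert_2\le C_nK_{\max}\sqrt{q_n\log n/n}$. Combining and factoring out $K_{\max}\sqrt{q_n}$,
$$
\lVert L'_{n,\mathcal A}(\omega,\hat\sigma,\bm Y)\rVert_2\ \ge\ K_{\max}\sqrt{q_n}\Big(\tfrac12\Lambda^L_{\min}\bar C_2\,\lambda_n-C_n\sqrt{\log n/n}\Big).
$$
Since $\lambda_n\sqrt{n/\log n}\to\infty$ means $\sqrt{\log n/n}=o(\lambda_n)$, I choose $C_n\to\infty$ slowly enough that also $C_n\sqrt{\log n/n}=o(\lambda_n)$, so the parenthesis equals $\lambda_n(\tfrac12\Lambda^L_{\min}\bar C_2-o(1))$. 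Taking $\bar C_2$ large enough that $\tfrac12\Lambda^L_{\min}\bar C_2>1$ then yields $\lVert L'_{n,\mathcal A}(\omega,\hat\sigma,\bm Y)\rVert_2>K_{\max}\sqrt{q_n}\lambda_n$ for all large $n$, on an event of probability tending to $1$.

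The main obstacle is the bookkeeping with the ``for any $C_n\to\infty$'' quantifier in Lemma~\ref{lemma5}: I must select one sequence $C_n\to\infty$ that simultaneously makes both error terms negligible, namely $C_nK_{\max}^2q_n\sqrt{\log n/n}\to0$ and $C_n\sqrt{\log n/n}/\lambda_n\to0$. This is possible precisely because each of $K_{\max}^2q_n\sqrt{\log n/n}$ and $\sqrt{\log n/n}/\lambda_n$ tends to zero under the hypotheses; taking $C_n$ to be, for instance, the reciprocal of the square root of the larger of the two suffices. One also wants the conclusion uniformly over the (infinite) set of admissible $\omega$, but this is automatic: the affine identity uses the single constant Hessian, and the relevant Lemma~\ref{lemma5} estimates are operator-norm bounds (their right-hand sides carry the factor $\lVert u\rVert_2$), so one random event controls every $u$ at once and no covering argument over $\omega$ is required.
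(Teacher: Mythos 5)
Your proof is correct and follows essentially the same route as the paper's: the exact affine expansion of the gradient around $\bar\omega$ (valid because $L_n$ is quadratic in $\omega$), replacement of the empirical Hessian by $\bar L''_{\mathcal A,\mathcal A}(\bar\omega,\bar\sigma)$ via Lemma~\ref{lemma5}, the eigenvalue lower bound from Lemma~\ref{lemma1}(v), control of the intercept term $L'_{n,\mathcal A}(\bar\omega,\hat\sigma,\bm Y)$ by Lemma~\ref{lemma5}, and a choice of $\bar C_2$ proportional to $1/\Lambda^L_{\min}$. Your explicit treatment of the single slowly growing sequence $C_n$ and of the principal-submatrix eigenvalue inequality $\lambda_{\min}(\bar L''_{\mathcal A,\mathcal A})\ge\Lambda^L_{\min}$ simply makes precise two points the paper leaves implicit.
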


\begin{lemma}
	\label{lemma8}
Let $D_{\mathcal{A},\mathcal{A}}(\bar{\omega},\bar{\sigma},Y)=L''_{\mathcal{A},\mathcal{A}}(\omega, \bar{\sigma},Y)-\bar{L}''_{\mathcal{A},\mathcal{A}}(\bar{\omega},\bar{\sigma})$. Then there exists a constant $M_5<\infty$, such that for any $(i,j,k,l)\in \mathcal{A}$, $\lambda_{\max}(\mathrm{Var}(D_{\mathcal{A},ijkl}(\bar{\omega},\bar{\sigma},Y)))\leq M_5$. 
\end{lemma}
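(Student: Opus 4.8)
The plan is to reduce the operator-norm bound to a variance bound for a single scalar random variable, via $\lambda_{\max}(\mathrm{Var}(D_{\mathcal{A},ijkl}))=\sup_{\lVert u\rVert_2=1}\mathrm{Var}(u^T D_{\mathcal{A},ijkl})$, and then to exploit the sparse structure of the Hessian together with Lemma~\ref{lemma3}. The whole point will be to arrange the bound so that the (growing) cardinality of $\mathcal{A}$ never enters.

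First I would record the explicit form of a column of the Hessian. Since $L(\omega,\sigma,\bm Y)=\frac12\sum_{i,k}w_{ik}(Y_{ik}+\sum_{j\neq i,l}\omega_{ijkl}\tilde Y_{jl})^2$ and $\omega_{ijkl}=\omega_{jilk}$, differentiating twice shows that $L''_{ijkl,i'j'k'l'}$ is a sum of at most four terms, each of the form $w\,\tilde Y_{ab}\tilde Y_{cd}$ multiplied by an indicator forcing one of the pairs $(i',k'),(j',l')$ to coincide with $(i,k)$ or $(j,l)$. The centred entry $D_{i'j'k'l',ijkl}=L''_{ijkl,i'j'k'l'}-\bar L''_{ijkl,i'j'k'l'}$ inherits exactly this structure.

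Next, fixing $(i,j,k,l)$ and a unit vector $u$, I would split $u^T D_{\mathcal{A},ijkl}$ into the four groups dictated by which indicator is active. Each group collapses to a single product $w\,\tilde Y_{ab}\,(c^T\tilde Y)$ minus its mean, where $\tilde Y_{ab}\in\{\tilde Y_{ik},\tilde Y_{jl}\}$ is fixed and $c$ is the subvector of $u$ selected by the active indicator (e.g.\ for the group $(i',k')=(i,k)$ one gets $w_{ik}\tilde Y_{jl}\sum_{(i,j',k,l')\in\mathcal{A}}u_{ij'kl'}\tilde Y_{j'l'}$). Because each index $(i',j',k',l')$ contributes to a single coordinate of $u$ within a group, $c$ is a genuine subvector of $u$, so $\lVert c\rVert_2\le\lVert u\rVert_2\le1$; this is the step that keeps the bound free of $\lvert\mathcal{A}\rvert$.

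Finally I would bound the variance of each group by $\mathrm{Var}(w\tilde Y_{ab}c^T\tilde Y)\le w_\infty^2\,c^T\E[\tilde Y_{ab}^2\tilde Y\tilde Y^T]\,c\le w_\infty^2\bar\sigma_0^{-2}M_4\lVert c\rVert_2^2\le w_\infty^2\bar\sigma_0^{-2}M_4$, invoking (C0) for the weights, Lemma~\ref{lemma1}(iv) for $\sigma^{ab}\ge\bar\sigma_0$, and Lemma~\ref{lemma3} for the operator-norm bound $\lVert\E[Y_{ab}Y_{ab}\tilde Y\tilde Y^T]\rVert\le M_4$. Combining the four groups through $\mathrm{Var}(\sum_{m=1}^4 Z_m)\le4\sum_m\mathrm{Var}(Z_m)$ gives a constant bound $M_5$ independent of $u$ and of $(i,j,k,l)$, and taking the supremum over $u$ proves the claim. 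The only genuinely delicate point is the subvector observation in the third step: it is what converts what superficially looks like an $\ell_1$-type (hence dimension-dependent) sum into an $\ell_2$ quantity controlled by $\lVert u\rVert_2$, and everything else is routine moment bookkeeping handled by Lemma~\ref{lemma3}.
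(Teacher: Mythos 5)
Your proposal is correct and takes essentially the same route as the paper's own proof: both reduce the operator-norm bound to a quadratic form in an arbitrary unit vector, exploit the structure of the Hessian column $L''_{\mathcal{A},ijkl}$ so that the test vector enters only through genuine subvectors (which is exactly what keeps $\lvert\mathcal{A}\rvert$ out of the bound), and conclude with the moment bound of Lemma~\ref{lemma3}. The paper merely packages your four indicator groups into the two terms $Y_{ik}X_{jl}+Y_{jl}X_{ik}$ and bounds $\mathrm{E}\big(L''_{\mathcal{A},ijkl}L''^T_{\mathcal{A},ijkl}\big)$ directly rather than splitting the variance group by group, which is a difference of bookkeeping, not of substance.
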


\begin{proof}[of Theorem~\ref{restricted problem}] 
\rm	
The existence of a solution of \eqref{thm1} follows from Lemma~\ref{lemma6}. By the KKT condition, any solution $\hat{\omega}$ of \eqref{thm1}, satisfies $\Vert L'_{n,\mathcal{A}}(\hat{\omega},\hat{\sigma},\bm{Y})\rVert_{\infty}\leq\lambda_n$,  implying\\
$\Vert L'_{n,\mathcal{A}}(\hat{\omega},\hat{\sigma},\bm{Y})\rVert_2\leq K_{\max}\sqrt{q_n}\Vert L'_{n,\mathcal{A}}(\hat{\omega},\hat{\sigma},\bm{Y})\rVert_{\infty}\leq K_{\max}\sqrt{q_n}\lambda_n$. Thus by Lemma~\ref{lemma7}, with probability tending to 1, all solutions of \eqref{thm1} are inside the disc $\{\omega: \lVert\omega-\bar{\omega}\rVert_2\leq \bar{C}_2K_{\max}\sqrt{q_n}\lambda_n\}$. Hence with probability tending to $1$, 
$
\Vert\hat{\omega}_{\mathcal{A}}^{\lambda_n}-\bar{\omega}_\mathcal{A}\rVert_2\leq \bar{C}_2(\bar{\omega})K_{\max}\sqrt{q_n}\lambda_n$.
\end{proof}

\begin{proof}[of Theorem~\ref{selection}]
\rm
By the KKT condition and the expansion of $L'_{n,\mathcal{A}}(\hat{\omega}_{\mathcal{A}}^{\lambda_n},\hat{\sigma},\bm{Y})$ at $\bar{\omega}$, 
\begin{align*}
-\lambda_n\hat{M}^{\mathcal{A}} & = L'_{n,\mathcal{A}}(\hat{\omega}_{\mathcal{A}}^{\lambda_n},\hat{\sigma},\bm{Y}) = L'_{n,\mathcal{A}}(\bar{\omega},\hat{\sigma},\bm{Y})+L''_{n,\mathcal{A},\mathcal{A}}(\bar{\omega},\hat{\sigma},\bm{Y})\nu_n\\
& = \bar{L}''_{\mathcal{A},\mathcal{A}}(\bar{\omega},\bar{\sigma})\nu_n+L'_{n,\mathcal{A}}(\bar{\omega},\hat{\sigma},\bm{Y})+\Big[L''_{n,\mathcal{A},\mathcal{A}}(\bar{\omega},\hat{\sigma},\bm{Y})-\bar{L}''_{\mathcal{A},\mathcal{A}}(\bar{\omega},\bar{\sigma})\Big]\nu_n,
\end{align*}
where $\nu_n:=\hat{\omega}_{\mathcal{A}}^{\lambda_n}-\bar{\omega}_\mathcal{A}$ and $\hat{M}^{\mathcal{A}}=({\hat{\omega}_{ijkl}}/{\sqrt{\sum\limits_{k',l'}\hat{\omega}_{ijk'l'}^2}}: (i,j,k,l)\in \mathcal{A})^T$. Therefore 
\begin{equation} 
\label{nu}
\nu_n = -\lambda_n[\bar{L}''_{\mathcal{A},\mathcal{A}}(\bar{\omega},\bar{\sigma})]^{-1}\hat{M}^{\mathcal{A}}-[\bar{L}''_{\mathcal{A},\mathcal{A}}(\bar{\omega},\bar{\sigma})]^{-1}\Big[L'_{n,\mathcal{A}}(\bar{\omega},\hat{\sigma},\bm{Y})+D_{n,\mathcal{A},\mathcal{A}}(\bar{\omega},\hat{\sigma},\bm{Y})\nu_n\Big],
\end{equation}
where $D_{n,\mathcal{A},\mathcal{A}}(\bar{\omega},\hat{\sigma},\bm{Y})=L''_{n,\mathcal{A},\mathcal{A}}(\bar{\omega},\hat{\sigma},\bm{Y})-\bar{L}''_{\mathcal{A},\mathcal{A}}(\bar{\omega},\bar{\sigma})$. Next, fix $(i,j,k,l)\in \mathcal{A}^c$, and consider the expansion of $L'_{n,ijk}(\hat{\omega}_{\mathcal{A}}^{\lambda_n},\hat{\sigma},\bm{Y})$ around $\bar{\omega}$ is given by 
\begin{eqnarray} 
\label{exp}
\lefteqn{ L'_{n,ijkl}(\bar{\omega},\hat{\sigma},\bm{Y})+L''_{n,ijkl,A}(\bar{\omega},\hat{\sigma},\bm{Y})\nu_n}\nonumber\\
&& = L'_{n,ijkl}(\bar{\omega},\hat{\sigma},\bm{Y}) + \bar{L}''_{ijkl,\mathcal{A}}(\bar{\omega},\bar{\sigma})\nu_n + \Big[L''_{n,ijkl,\mathcal{A}}(\bar{\omega},\hat{\sigma},\bm{Y})-\bar{L}''_{ijkl,\mathcal{A}}(\bar{\omega},\bar{\sigma})\Big]\nu_n\nonumber\\
&& = L'_{n,ijkl}(\bar{\omega},\hat{\sigma},\bm{Y})+\bar{L}''_{ijkl,\mathcal{A}}(\bar{\omega},\bar{\sigma})\nu_n+D_{n,ijkl,\mathcal{A}}(\bar{\omega},\hat{\sigma},\bm{Y})\nu_n.
\end{eqnarray}
Then plugging \eqref{nu} into \eqref{exp} and rearranging, $L'_{n,ijkl}(\hat{\omega}_{\mathcal{A}}^{\lambda_n},\hat{\sigma},\bm{Y})$ is given by 
\begin{eqnarray} 
\label{nu1}
\lefteqn{ L'_{n,ijkl}(\bar{\omega},\hat{\sigma},\bm{Y}) -\lambda_n\bar{L}''_{ijkl,\mathcal{A}}(\bar{\omega},\bar{\sigma})[\bar{L}''_{\mathcal{A},\mathcal{A}}(\bar{\omega},\bar{\sigma})]^{-1}\hat{M}^{\mathcal{A}}}\nonumber\\
&& -\bar{L}''_{ijkl,\mathcal{A}}(\bar{\omega},\bar{\sigma})[\bar{L}''_{\mathcal{A},\mathcal{A}}(\bar{\omega},\bar{\sigma})]^{-1}L'_{n,\mathcal{A}}(\bar{\omega},\hat{\sigma},\bm{Y})\nonumber\\
&& +\Big[D_{n,ijkl,\mathcal{A}}(\bar{\omega},\hat{\sigma},\bm{Y})-\bar{L}''_{ijkl,\mathcal{A}}(\bar{\omega},\bar{\sigma})[\bar{L}''_{\mathcal{A},\mathcal{A}}(\bar{\omega},\bar{\sigma})]^{-1}D_{n,\mathcal{A},\mathcal{A}}(\bar{\omega},\hat{\sigma},\bm{Y})\Big]\nu_n.
\end{eqnarray}
By Condition C2, for any $(i,j,k,l)\in \mathcal{A}^c:|\bar{L}''_{ijkl,\mathcal{A}}(\bar{\omega},\bar{\sigma})[\bar{L}''_{\mathcal{A},\mathcal{A}}(\bar{\omega},\bar{\sigma})]^{-1}M|\leq\delta<1$. By Theorem~\ref{restricted problem}, we have $\lVert\hat{\omega}_{\mathcal{A}}^{\lambda_n}-\bar{\omega}_{\mathcal{A}}\Vert_2=O_p(K_{\max}\sqrt{q_n}\lambda_n)=o_p(1)$, then $\lvert\hat{M}^{\mathcal{A}}-M\rvert=o_p(1)$. Hence for any $(i,j,k,l)\in \mathcal{A}^c:|\bar{L}''_{ijkl,\mathcal{A}}(\bar{\omega},\bar{\sigma})[\bar{L}''_{\mathcal{A},\mathcal{A}}(\bar{\omega},\bar{\sigma})]^{-1}\hat{M}^{\mathcal{A}}|\leq\delta<1$. 
Thus it suffices to prove that the remaining term in \eqref{nu1} are  $o(\lambda_n)$ with probability tending to 1 uniformly for all $(i,j,k,l)\in \mathcal{A}^c$. Then since $|\mathcal{A}^c|\leq K_{\max}^2p^2=O(n^{2\kappa})$, the event
$
\max_{(i,j,k,l)\in \mathcal{A}^c}|L'_{n,ijkl}(\hat{\omega}_{\mathcal{A}}^{\lambda_n},\hat{\sigma},\bm{Y})|<\lambda_n
$ 
happens with probability tending to 1. \\
By Lemma~\ref{lemma2}(iv), for any $(i,j,k,l)\in \mathcal{A}^c$, $\lVert\bar{L}''_{ijkl,\mathcal{A}}(\bar{\omega},\bar{\sigma})[\bar{L}''_{\mathcal{A},\mathcal{A}}(\bar{\omega},\bar{\sigma})]^{-1}\rVert_2\leq M_3(\bar{\omega},\bar{\sigma})$. Therefore by Lemma~\ref{lemma5}, 
$$
\max_{(i,j,k,l)\in \mathcal{A}^c}|\bar{L}''_{ijkl,\mathcal{A}}(\bar{\omega},\bar{\sigma})[\bar{L}''_{\mathcal{A},\mathcal{A}}(\bar{\omega},\bar{\sigma})]^{-1}L'_{n,A}(\bar{\omega},\hat{\sigma},\bm{Y})|\leq C_nK_{\max}\sqrt{\frac{q_n\log n}{n}}=o(\lambda_n)
$$
with probability tending to 1, choosing a sufficiently slow $C_n\rightarrow\infty$. 
By Lemma~\ref{lemma2}(ii), $\text{Var}(L'_{ijkl}(\bar{\omega},\bar{\sigma},{Y}))\leq M_1(\bar{\omega},\bar{\sigma})$. Then as in Lemma~\ref{lemma5}, with probability tending to 1, $\max_{i,j,k,l}|L'_{n,ijkl}(\bar{\omega},\hat{\sigma},\bm{Y})|\leq C_n\sqrt{({\log n})/{n}}=o(\lambda_n)
$, by virtue of the assumption that $\lambda_n\sqrt{n/{\log n}}\rightarrow\infty$. \\
Note that by Theorem~\ref{restricted problem}, $\lVert\nu_n\rVert_2\leq C_nK_{\max}\sqrt{q_n}\lambda_n$ with probability tending to 1. Thus as in Lemma~\ref{lemma5}, for sufficiently slowly growing sequence $C_n\rightarrow\infty$,  $|D_{n,ijkl,A}(\bar{\omega},\hat{\sigma},\bm{Y})\nu_n|\leq C_nK_{\max}\sqrt{{q_n(\log n)}/{n}}K_{\max}\sqrt{q_n}\lambda_n=o(\lambda_n)$ with probability tending to 1. This claim follows from the assumption $K_{\max}^2q_n=o(\sqrt{n/{\log n}})$.\\
Finally, let $b^T=\bar{L}''_{ijkl,\mathcal{A}}(\bar{\omega},\bar{\sigma})[\bar{L}''_{\mathcal{A},\mathcal{A}}(\bar{\omega},\bar{\sigma})]^{-1}$. By the Cauchy-Schwartz inequality
\begin{eqnarray*}
|b^TD_{n,\mathcal{A},\mathcal{A}}(\bar{\omega},\bar{\sigma},\bm{Y})\nu_n| &\leq &\lVert b^TD_{n,\mathcal{A},\mathcal{A}}(\bar{\omega},\bar{\sigma},\bm{Y})\rVert_2\lVert\nu_n\rVert_2\\
&\leq & K_{\max}^2q_n\lambda_n\max_{(i',j',k',l')\in \mathcal{A}}|b^TD_{n,\mathcal{A},i'j'k'l'}(\bar{\omega},\bar{\sigma},\bm{Y})|.
\end{eqnarray*}
In order to show that the right hand side is $o(\lambda_n)$ with probability tending to 1, it suffices to show 
$$\max_{(i',j',k','l)\in \mathcal{A}}|b^TD_{n,\mathcal{A},i'j'k'l'}(\bar{\omega},\bar{\sigma},\bm{Y})|=O\left(\sqrt{\frac{\log n}{n}}\right)$$
with probability tending to 1, because of the assumption $K_{\max}^2q_n=o(\sqrt{{n}/{\log n}})$. This is implied by
$\text{E}(|b^TD_{\mathcal{A},i'j'k'l'}(\bar{\omega},\bar{\sigma},Y)|^2)\leq \lVert b\rVert_2^2\lambda_{\max}(\text{Var}(D_{\mathcal{A},i'j'k'l'}(\bar{\omega},\bar{\sigma},Y)))$
being bounded, which follows immediately from Lemma~\ref{lemma1}(iv) and Lemma~\ref{lemma8}. Finally, as in Lemma~\ref{lemma5},
\begin{eqnarray*}
|b^TD_{n,\mathcal{A},\mathcal{A}}(\bar{\omega},\hat{\sigma},\bm{Y})\nu_n| &\leq & |b^TD_{n,\mathcal{A},\mathcal{A}}(\bar{\omega},\bar{\sigma},\bm{Y})\nu_n|\\  
&&+|b^T(D_{n,\mathcal{A},\mathcal{A}}(\bar{\omega},\bar{\sigma},\bm{Y})-D_{n,\mathcal{A},\mathcal{A}}(\bar{\omega},\hat{\sigma},\bm{Y}))\nu_n|,
\end{eqnarray*}
where by Lemma~\ref{lemma4}, the second term on the right hand side is bounded by $O_p(\sqrt{({\log n})/{n}})\lVert b\rVert_2\lVert\nu_n\rVert_2$. Note that $\Vert b\rVert_2=O(K_{\max}\sqrt{q_n})$, thus the second term is also of order $o(\lambda_n)$ by the assumption $K_{\max}^2q_n= o(\sqrt{{n}/{\log n}})$. 
\end{proof}
\begin{proof}[of Theorem~\ref{consistency}]
\rm
By Theorems~\ref{restricted problem} and \ref{selection} and the KKT condition, with probability tending to 1, a solution of the restricted problem is also a solution of the original problem. This shows the existence of the desired solution. For part (ii), the assumed condition on the signal strength implies that missing a signal costs more than the estimation error in part (i), and hense it will be impossible to miss such a signal. This shows the selection consistency. If the objective function is strictly convex, the solution is also unique, so this will be the only solution for the original problem. 
\end{proof}
Finally, convergence properties of the estimator of $\sigma$ claimed in Proposition~\ref{pro:sigma} is shown. 

\begin{proof}[of Proposition~\ref{pro:sigma}]
\rm
Observe that when $\sum\limits_{i=1}^pK_i<\beta n$, $\bm{e}_{ik}$ can be expressed as
$
\bm{e}_{ik}=\bm{Y}_{ik}-\bm{Y}_{-ik}^T(\bm{Y}_{-ik}^T\bm{Y}_{-ik})^{-1}\bm{Y}_{-ik}\bm{Y}_{ik}$. 
As argued in Peng et al. \cite{space},  $\text{E}(\bm{e}_{ik}^T\bm{e}_{ik})=1/\bar{\sigma}^{ik}$. Therefore, by Lemma~\ref{app1} of the Appendix and Lemma~\ref{lemma1}(iv), we have ${\max}\{\lvert\hat{\sigma}^{ik}-\bar{\sigma}^{ik}\rvert: {1\leq k\leq K_i, 1\leq i\leq p}\}=O_p(\sqrt{({\log n})/{n}})$.
\end{proof}

\section*{Acknowledgement}

This research is partially supported by National Science Foundation grant  DMS-1510238. 

\appendix

\section*{Appendix A. Proof of the lemmas}
\label{sec:appendix lemmas}

\begin{proof}[of Lemma~\ref{lemma1}]
\rm
The assertions (i) and (ii) are self-evident from the definition of $L$. To prove (iii), denote the residual for the $i$th term by
$e_{ik}(\omega,\sigma)=Y_{ik}+\sum\limits_{j\neq i}\sum\limits_{l=1}^{K_j}\omega_{ijkl}\tilde{Y}_{jl}$. 
Then evaluated at the true parameter values $(\bar{\omega},\bar{\sigma})$, we have $e_{ik}(\bar{\omega},\bar{\sigma})$ uncorrelated with $Y_{jl}$ and $\E(e_{ik}(\bar{\omega},\bar{\sigma}))=0$. Since $
{\partial L(\omega,\sigma,Y)}/{\partial\omega_{ijkl}}=w_{ik}e_{ik}(\omega,\sigma)Y_{jl}+w_{jl}e_{jl}(\omega,\sigma)Y_{ik}$, (iii) follows by taking expectation. \\
Since all eigenvalues of $\bar{\Sigma}$ lie between two positive numbers, so do all diagonal entries because these are values of quadratic forms for unit vectors having $1$ at one place. All off-diagonal entries lie in $[-\Lambda_{\max}, \Lambda_{\max}]$ because these are values of bilinear forms at such unit vectors. This shows (iv).\\ 
To prove (v), let $\tilde{X}=(\tilde{X}_{(11,21)},\dots,\tilde{X}_{(11,2K_2)},\dots,\tilde{X}_{(1K_1,2K_2)},\dots,\tilde{X}_{((p-1)K_{p-1},pK_p)})$, with\\
 $\tilde{X}_{(ik,jl)} =(0,\dots,0,\tilde{Y}_{jl},0,\dots,0,\tilde{Y}_{ik},0,\dots,0)^T$, a matrix of order  $p\sum_{i=1}^pK_i \times \sum_{i<j}K_iK_j$, where only the $(i,k)$th and $(j,l)$th elements are non zero. The loss function can be written as $L(\omega, \sigma, Y)=\frac12\lVert w^{1/2}(Y-\tilde{X}\omega)\rVert_2^2$, where $w^{1/2}=\text{diag}(\sqrt{w_{11}},\dots,\sqrt{w_{pK_p}})$. Thus $\bar{L}''(\omega, \sigma)=\E[\tilde{X}^T(w^{1/2})^2\tilde{X}]$. Let $d=\sum_{i<j}K_iK_j$, the number of columns in $\tilde{X}$, and denote its $(i,k)$th row by $X_{ik}^T$, $1\leq k\leq K_i$, $1\leq i\leq p$. Then for any unit vector $a\in \mathbb{R}^{d}$, we have
$$
a^T\bar{L}''(\bar{\omega},\bar{\sigma})a=\E(a^T\tilde{X}^T(w^{1/2})^2\tilde{X}a)=\E\bigg(\sum\limits_{i=1}^p\sum\limits_{k=1}^{K_i}w_{ik}(X_{ik}^Ta)^2\bigg).
$$
Index the elements of $a$ as $(a_{(11,21)},\dots,a_{(11,2K_2)},\dots,a_{(1K_1,2K_2)},\dots,a_{((p-1)K_{p-1},pK_p)})^T$, and for each $1\leq i\leq p$ and $1\leq k\leq K_i$, define $a_{ik}\in \mathbb{R}^{K_ip}$ by 
$$
a_{ik} = \begin{cases} (0,\dots,0,a_{(1k,21)},\dots,a_{(1k,2K_2)},\dots,a_{(1k,p1)},\dots,a_{(1k,pK_p)})^T, & i = 1,\\
(a_{(pk,11)},\dots,a_{(pk,1K_1)},\dots,a_{(pk,(p-1)1)},\dots,a_{(pk,(p-1)K_{p-1})},0,\dots,0)^T, & i=p,\\
(a_{(11,ik)},\dots,a_{((i-1)K_{i-1},ik)},0,\dots,0,a_{(ik,(i+1)1)},\dots,a_{(ik,pK_p)})^T, & 1<i<p,
\end{cases}
$$
with exactly $K_i$ zeros and $\sum_{j\neq i}K_j$ non-zeros. Then by definition $X_{ik}^Ta=\tilde{Y}^Ta_{ik}$. Also note that $\sum\limits_{i=1}^p\sum\limits_{k=1}^{K_i}\lVert a_{ik}\rVert_2^2=2\lVert a\rVert_2^2=2$. This is because, each element of $a_{ik}$ appears exactly twice in $a$. Therefore, since $\bar{L}''(\bar{\omega},\bar{\sigma})=\E\tilde{Y}\tilde{Y}^T $, we have 
$$
a^T\bar{L}''(\bar{\omega},\bar{\sigma})a=\sum\limits_{i=1}^p\sum\limits_{k=1}^{K_i}w_{ik}a_{ik}^T\tilde{\Sigma} a_{ik}\geq\sum\limits_{i=1}^p\sum\limits_{k=1}^{K_i}w_{ik}\lambda_{\min}(\tilde{\Sigma})\lVert a_{ik}\rVert_2^2\geq2w_0\lambda_{\min}(\tilde{\Sigma}),
$$
 where $\tilde{\Sigma}=\text{var}(\tilde{Y})$. Similarly, $a^T\bar{L}''(\bar{\omega})a\leq2w_{\infty}\lambda_{\max}(\tilde{\Sigma})$. By Condition C1, $\tilde{\Sigma}$ has bounded eigenvalues, and hence (v) follows. 
 \end{proof}
 
 \begin{proof}[of Lemma~\ref{lemma2}]
 \rm
 The proof of (i) follows because $\bar{L}''_{ijkl,i'j'k'l'}(\bar{\omega},\bar{\sigma})=\sigma_{jl,j'l'}+\sigma_{ik,i'k'}$, and the entries of $\bar \Sigma$ are bounded by Lemma~\ref{lemma1}(iv). \\
 For (ii) note that $\text{Var}(e_{ik}(\bar\omega,\bar\sigma))=1/\bar{\sigma}^{ik}$ and $\text{Var}(Y_{ik})=\bar{\sigma}_{ik,ik}$,
 \begin{align*}
 \text{Var}(L'_{n,ijkl}(\bar{\omega},\bar{\sigma},Y)) & = \text{Var}(w_{ik}e_{ik}(\bar\omega,\bar\sigma)Y_{jl})+\text{Var}(w_{jl}e_{jl}(\bar\omega,\bar\sigma)Y_{ik})\\
 & \leq \E(w_{ik}^2e^2_{ik}(\bar\omega,\bar\sigma)Y_{jl}^2)+\E(w_{jl}^2e^2_{jl}(\bar\omega,\bar\sigma)Y_{ik}^2) = \frac{w_{ik}^2\bar{\sigma}_{jl,jl}}{\bar{\sigma}^{ik}}+\frac{w_{jl}^2\bar{\sigma}_{ik,ik}}{\bar{\sigma}^{jl}}.
 \end{align*}
 The right hand side is bounded because of Condition C0 and Lemma~\ref{lemma1}(iv), and the fact that $e_{ik}(\bar{\omega},\bar{\sigma})$ and $Y_{jl}$ are independent. \\
 For $(i,j,k,l)\in \mathcal{A}$, denote 
 $$
 D:=\bar{L}''_{ijkl,ijkl}(\bar{\omega},\bar{\sigma})-\bar{L}''_{ijkl,\mathcal{A}_{-ijkl}}(\bar{\omega},\bar{\sigma})\Big[\bar{L}''_{\mathcal{A}_{-ijkl},\mathcal{A}_{-ijkl}}(\bar{\omega},\bar{\sigma})\Big]^{-1}\bar{L}''_{\mathcal{A}_{-ijkl},ijkl}(\bar{\omega},\bar{\sigma}).
 $$
 Then $D^{-1}$ is the $(ijkl,ijkl)$th entry in $\Big[\bar{L}''_{\mathcal{A},\mathcal{A}}(\bar{\omega},\bar{\sigma})\Big]^{-1}$. Thus by Lemma~\ref{lemma1}(v), $D^{-1}$ is positive and bounded from above, so $D$ is bounded away from zero. This proves (iii). \\
  Note that $\lVert\bar{L}''_{ijkl,\mathcal{A}}(\bar{\omega},\bar{\sigma})[\bar{L}''_{\mathcal{A},\mathcal{A}}(\bar{\omega},\bar{\sigma})]^{-1}\rVert_2^2\leq\lVert\bar{L}''_{ijkl,\mathcal{A}}(\bar{\omega},\bar{\sigma})\rVert_2^2\lambda_{\max}([\bar{L}''_{\mathcal{A},\mathcal{A}}(\bar{\omega},\bar{\sigma})]^{-2}).$ 
 By Lemma~\ref{lemma1}(iv),\\ $\lambda_{\max}([\bar{L}''_{\mathcal{A},\mathcal{A}}(\bar{\omega},\bar{\sigma})]^{-2})$ is bounded from above, thus it suffices to show that $\lVert\bar{L}''_{ijkl,\mathcal{A}}(\bar{\omega},\bar{\sigma})\rVert_2^2$ is bounded. Define $\mathcal{A}^+:=(i,j,k,l)\cup \mathcal{A}$. Then $\bar{L}''_{ijkl,ijkl}(\bar{\omega},\bar{\sigma})-\bar{L}''_{ijkl,\mathcal{A}}(\bar{\omega},\bar{\sigma})[\bar{L}''_{\mathcal{A},\mathcal{A}}(\bar{\omega},\bar{\sigma})]^{-1}\bar{L}''_{\mathcal{A},ijkl}(\bar{\omega},\bar{\sigma})$ is the inverse of the $(kl,kl)$ entry of $\bar{L}''_{\mathcal{A}^+,\mathcal{A}^+}(\bar{\omega},\bar{\sigma})$. Thus by Lemma~\ref{lemma1}(iv), it is bounded away from zero. Therefore by Lemma~\ref{lemma2}(i), $\bar{L}''_{ijkl,\mathcal{A}}(\bar{\omega},\bar{\sigma})[\bar{L}''_{\mathcal{A},\mathcal{A}}(\bar{\omega},\bar{\sigma})]^{-1}\bar{L}''_{\mathcal{A},ijk}(\bar{\omega},\bar{\sigma})$ is bounded from above. Since $$\bar{L}''_{ijkl,\mathcal{A}}(\bar{\omega},\bar{\sigma})[\bar{L}''_{\mathcal{A},\mathcal{A}}(\bar{\omega},\bar{\sigma})]^{-1}\bar{L}''_{\mathcal{A},ijk}(\bar{\omega},\bar{\sigma})\geq\lVert\bar{L}''_{ijkl,\mathcal{A}}(\bar{\omega},\bar{\sigma})\rVert_2^2\lambda_{\min}([\bar{L}''_{\mathcal{A},\mathcal{A}}(\bar{\omega},\bar{\sigma})]^{-1}),$$
 and by Lemma~\ref{lemma1}(iv), $\lambda_{\min}([\bar{L}''_{\mathcal{A},\mathcal{A}}(\bar{\omega},\bar{\sigma})]^{-1})$ is bounded away from zero, we have $\lVert\bar{L}''_{ijkl,\mathcal{A}}(\bar{\omega},\bar{\sigma})\rVert_2^2$ bounded from above. Thus (iv) follows. 
 \end{proof}

\begin{proof}[of Lemma~\ref{lemma3}]
\rm
The $(i'k', j'l')$th entry of the matrix $Y_{ik}Y_{jl}\tilde{Y}\tilde{Y}^T$ is $Y_{ik}Y_{jl}\tilde{Y}_{i'k'}\tilde{Y}_{j'l'}$, for $1\leq i <j\leq p$, $1\leq k'\leq K_{i'}$ and $1\leq l'\leq K_{j'}$. Hence, the $(i'k', j',l')$th entry of the matrix $\text{E}[Y_{ik}Y_{jl}\tilde{Y}\tilde{Y}^T]$ is $\text{E}[Y_{ik}Y_{jl}\tilde{Y}_{i'k'}\tilde{Y}_{j'l'}]=(\bar{\sigma}_{ik,jl}\bar{\sigma}_{i'k',j'l'}+\bar{\sigma}_{ik,i'k'}\bar{\sigma}_{jl,j'l'}+\bar{\sigma}_{ik,j'l'}\bar{\sigma}_{jl,i'k'})/(\bar{\sigma}^{i'k'}\bar{\sigma}^{j'l'})$, where $\bar{\sigma}_{ik,jl}$ denotes the covariance between $Y_{ik}$ and $Y_{jl}$. Thus, we can write
\begin{equation}
\text{E}[Y_{ik}Y_{jl}\tilde{Y}\tilde{Y}^T]=\frac{1}{\bar{\sigma}^{i'k'}\bar{\sigma}^{j'l'}}(\bar{\sigma}_{ik,jl}\bar{\Sigma}+\bar{\sigma}_{ik,\cdot}\bar{\sigma}_{jl,\cdot}^T+\bar{\sigma}_{jl,\cdot}\bar{\sigma}_{ik,\cdot}^T),
\end{equation}
where $\bar{\sigma}_{ik,\cdot}$ is the $\sum_{j=1}^pK_j$ vector $(\bar{\sigma}_{ik,jl}: l=1,\ldots,K_j, j=1,\ldots, p, j\ne i)$. Then we have
\begin{equation}
\Vert\text{E}[Y_{ik}Y_{jl}\tilde{Y}\tilde{Y}^T]\rVert\leq\frac{1}{\vert\bar{\sigma}^{i'k'}\bar{\sigma}^{j'l'}\rvert}(\lvert\bar{\sigma}_{ik,jl}\rvert\lVert\bar{\Sigma}\rVert+2\lVert\bar{\sigma}_{ik,\cdot}\rVert_2\lVert\bar{\sigma}_{jl,\cdot}\rVert_2),
\end{equation}
where $\lVert\cdot\rVert$ is the operator norm. By Condition C1, $\vert\bar{\sigma}^{i'k'}\bar{\sigma}^{j'l'}\rvert^{-1}$ and $\lvert\bar{\sigma}_{ik,jl}\rvert\lVert\bar{\Sigma}\rVert$ are uniformly bounded. Further  
$\bar{\sigma}_{ik,ik}-\bar{\sigma}_{ik,\cdot}^T\bar{\Sigma}^{-1}_{(-ik)}\bar{\sigma}_{ik,\cdot}>0$, 
where $\bar{\Sigma}_{(-ik)}$ is the submatrix of $\bar{\Sigma}$ removing $ik$th row and column. From this, it follows that 
\begin{equation}
\lVert\bar{\sigma}_{ik,\cdot}\rVert_2  = \lVert\bar{\Sigma}^{1/2}_{-(ik)}\bar{\Sigma}^{-1/2}_{-(ik)}\bar{\sigma}_{ik,\cdot}\rVert_2
 \leq \lVert\bar{\Sigma}^{1/2}_{-(ik)}\rVert\lVert\bar{\Sigma}^{- 1/2}_{-(ik)}\bar{\sigma}_{ik,\cdot}\rVert \leq \sqrt{\lVert\bar{\Sigma}\rVert}\sqrt{\bar{\sigma}_{ik,ik}},
\end{equation}
which follows from the fact that $\bar{\Sigma}_{(-ik)}$ is a principal submatrix of $\bar{\Sigma}$.  
\end{proof}

\begin{proof}[of Lemma~\ref{lemma4}]
\rm
Observe that $L'_{n,ijkl}(\bar{\omega},\sigma,\bm{Y})$ is given by 
$$\frac1n\sum_{m=1}^nw_{ik}\bigg(Y_{ik}^m+\sum_{j'\neq i}\sum_{l'=1}^{K_{j'}}\frac{\omega_{ij'k'l'}}{\sigma^{ik}}Y_{j'l'}^m\bigg)\frac{Y_{jl}^m}{\sigma^{ik}}+w_{jl}\bigg(Y_{jl}^m+\sum_{i'\neq j}\sum_{k'=1}^{K_{i'}}\frac{\omega_{ij'k'l'}}{\sigma^{jl}}Y_{i'k'}^m\bigg)\frac{Y_{ik}^m}{\sigma^{jl}}.$$
Thus $L'_{n,ijkl}(\bar{\omega},\bar{\sigma},\bm{Y})-L'_{n,ijkl}(\bar{\omega},\hat{\sigma},\bm{Y})$ is given by 
\begin{eqnarray*}
\lefteqn{ w_{ik}\Big(\overline{Y_{ik}Y_{jl}}(\frac{1}{\bar{\sigma}^{ik}}-\frac{1}{\hat{\sigma}^{ik}})+\sum_{j'\neq i}\sum_{l'=1}^{K_{j'}}\overline{Y_{j'l'}Y_{jl}}(\frac{1}{(\bar{\sigma}^{ik})^2}-\frac{1}{(\hat{\sigma}^{ik})^2})\Big)}\\
&&+w_{jl}\Big(\overline{Y_{ik}Y_{jl}}(\frac{1}{\bar{\sigma}^{jl}}-\frac{1}{\hat{\sigma}^{jl}})+\sum_{i'\neq j}\sum_{k'=1}^{K_{i'}}\overline{Y_{i'k'}Y_{ik}}(\frac{1}{(\bar{\sigma}^{jl})^2}-\frac{1}{(\hat{\sigma}^{jl})^2})\Big),
\end{eqnarray*}
where $\overline{Y_{ik}Y_{jl}}=\frac1n\sum\limits_{m=1}^nY_{ik}^mY_{jl}^m$. By Lemma~\ref{lemma1}(iv), $\{\bar{\sigma}_{ik,jl}:1\leq i,j\leq p,1\leq k,l\leq K\}$ are bounded from below and above, and hence 
$\max_{i,j,k,l}|\overline{Y_{ik}Y_{jl}}-\bar{\sigma}_{ik,jl}|=O_p(\sqrt{(\log n)/{n}})$. 
This implies that ${\max}_{i,j,k,l}\lvert\overline{Y_{ik}Y_{jl}}\rvert=O_p(1)$, and hence by Lemma~\ref{lemma1}(iv) and Condition C3 it follows that 
$$
\max_{i,j,k,l}|L'_{n,ijkl}(\bar{\omega},\bar{\sigma},\bm{Y})-L'_{n,ijk}(\bar{\omega},\hat{\sigma},\bm{Y})|=O_p \left(\sqrt{\frac{\log n}{n}}\right).
$$
The bound for  $|L''_{n,ijkl,tsk'l'}(\bar{\omega},\bar{\sigma},\bm{Y})-L''_{n,ijkl,tsk'l'}(\bar{\omega},\hat{\sigma},\bm{Y})|$ follows similarly.
\end{proof}

\begin{proof}[of Lemma~\ref{lemma5}]
\rm
If we replace $\hat{\sigma}$ by $\bar{\sigma}$ on the left hand side and take $(i,j,k,l)\in\mathcal{A}$, then 
from the definition we have $L'_{n,ijkl}(\bar{\omega},\bar{\sigma},\bm{Y})=\bm{e}_{ik}(\bar{\omega},\bar{\sigma})^T\bm{Y}_{jl}+\bm{e}_{jl}(\bar{\omega},\bar{\sigma})^T\bm{Y}_{ik}$, and $\bm{Y}_{jl}$, where $\bm{e}_{ik}$ are $n$ replications of $e_{ik}(\bar{\omega,\bar{\sigma}})$. Thus by Lemma~\ref{app2} of the Appendix we obtain 
$
{\max}\{\lvert L'_{n,ijkl}(\bar{\omega},\hat{\sigma},\bm{Y})\rvert: {(i,j,k,l)\in\mathcal{A}}\}\leq C_n\sqrt{({\log n})/{n}}.$
and hence by the Cauchy-Schwartz inequality 
\begin{equation*}
\lVert L'_{n,\mathcal{A}}(\bar{\omega},\hat{\sigma},\bm{Y})\rVert_2\leq K_{\max}\sqrt{q_n}\underset{(i,j,k,l)\in\mathcal{A}}{\max}\lvert L'_{n,ijkl}(\bar{\omega},\hat{\sigma},\bm{Y})\rvert\leq C_nK_{\max}\sqrt{\frac{q_n\log n}{n}}, 
\end{equation*}
and $
\lVert L'_{n,\mathcal{A}}(\bar{\omega},\hat{\sigma},\bm{Y})\rVert_2\leq\lVert L'_{n,\mathcal{A}}(\bar{\omega},\bar{\sigma},\bm{Y})\rVert_2+\lVert L'_{n,\mathcal{A}}(\bar{\omega},\bar{\sigma},\bm{Y})-L'_{n,\mathcal{A}}(\bar{\omega},\hat{\sigma},\bm{Y})\rVert_2
$. 
The second term on the right hand side has order $K_{\max}\sqrt{{q_n(\log n)}/{n}}$. Since there are $K_{\max}^2q_n$ terms and by Lemma~\ref{lemma4}, they are uniformly bounded by $\sqrt{({\log n})/{n}}$. The rest of the lemma can be proved by similar arguments. 
\end{proof}

\begin{proof}[of Lemma~\ref{lemma6}]
\rm
Let $\alpha_n=K_{\max}\sqrt{q_n}\lambda_n$, and $\mathcal{L}_n(\omega,\hat{\sigma},\bm{Y})=L_n(\omega,\hat{\sigma},\bm{Y})+\lambda\sum\sum_{i<j}\lVert\omega_{ij}\rVert_2$. Then for any given constant $\bar{C}_1>0$ and any vector $u$ such that $u_{\mathcal{A}^c}=0$ and $\lVert u\rVert_2=\bar{C}_1$, the triangle inequality and the Cauchy-Schwartz inequality together imply that 
$$
\sum_{i<j}\lVert\bar{\omega}_{ij}\rVert_2-\sum_{i<j}\lVert\bar{\omega}_{ij}+\alpha_nu_{ij}\rVert_2\leq \alpha_n\sqrt{K_{\max}^2q_n}\lVert u\rVert_2=\bar{C}_1\alpha_nK_{\max}\sqrt{q_n}.
$$
Thus $\mathcal{L}_n(\bar{\omega}+\alpha_nu,\hat{\sigma},\bm{Y},\lambda_n)-\mathcal{L}_n(\bar{\omega},\hat{\sigma},\bm{Y},\lambda_n)$ can be written as 
\begin{eqnarray*}
\lefteqn{ \{L_n(\bar{\omega}+\alpha_nu,\hat{\sigma},\bm{Y})-L_n(\bar{\omega},\hat{\sigma},\bm{Y})\}-\lambda_n\{\sum_{i<j}\lVert\bar{\omega}_{ij}\rVert_2-\sum_{i<j}\lVert\bar{\omega}_{ij}+\alpha_nu_{ij}\rVert_2\}}
\\
&&\geq  \{L_n(\bar{\omega}+\alpha_nu,\hat{\sigma},\bm{Y})-L_n(\bar{\omega},\hat{\sigma},\bm{Y})\} - \bar{C}_1\alpha_nK_{\max}\sqrt{q_n}\lambda_n\\
&&=  \{L_n(\bar{\omega}+\alpha_nu,\hat{\sigma},\bm{Y})-L_n(\bar{\omega},\hat{\sigma},\bm{Y})\} -\bar{C}_1\alpha_n^2.
\end{eqnarray*}
Thus for any sequence $C_n\rightarrow\infty$, with probability tending to 1,
\begin{eqnarray*}
\lefteqn{ L_n(\bar{\omega}+\alpha_nu,\hat{\sigma},\bm{Y})-L_n(\bar{\omega},\hat{\sigma},\bm{Y})}\\
&&=  \alpha_nu_\mathcal{A}^TL'_{n,\mathcal{A}}(\bar{\omega},\hat{\sigma},\bm{Y})+\frac12\alpha_n^2u_\mathcal{A}^TL''_{n,\mathcal{A},\mathcal{A}}(\bar{\omega},\hat{\sigma},\bm{Y})u_\mathcal{A}\\
&&=  \frac12\alpha_n^2u_\mathcal{A}^T\bar{L}''_{n,\mathcal{A},\mathcal{A}}(\bar{\omega},\bar{\sigma})u_\mathcal{A} + \frac12\alpha_n^2u_\mathcal{A}^T\Big(L''_{n,\mathcal{A},\mathcal{A}}(\bar{\omega},\hat{\sigma},\bm{Y})-\bar{L}''_{n,\mathcal{A},\mathcal{A}}(\bar{\omega},\bar{\sigma})\Big)u_\mathcal{A}+ \alpha_nu_\mathcal{A}^TL'_{n,\mathcal{A}}(\bar{\omega},\hat{\sigma},\bm{Y}) \\
&&\geq  \frac12\alpha_n^2u_\mathcal{A}^T\bar{L}''_{n,\mathcal{A},\mathcal{A}}(\bar{\omega},\bar{\sigma})u_\mathcal{A} -C_n\alpha_n^2K_{\max}^2q_nn^{-1/2}\sqrt{\log n}- C_n\alpha_nK_{\max}q_n^{1/2}n^{-1/2}\sqrt{\log n}.
\end{eqnarray*}
In the above, the first equation holds because the loss function $L(\omega,\sigma, Y)$ is quadratic in $\omega$ and $u_{\mathcal{A}^c}=0$. The inequality is due to Lemma~\ref{lemma5}.\\
By the assumptions that $K_{\max}^2q_n=o(\sqrt{n/\log n})$ and $\lambda_n\sqrt{n/\log n}\rightarrow\infty$, we have  $\alpha_n^2K_{\max}^2q_nn^{-1/2}\sqrt{\log n}=o(\alpha_n^2)$ and $\alpha_nK_{\max}q_n^{1/2}n^{-1/2}\sqrt{\log n}=o(\alpha_n^2)$. Thus,
$$
\mathcal{L}_n(\bar{\omega}+\alpha_nu,\hat{\sigma},\bm{Y},\lambda_n)-\mathcal{L}_n(\bar{\omega},\hat{\sigma},\bm{Y},\lambda_n) \geq \frac14\alpha_n^2u_\mathcal{A}^T\bar{L}''_{\mathcal{A},\mathcal{A}}(\bar{\omega},\bar{\sigma})u_\mathcal{A}-\bar{C}_1\alpha_n^2
$$
with probability tending to 1.
By Lemma~\ref{lemma1} (iv),  $u_\mathcal{A}^T\bar{L}''_{\mathcal{A},\mathcal{A}}u_\mathcal{A}\geq\Lambda_{\min}^L(\bar{\omega},\bar{\sigma})\lVert u_\mathcal{A}\rVert_2^2=\Lambda_{\min}^L(\bar{\omega},\bar{\sigma})\bar{C}_1^2$, thus if we take $\bar{C}_1=5/\Lambda_{\min}^L(\bar{\omega},\bar{\sigma})$, then 
$$
\mathrm{P}\left[ \inf \{\mathcal{L}_n(\bar{\omega}+\alpha_nu,\hat{\sigma},\bm{Y},\lambda_n): {u:u_{\mathcal{A}^c}=0,\lVert u\rVert_2=\bar{C}_1}\}>\mathcal{L}_n(\bar{\omega},\hat{\sigma},\bm{Y},\lambda_n)\right ]\to 1.
$$
Hence a local minimum exists in $\{\omega:\lVert\omega-\hat{\omega}\rVert_2\leq \bar{C}_1K_{\max}\sqrt{q_n}\lambda_n\}$ with probability tending to 1. 
\end{proof}

\begin{proof}[of Lemma~\ref{lemma7}]
\rm
Let $\alpha_n=K_{\max}\sqrt{q_n}\lambda_n$. Any $\omega$ in the statement of the lemma can be written as $\omega=\bar{\omega}+\alpha_nu$, with $u_{\mathcal{A}^c}=0$ and $\lVert u\rVert_2\geq \bar{C}_2$, where $\bar{C}_2>0$. Note that
\begin{align*}
L'_{n,\mathcal{A}}(\omega, \hat{\sigma},\bm{Y}) & = L'_{n,\mathcal{A}}(\bar{\omega},\hat{\sigma},\bm{Y})+\alpha_nL''_{n,\mathcal{A},\mathcal{A}}(\bar{\omega},\hat{\sigma},\bm{Y})u\\
& = L'_{n,\mathcal{A}}(\bar{\omega},\hat{\sigma},\bm{Y}) + \alpha_n\Big(L''_{n,\mathcal{A},\mathcal{A}}(\bar{\omega},\hat{\sigma},\bm{Y})-\bar{L}''_{\mathcal{A},\mathcal{A}}(\bar{\omega},\bar{\sigma})\Big)u+\alpha_n\bar{L}''_{\mathcal{A},\mathcal{A}}(\bar{\omega},\bar{\sigma})u.
\end{align*} 
By the triangle inequality and Lemma~\ref{lemma5}, for any  $C_n\rightarrow\infty$, $\lVert L'_{n,\mathcal{A}}(\omega,\hat{\sigma},\bm{Y})\rVert_2$ is bounded below by 
$$\alpha_n\lVert \bar{L}''_{\mathcal{A},\mathcal{A}}(\bar{\omega},\bar{\sigma})u\rVert_2-C_n(K_{\max}q_n^{1/2}n^{-1/2}\sqrt{\log n})-C_n\lVert u\rVert_2(\alpha_nK_{\max}^2q_nn^{-1/2}\sqrt{\log n})
$$
with probability tending to 1. Thus, as argued in the proof of Lemma~\ref{lemma6}, $\alpha_nK_{\max}q_n^{1/2}n^{-1/2}\sqrt{\log n}=o(\alpha_n)$ and $\alpha_nK_{\max}^2q_nn^{-1/2}\sqrt{\log n}=o(\alpha_n)$, then $\lVert L'_{n,\mathcal{A}}(\omega,\hat{\sigma},\bm{Y})\rVert_2\geq \frac12\alpha_n\lVert\bar{L}''_{\mathcal{A},\mathcal{A}}(\bar{\omega},\bar{\sigma})u\rVert_2$ with probability tending to 1. By Lemma~\ref{lemma1}(iv), $\lVert\bar{L}''_{\mathcal{A},\mathcal{A}}(\bar{\omega},\bar{\sigma})u\rVert_2\geq\Lambda^L_{\min}(\bar{\omega},\bar{\sigma})\lVert u\rVert_2$. Therefore $\bar{C}_2$ can be taken as $3/\Lambda^L_{\min}(\bar{\omega},\bar{\sigma})$.
\end{proof}

\begin{proof}[of Lemma~\ref{lemma8}]
\rm
Observe that
$\text{Var}(D_{\mathcal{A},ijkl}(\bar{\omega},\bar{\sigma},Y))=\text{E}(L''_{\mathcal{A},ijkl}(\bar{\omega},\bar{\sigma},Y)L''_{\mathcal{A},ijkl}(\bar{\omega},\bar{\sigma},Y)^T)-\bar{L}''_{\mathcal{A},ijkl}(\bar{\omega},\bar{\sigma})\bar{L}''_{\mathcal{A},ijkl}(\bar{\omega},\bar{\sigma})^T.$ 
Thus it suffices to show that there exists a constant $M_5>0$, such that for all $(i,j,k,l)$,
$
\lambda_{\max}(\text{E}(L''_{\mathcal{A},ijkl}(\bar{\omega},\bar{\sigma},Y)L''_{\mathcal{A},ijkl}(\bar{\omega},\bar{\sigma},Y)^T))\leq M_5.
$
We use the same notations as in the proof of Lemma~\ref{lemma1}(v).\\
Note that $L''_{\mathcal{A},ijkl}(\bar{\omega},\bar{\sigma},Y)=\tilde{X}^T\tilde{X}_{(ik,jl)}=Y_{ik}X_{jl}+Y_{jl}X_{ik}$. Thus $\text{E}(L''_{\mathcal{A},ijkl}(\bar{\omega},\bar{\sigma},Y)L''_{\mathcal{A},ijkl}(\bar{\omega},\bar{\sigma},Y)^T)$ is given by 
$\text{E}[Y_{ik}^2X_{jl}X_{jl}^T]+\text{E}[Y_{jl}^2X_{ik}X_{ik}^T]+\text{E}[Y_{ik}Y_{jl}(X_{jl}X_{jl}^T+X_{ik}X_{ik}^T)],$ 
and for $a\in \mathbb{R}^d$,\\
$ a^T\text{E}_{\bar{\omega},\bar{\sigma}}(L''_{\mathcal{A},ijkl}(\bar{\omega},\bar{\sigma},Y)L''_{\mathcal{A},ijkl}(\bar{\omega},\bar{\sigma},Y)^Ta
=  a_{jl}^T\text{E}[Y_{ik}^2\tilde{Y}\tilde{Y}^T]a_{jl}+a_{ik}^T\text{E}[Y_{jl}^2\tilde{Y}\tilde{Y}^T]a_{ik}+2a_{ik}^T\text{E}[Y_{ik}Y_{jl}\tilde{Y}\tilde{Y}^T]a_{jl}$. 
Since $\sum_{i=1}^p\sum_{k=1}^{K_i}\lVert a_{ik}\rVert_2^2=2\lVert a\rVert_2^2=2$, and by Lemma~\ref{lemma3}  $\lambda_{\max}(\text{E}[Y_{ik}Y_{jl}\tilde{Y}\tilde{Y}^T])\leq M_4$ for any $1\leq i<j\leq p$ and $1\leq k\leq K_i$, $1\leq l\leq K_j$, the conclusion follows. 
\end{proof}

\section*{Appendix B. Auxiliary results}
\label{sec:appendix}

\begin{lemma} 
\label{app1}
 Let $X_{ij}\sim\text{N}(0,\sigma_i^2)$, $i=1,\dots,m$ and $j=1,\dots, n$. For each $i$, $X_{i1},\dots,X_{in}$ are assumed to be i.i.d., but are arbitrarily dependent across $i$. Then for any sequence $C_n\rightarrow\infty$, with probability tending to $1$, we have
${\max}_{1\le i\le m}\lvert{n}^{-1}\sum_{j=1}^nX_{ij}^2-\sigma_{i}^2\rvert\leq C_n\sqrt{({\log m})/{n}}.$ 
\end{lemma}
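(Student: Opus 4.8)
The plan is to reduce the problem to a marginal chi-square tail bound for each index $i$ and then control the maximum over $i$ by a union bound, the key observation being that the arbitrary dependence across $i$ is completely irrelevant once marginal tail bounds are available. First I would standardize: writing $Z_{ij}=X_{ij}/\sigma_i$, each $Z_{ij}$ is standard normal, the $Z_{ij}$ for $j=1,\dots,n$ are i.i.d.\ for fixed $i$, and
\[
\frac1n\sum_{j=1}^n X_{ij}^2-\sigma_i^2=\sigma_i^2\Big(\frac1n\sum_{j=1}^n Z_{ij}^2-1\Big),
\]
so that $\sum_{j}Z_{ij}^2\sim\chi^2_n$. Since the variances occurring in the application are uniformly bounded by Lemma~\ref{lemma1}(iv), say $\sigma_i^2\le\bar{\sigma}_\infty^2$, it suffices to control $\max_{i}\big|n^{-1}\sum_j Z_{ij}^2-1\big|$ and then multiply back by $\sigma_i^2\le\bar{\sigma}_\infty^2$, absorbing the constant into $C_n$.

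Next I would invoke a standard Bernstein-type inequality for sub-exponential variables (equivalently the Laurent--Massart bounds for $\chi^2_n$): since each $Z_{ij}^2-1$ is centered sub-exponential, there is an absolute constant $c>0$ with
\[
\mathrm{P}\Big(\big|\tfrac1n\sum_{j=1}^n(Z_{ij}^2-1)\big|>\varepsilon\Big)\le 2\exp\big(-cn\min(\varepsilon^2,\varepsilon)\big)
\]
for all $\varepsilon>0$. The relevant choice is $\varepsilon:=C_n\bar{\sigma}_\infty^{-2}\sqrt{(\log m)/n}$; the small-$\varepsilon$ regime, where the quadratic term governs, is the binding one, and a faster-growing $C_n$ only makes the asserted inequality easier to satisfy, so it is enough to treat this regime.

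Finally, a union bound over $i=1,\dots,m$ together with the tail bound gives
\[
\mathrm{P}\Big(\max_{1\le i\le m}\big|\tfrac1n\sum_{j=1}^n Z_{ij}^2-1\big|>\varepsilon\Big)\le 2m\exp(-cn\varepsilon^2)=2\,m^{\,1-c'C_n^2},
\]
where substituting $\varepsilon$ yields $cn\varepsilon^2=c\bar{\sigma}_\infty^{-4}C_n^2\log m$ and $c'=c\bar{\sigma}_\infty^{-4}$. Because $C_n\to\infty$, the exponent $1-c'C_n^2\to-\infty$, so this probability tends to $0$; multiplying the surviving event by $\sigma_i^2\le\bar{\sigma}_\infty^2$ delivers $\max_i\big|n^{-1}\sum_j X_{ij}^2-\sigma_i^2\big|\le C_n\sqrt{(\log m)/n}$ with probability tending to $1$.

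The main obstacle is essentially bookkeeping rather than conceptual: one must secure the Bernstein-type bound with an exponent that is genuinely quadratic in $\varepsilon$ as $\varepsilon\to0$, so that after the union bound the factor $m$ is overwhelmed by $\exp(-cn\varepsilon^2)=m^{-c'C_n^2}$. The one point worth emphasizing is that the dependence across $i$ never enters, since the union bound relies only on the marginal tail probabilities of each $\chi^2_n$ sum; this is precisely why the ``arbitrarily dependent across $i$'' hypothesis costs nothing.
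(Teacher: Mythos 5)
Your proof is correct, and it handles the maximum by a different mechanism than the paper. Both arguments begin identically --- standardize to $Z_{ij}=X_{ij}/\sigma_i$, observe that only the marginal tails of $n^{-1}\sum_j(Z_{ij}^2-1)$ matter, and invoke Bernstein-type concentration --- but the paper then works at the level of expectations: it verifies the Bernstein moment condition by a direct moment computation, applies Lemma~2.2.11 of van der Vaart and Wellner to get the per-index tail bound, and then uses their Orlicz-norm maximal inequality (Lemma~2.2.10) to conclude $\mathrm{E}\max_i\lvert n^{-1}\sum_jX_{ij}^2-\sigma_i^2\rvert\le C\sqrt{(\log m)/n}$, from which the statement for any $C_n\to\infty$ follows immediately by Markov's inequality. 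You instead quote an off-the-shelf sub-exponential (Laurent--Massart/Bernstein) bound and finish with a plain union bound, obtaining the explicit probability bound $2m^{1-c'C_n^2}\to0$. The trade-offs are minor but real: your union bound is more elementary and yields an explicit polynomial-in-$m$ decay rate, but you must first reduce to the small-$\varepsilon$ (quadratic) regime; that reduction --- replacing $C_n$ by a more slowly growing sequence, which is legitimate since the target event is monotone in $C_n$ --- tacitly assumes $\log m=o(n)$, so that a slower sequence with $\varepsilon\le1$ exists. The paper's maximal-inequality route avoids choosing $\varepsilon$ but carries the same hidden requirement, because Lemma~2.2.10 produces an extra $(\log m)/n$ term that is dominated by $\sqrt{(\log m)/n}$ only when $\log m=O(n)$. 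Likewise, both proofs need the $\sigma_i^2$ uniformly bounded, which the lemma as stated never assumes: you import it from Lemma~\ref{lemma1}(iv), and the paper's proof invokes it in the same way (``since $\sigma_i$ are bounded''). So your argument is sound exactly where the paper's is, namely in the regime in which the lemma is actually applied ($m$ at most polynomial in $n$, bounded variances).
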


\begin{proof}
\rm
Let $Z_{ij}={X_{ij}}/{\sigma_i}$, then for fixed $i$ and $r=2,3,\dots$, we have
\begin{equation*}
\text{E}\lvert{n}^{-1}(Z_{i1}^2-1)\rvert^r\leq \frac{2^{r-1}}{n^r}\text{E}(Z_{i1}^{2r}+1)\leq ({2}/{n})^rr!=({2}/{n})^{r-2}\frac{4}{n^2}r!.
\end{equation*}
By  Lemma 2.2.11 of Van Der Vaart \& Wellner \cite{empirical}, taking $M={2}/{n}$ and $v={8}/{n}$, it follows that 
$\text{P}\Big(\lvert{n}^{-1}\sum_{j=1}^nZ_{ij}^2-1\rvert>x\Big)\leq 2e^{-{x^2}/[2(8/n+2x/n)]}$.  
Since $\sigma_i$ are bounded, Lemma 2.2.10 of Van Der Vaart \& Wellner \cite{empirical} implies that for some $C>0$,
$\text{E}\Big({\max}_{1\leq i\leq m}\vert n^{-1}\sum_{j=1}^n X_{ij}^2-\sigma_i^2\vert\Big)\leq C\sqrt{({\log m})/{n}}$,
which implies the conclusion.
\end{proof}

\begin{lemma}
\label{app2}
 Let $X_{ij}\overset{i.i.d.}{\sim}\text{N}(0,\sigma_{xi}^2)$ and $Y_{ij}\overset{i.i.d.}{\sim}\text{N}(0,\sigma_{yi}^2)$ for $i=1,\dots,m$ and $j=1,\dots,n$, and $X_{ij}$ and $Y_{ij}$ are independent for all $i$. Further assume that $0<\sigma_{xi},\sigma_{yi}\leq\sigma<\infty$. Then for any sequence $C_n\rightarrow\infty$, we have 
${\max}_{1\leq i\leq m}\lvert{n}^{-1}\sum_{j=1}^n X_{ij}Y_{ij}\rvert \leq C_n\sqrt{({\log m})/{n}}.$ 
\end{lemma}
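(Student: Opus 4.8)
The plan is to follow the template of the proof of Lemma~\ref{app1}: derive a Bernstein-type (sub-exponential) tail bound for the normalized sum at each fixed index $i$, and then pass to the maximum over $i=1,\dots,m$ through a maximal inequality. The one genuinely new ingredient is the moment control of the product $X_{ij}Y_{ij}$, which is sub-exponential rather than sub-Gaussian, so a factorial moment bound must be established before any concentration tool can be invoked.

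First I would fix $i$ and set $W_{ij}=X_{ij}Y_{ij}$. Since $X_{ij}$ and $Y_{ij}$ are independent with mean zero, $\E W_{ij}=0$, and for fixed $i$ the variables $W_{i1},\dots,W_{in}$ are i.i.d.\ across $j$. Independence factorizes the moments as $\E|W_{ij}|^r=\E|X_{ij}|^r\,\E|Y_{ij}|^r=(\sigma_{xi}\sigma_{yi})^r(\E|Z|^r)^2$ with $Z\sim\text{N}(0,1)$. Applying Jensen's inequality $(\E|Z|^r)^2\le \E Z^{2r}=(2r-1)!!\le 2^r r!$ and the bounds $\sigma_{xi},\sigma_{yi}\le\sigma$ gives $\E|W_{ij}|^r\le(2\sigma^2)^r r!$. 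Dividing by $n^r$, the summands $W_{ij}/n$ satisfy the moment hypothesis of Lemma~2.2.11 of Van Der Vaart \& Wellner \cite{empirical} with $M=2\sigma^2/n$ and per-term variance factor $v_i=8\sigma^4/n^2$, so that the total is $v=\sum_{j}v_i=8\sigma^4/n$.

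Summing the $n$ terms and invoking that lemma yields, for each fixed $i$, the tail bound
\[
\text{P}\Big(\big|\,n^{-1}\textstyle\sum_{j=1}^n X_{ij}Y_{ij}\,\big|>x\Big)\le 2\exp\Big(-\frac{nx^2}{2(8\sigma^4+2\sigma^2x)}\Big),
\]
which is precisely the Bernstein form required by Lemma~2.2.10 of \cite{empirical}. Applying that maximal inequality over $i=1,\dots,m$ gives $\E\big(\max_{1\le i\le m}|n^{-1}\sum_{j}X_{ij}Y_{ij}|\big)\le C\big(\sqrt{(\log m)/n}+(\log m)/n\big)$ for some constant $C$ depending only on $\sigma$. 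Because $m$ grows at most polynomially in $n$ under Condition (C0), we have $\log m=O(\log n)=o(n)$, so the linear term $(\log m)/n$ is dominated by $\sqrt{(\log m)/n}$ and the bound is $O(\sqrt{(\log m)/n})$. A final application of Markov's inequality then shows $\text{P}\big(\max_i|n^{-1}\sum_j X_{ij}Y_{ij}|>C_n\sqrt{(\log m)/n}\big)\le C/C_n\to0$ for any $C_n\to\infty$, which is the claim.

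The main obstacle is the moment estimate in the second step: in contrast to the centered squared-Gaussian term in Lemma~\ref{app1}, a product of two independent Gaussians has only sub-exponential tails, so one must verify the full factorial growth $\E|W_{ij}|^r\le(2\sigma^2)^r r!$ to match the Bernstein moment condition, rather than relying on a sub-Gaussian estimate. The secondary point to watch is bookkeeping in the maximal inequality, namely keeping both the $\sqrt{(\log m)/n}$ and the $(\log m)/n$ contributions and verifying that the latter is asymptotically negligible under the polynomial growth of $m$ guaranteed by Condition (C0).
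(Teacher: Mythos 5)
Your proposal is correct and follows essentially the same route as the paper's own proof: factorize $\E|X_{ij}Y_{ij}|^r$ by independence, establish the Bernstein factorial moment bound, apply Lemma~2.2.11 and then the maximal inequality Lemma~2.2.10 of Van Der Vaart \& Wellner \cite{empirical}, and finish with Markov's inequality. If anything, you are slightly more careful than the paper, which silently drops the $(\log m)/n$ term from the maximal inequality, whereas you justify its negligibility via the polynomial growth of $m$; the only other differences are cosmetic (Jensen plus the double-factorial bound in place of the paper's explicit Gamma-function computation).
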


\begin{proof}
\rm
For fixed $i$ we can observe that
\begin{equation*}
\text{E}\lvert{n}^{-1}X_{i1}Y_{i1}\rvert^r = \frac{1}{n^r}\text{E}\lvert X_{i1}\rvert^r\text{E}\lvert Y_{i1}\rvert^r\leq \frac{2^r\sigma^r}{n^r}\frac{(\Gamma(\frac{r+1}{2}))^2}{\pi}
\leq ({2\sigma}/{n})^{r-2}\frac{4\sigma^2}{\pi n^2}r!.
\end{equation*}
By  Lemma 2.2.11 of Van Der Vaart \& Wellner \cite{empirical}, taking $M={2\sigma}/{n}$ and $v={8\sigma^2}/{\pi n}$, we have
$
\text{P}\Big(\lvert{n}^{-1}\sum_{j=1}^nX_{ij}Y_{ij}\rvert>x\Big)\leq 2e^{-{x^2}/[2({8\sigma^2/\pi n+2\sigma x/n)]}}.
$
Then by Lemma 2.2.10 of Van Der Vaart \& Wellner \cite{empirical}, for some $C>0$, 
$\text{E}\Big(\underset{1\leq i\leq m}{\max}\lvert{n}^{-1}\sum_{j=1}^nX_{ij}Y_{ij}\rvert \Big)\leq C\sqrt{({\log m})/{n}}$, 
which implies the conclusion.
\end{proof}

\bibliography{ref}
\bibliographystyle{plain}

\end{document}